\def\eqref#1{equation~\ref{#1}}
\def\1{\bm{1}}
\DeclareMathAlphabet{\mathsfit}{\encodingdefault}{\sfdefault}{m}{sl}
\SetMathAlphabet{\mathsfit}{bold}{\encodingdefault}{\sfdefault}{bx}{n}
\newcommand{\E}{\mathbb{E}}
\newcommand{\R}{\mathbb{R}}
\newcommand{\KL}{D_{\mathrm{KL}}}
\newtheorem{theorem}{Theorem}
\newtheorem{lemma}{Lemma}
\newtheorem{corollary}{Corollary}
\newtheorem{proposition}{Proposition}
\newtheorem{definition}{Definition}
\newtheorem{remark}{Remark}
\newtheorem{assumption}{Assumption}
\newtheorem{proof}{Proof}
\newcommand{\Renyii}{\mathop{\mathrm{D}_{2}}\nolimits}
\newcommand{\Renyi}{\mathop{\mathrm{D}_{\alpha}}\nolimits}
\newcommand{\cP}{{\mathcal P}}
\newcommand{\cO}{{\mathcal O}}
\newcommand{\cN}{{\mathcal N}}
\newcommand{\cH}{{\mathcal H}}
\newcommand{\norm}[1]{\left\| #1 \right\|}
\newcommand{\normsq}[1]{\left\| #1 \right\|^2}
\newcommand{\inner}[2]{\left< #1 , #2 \right>}
\newcommand{\cone}{{\norm{g_n(x)}}}
\newcommand{\ctwo}{{\norm{\nabla g_n(x)}}}
\newcommand{\cthree}{{C_3}}
\newcommand{\cfive}{{C_{\rho_n}\left(\delta\right)}}
\newcommand{\myeqref}[1]{{\color{red}(\ref{#1})}}
\newcommand{\algname}[1]{{\sf \small \color{blue} #1}}
\newcommand{\squeeze}{\textstyle}
\title{Improved Stein Variational Gradient Descent with Importance Weights}
\author{Lukang Sun\\KAUST	 \thanks{King Abdullah University of Science and Technology, Thuwal, Saudi Arabia}\\ \texttt{lukang.sun@kaust.edu.sa} \And Peter Richt\'arik\\KAUST\\	\texttt{peter.richtarik@kaust.edu.sa}}
\begin{document}

	\maketitle
	
	\begin{abstract}
		Stein Variational Gradient Descent~(\algname{SVGD}) is a popular sampling algorithm used in various machine learning tasks. It is well known that \algname{SVGD} arises from a discretization of the kernelized gradient flow of the Kullback-Leibler divergence $\KL\left(\cdot\mid\pi\right)$, where $\pi$ is the target distribution. In this work, we propose to enhance \algname{SVGD} via the introduction of  {\em importance weights}, which leads to a new method for which we coin the name  \algname{$\beta$-SVGD}. In the continuous time and infinite particles regime, the time for this flow to converge to the equilibrium distribution $\pi$, quantified by the Stein Fisher information, depends on $\rho_0$ and $\pi$ very weakly. This is very different from the kernelized gradient flow of Kullback-Leibler divergence, whose time complexity depends on $\KL\left(\rho_0\mid\pi\right)$. Under certain assumptions, we provide a descent lemma for the population limit \algname{$\beta$-SVGD}, which covers the descent lemma for the population limit \algname{SVGD} when $\beta\to 0$. We also illustrate the advantages of \algname{$\beta$-SVGD} over \algname{SVGD} by experiments.
	\end{abstract}

\section{Introduction}
The main technical task of Bayesian inference is to estimate integration with respect to the posterior distribution $$\pi(x)\propto e^{-V(x)},$$ where $V:\R^d\to \R$ is a potential. In practice, this is often reduced to sampling points from the distribution $\pi$. Typical methods that employ this strategy include algorithms based on Markov Chain Monte Carlo~(\algname{MCMC}), such as Hamiltonian Monte Carlo \citep{neal2011mcmc}, also known as Hybrid Monte Carlo~(\algname{HMC}) \citep{duane1987hybrid, betancourt2017conceptual}, and algorithms based on Langevin dynamics \citep{dalalyan2019user,durmus2017nonasymptotic,cheng2018underdamped}. 

One the other hand, Stein Variational Gradient Descent (\algname{SVGD})---a different strategy suggested by \citet{liu2016stein}---is based on an interacting particle system. In the population limit, the interacting particle system can be seen as the kernelized negative gradient flow of the Kullback-Leibler divergence
\begin{equation}\label{eq:kl}
	\squeeze	\KL\left(\rho\mid\pi\right):=\int\log\left(\frac{\rho}{\pi}\right)(x) \;d\rho(x);
\end{equation}
see \citep{liu2017stein,duncan2019geometry}. \algname{SVGD} has already been widely used in a variety of machine learning settings, including
variational auto-encoders \citep{pu2017vae}, reinforcement learning~\citep{liu2017policy}, 
sequential decision making~\citep{zhang2018learning,zhang2019scalable}, generative adversarial networks~\citep{tao2019variational} and federated learning~\citep{kassab2020federated}. However, current theoretical understanding of \algname{SVGD} is limited to its infinite particle version~\citep{liu2017stein,korba2020non,salim2021complexity,sun2022convergence}, and the theory on finite particle \algname{SVGD} is far from satisfactory.

Since \algname{SVGD} is built on a discretization of the kernelized negative gradient flow of~\myeqref{eq:kl}, we can learn about its sampling potential by studying this flow. In fact, a simple calculation reveals that
\begin{equation}\label{eq:errror}
	\squeeze\min \limits_{0 \leq s \leq t} I_{Stein}\left(\rho_{s} \mid \pi\right) \leq \frac{\KL\left(\rho_{0} \mid \pi\right)}{t},
\end{equation}
where $I_{Stein}\left(\rho_s\mid\pi\right)$ is the Stein Fisher information (see Definition~\ref{def:steinfisher}) of $\rho_s$ relative to $\pi$, which is typically used to quantify how close to $\pi$   are the probability distributions $\left(\rho_s\right)_{ s=0}^t$ generated along this flow.  In particular, if our goal is to guarantee 
$
\min \limits_{0 \leq s \leq t} I_{Stein}\left(\rho_{s} \mid \pi\right) \leq \varepsilon,
$
result \myeqref{eq:errror} says that we need to take 
$$\squeeze t\geq \frac{\KL\left(\rho_{0} \mid \pi\right)}{\varepsilon}.$$  Unfortunately, and this is the key motivation for our work, the quantity the initial $\operatorname{KL}$ divergence $\KL\left(\rho_0\mid\pi\right)$ can be very large. Indeed,    it can be proportional to the underlying dimension, which is highly problematic in high dimensional regimes. \cite{salim2021complexity} and \cite{sun2022convergence} have recently derived an iteration complexity bound for the infinite particle \algname{SVGD} method. However, similarly to the time complexity of the continuous flow, their bound depends on $\KL\left(\rho_{0} \mid \pi\right)$.


\subsection{Summary of contributions}

In this paper, we design a family of continuous time flows---which we call \algname{$\beta$-SVGD} flow---by combining {\em importance weights} with the kernelized gradient flow of the $\operatorname{KL}$-divergence. Surprisingly, we prove that the time for this flow to converge to the equilibrium distribution $\pi$, that is $\min_{0\leq s\leq t}I_{Stein}\left(\rho_s\mid\pi\right)\leq\varepsilon$ with $(\rho_s)_{s=0}^t$ generated along \algname{$\beta$-SVGD} flow, can be bounded by $-\frac{1}{\varepsilon\beta(\beta+1)}$ when $\beta\in (-1,0)$. This indicates that the importance weights can potentially accelerate \algname{SVGD}. Actually, we design \algname{$\beta$-SVGD} method based on a discretization of the \algname{$\beta$-SVGD} flow and provide a descent lemma for its population limit version. Some simple experiments in \Cref{apdx:exp} verify our predictions.

We summarize our contributions in the following:
\begin{itemize}
	\item {\bf A new family of flows.} We construct a family of continuous time flows for which we coin the name \algname{$\beta$-SVGD} flows. These flows do  {\em not} arise from a time re-parameterization of the \algname{SVGD} flow since their trajectories are different, nor can they be seen as the kernelized gradient flows of the R{\'e}nyi divergence. 
	\item {\bf Convergence rates.}	When $\beta\to 0$, this returns back to the kernelized gradient flow of the $\operatorname{KL}$-divergence~(\algname{SVGD} flow); when $\beta\in (-1,0)$, the convergence rate of \algname{$\beta$-SVGD} flows is significantly improved than that of the \algname{SVGD} flow in the case $\KL\left(\rho_0\mid\pi\right)$ is large.  Under a Stein Poincar\'e inequality, we derive an exponential convergence rate of $2$-R\'enyi divergence along \algname{$1$-SVGD} flow. Stein Poincar\'e inequality is proved to be weaker than Stein log-Sobolev inequality, however like Stein log-Sobolev inequality, it is not clear to us when it does hold.
	\item {\bf Algorithm.} We design \algname{$\beta$-SVGD} algorithm based on a discretization of the \algname{$\beta$-SVGD} flow and we derive a descent lemmas for the population limit \algname{$\beta$-SVGD}. 
	\item {\bf Experiments.} Finally, we do some experiments to illustrate the advantages of \algname{$\beta$-SVGD} with negative $\beta$. The simulation results on \algname{$\beta$-SVGD}  corroborate our theory.
\end{itemize}

\subsection{Related works}
The \algname{SVGD} sampling technique was first presented in the fundamental work of \cite{liu2016stein}.
Since then, a number of \algname{SVGD} variations have been put out.
The following is a partial list: Newton version \algname{SVGD} \citep{detommaso2018stein}, stochastic \algname{SVGD} \citep{gorham2020stochastic}, mirrored \algname{SVGD} \citep{shi2021sampling}, random-batch method \algname{SVGD} \citep{li2020stochastic} and matrix kernel \algname{SVGD} \citep{wang2019stein}.
The theoretical knowledge of \algname{SVGD} is still constrained to population limit SVGD.
The first work to demonstrate the convergence of \algname{SVGD} in the population limit was by \cite{liu2017stein, korba2020non} then derived a similar descent lemma for the population limit \algname{SVGD} using a different approach. However, their results relied on the path information and thus were not self-contained, to provide a clean analysis, \cite{salim2021complexity} assumed a Talagrand's $T_1$ inequality of the target distribution $\pi$ and gave the first iteration complexity analysis in terms of dimension $d$. Following the work of \cite{salim2021complexity, sun2022convergence}  derived a descent lemma for the population limit \algname{SVGD} under a non-smooth potential $V$.

In this paper, we consider a family of generalized divergences, R\'enyi divergence,  and \algname{SVGD} with importance weights. For these two themes, we name a few  but non-exclusive related results.
\cite{wang2018variational} proposed to use the $f$-divergence instead of $\operatorname{KL}$-divergence in the variational inference problem, here $f$ is a convex function; \cite{yu2020training} also considered variational inference with $f$-divergence but with its dual form. \cite{han2017stein} considered combining importance sampling with \algname{SVGD}, however the importance weights were only used to adjust the final sampled points but not in the iteration of \algname{SVGD} as in this paper; \cite{liu2017black} considered importance sampling, they designed a black-box scheme to calculate the importance weights~(they called them Stein importance weights in their paper) of any set of points.

\section{Preliminaries}\label{sec:pre}
We assume the target distribution $\pi\propto e^{-V}$, and we have oracle to calculate the value of $e^{-V(x)}$ for all $ x\in \R^d$.

\subsection{Notation}
Let $x=\left(x_1,\ldots,x_d\right)^{\top},y=\left(y_1,\ldots,y_d\right)^{\top}\in\R^d$, denote $\inner{x}{y}:=\sum_{i=1}^d x_iy_i$ and $\norm{x}:=\sqrt{\inner{x}{x}}$. For a square matrix $B\in \R^{d\times d}$, the  operator norm and Frobenius norm of $B$ are defined respectively by
$\|B\|_{op} := \sqrt{\varrho (B^{\top}B)}$ and $\|B\|_{F} := \sqrt{\sum_{i=1}^{d} \sum_{j=1}^{d} B_{i,j}^2}$, respectively, 
where $\varrho$ denotes the spectral radius. It is easy to verify that $\norm{B}_{op}\leq \norm{B}_F$. Let $\cP_2(\R^d)$ denote the  space of probability measures with finite second moment; that is, for any $\mu\in\cP_2(\R^d)$ we have $\int \norm{x}^2 \;d\mu(x)<+\infty$. The Wasserstein $2$-distance between $\rho,\mu\in\cP_2(\R^d)$ is defined by
\begin{equation*}
	\squeeze	W_2\left(\rho,\mu\right):=\inf \limits_{\eta\in\Gamma(\rho,\mu)}\sqrt{\int\norm{x-y}^2 \; d\eta(x,y)},
\end{equation*}
where $\Gamma\left(\rho,\mu\right)$ is the set of all joint distributions defined on $\R^d\times\R^d$ having $\rho$ and $\mu$ as  marginals. The push-forward distribution of $\rho\in\cP_2\left(\R^d\right)$ by a map $T:\R^d\to\R^d$, denoted by $T_{\#}\rho$,  is defined as follows: for any measurable set $\Omega\in\R^d$, $T_{\#}\rho\left(\Omega\right):=\rho\left(T^{-1}\left(\Omega\right)\right)$. By definition of the push-forward distribution, it is not hard to verify that the probability densities satisfy $T_{\#}\rho(T(x))|\operatorname{det} \operatorname{D}T(x)|=\rho(x)$, where $\operatorname{D} T$ is the Jacobian matrix of $T$. The reader can refer to \cite{villani2008optimal} for more details.
\subsection{R\'enyi divergence}
Next, we define the R\'enyi divergence which plays an important role in information theory and many other areas such as hypothesis testing~\citep{morales2013renyi} and multiple source adaptation~\citep{mansour2012multiple}.
\begin{definition}[R{\'e}nyi divergence]
	For two probability distributions $\rho$ and $\mu$ on $\R^d$ and $\rho\ll\mu$, the R{\'e}nyi divergence of positive order $\alpha$ is defined as
	\begin{equation}
		\Renyi(\rho\mid\mu):=\begin{cases}
			\frac{1}{\alpha-1}\log\left(\int\left(\frac{\rho}{\mu}\right)^{\alpha-1}(x) \;d\rho(x)\right)& 0<\alpha<\infty, \; \alpha\neq 1\\
			\int\log\left(\frac{\rho}{\mu}\right)(x) \;d\rho(x)& \alpha=1
		\end{cases}.
	\end{equation}
	If $\rho$ is not absolutely continuous with respect to $\mu$, we set $\Renyi(\rho\mid\mu)=\infty$. Further, we denote $\KL\left(\rho\mid\mu\right):=\mathrm{D}_1\left(\rho\mid\mu\right)$.
\end{definition}
R\'enyi divergence is non-negative, continuous and non-decreasing in terms of the parameter $\alpha$; specifically, we have  $\KL\left(\rho\mid\mu\right)=\lim_{\alpha\to 1}\Renyi(\rho\mid\mu)$. More properties of R\'enyi divergence can be found in a comprehensive article by \cite{van2014renyi}. Besides R\'enyi divergence, there are other generalizations of the $\operatorname{KL}$-divergence, e.g., admissible relative entropies~\citep{arnold2001convex}.

\subsection{Background on \algname{SVGD}}

Stein Variational Gradient Descent~(\algname{SVGD}) is defined on a Reproducing Kernel Hilbert Space~(RKHS) $\cH_0$ with a non-negative definite reproducing kernel $k:\R^d\times\R^d\to \R^{+}$. The key feature of this space is its reproducing property:
\begin{equation}\label{eq:reproduce}
	f(x)=\inner{f(\cdot)}{k(x,\cdot)}_{\cH_0},\qquad \forall f\in\cH_0,
\end{equation}
where $\inner{\cdot}{\cdot}_{\cH_0}$ is the inner product defined on $\cH_0$. Let $\cH$ be the $d$-fold Cartesian product of $\cH_0$. That is, $f\in\cH$ if and only if there exist $f_1, \cdots,f_d\in\cH_0$ such that $f=\left(f_1,\ldots,f_d\right)^{\top}$. Naturally, the inner product on $\cH$ is given by
\begin{equation}
	\squeeze	\inner{f}{g}_{\cH}:=\sum \limits_{i=1}^d\inner{f_i}{g_i}_{\cH_0},\qquad f=\left(f_1,\ldots,f_d\right)^{\top}\in\cH, \qquad g=\left(g_1,\ldots,g_d\right)^{\top}\in\cH.
\end{equation}
For more details of RKHS, the readers can refer to \cite{berlinet2011reproducing}.

It is well known~(see for example \cite{ambrosio2008gradient}) that $\nabla\log\left(\frac{\rho}{\pi}\right)$ is the Wasserstein gradient of $\KL\left(\cdot\mid\pi\right)$ at $\rho\in\cP_2(\R^d)$. \cite{liu2016stein} proposed a kernelized Wasserstein gradient of the $\operatorname{KL}$-divergence, defined by 
\begin{equation}\label{eq:wstgd}
	\squeeze	g_{\rho}(x):=\int k(x,y)\nabla\log\left(\frac{\rho}{\pi}\right)(y) \;d\rho(y)\in\cH.
\end{equation}
Integration by parts yields 
\begin{equation}\label{eq:svgdfloww}
	\squeeze		g_{\rho}(x)=-\int \left[\nabla\log\pi(y)k(x,y)+\nabla_yk(x,y)\right] \;d\rho(y).
\end{equation}
Comparing the Wasserstein gradient $\nabla\log\left(\frac{\rho}{\pi}\right)$ with \myeqref{eq:svgdfloww}, we find that the latter  can be easily approximated by 
\begin{equation}\label{eq:approximatemethods}
	\squeeze		g_{\rho}(x)\approx \hat{g}_{\hat{\rho}} := -\frac{1}{N}\sum \limits_{i=1}^N\left[\nabla\log\pi(x_i)k(x,x_i)+\nabla_{x_i}k(x,x_i)\right],
\end{equation}
with $\hat{\rho}=\frac{1}{N}\sum_{i=1}^N\delta_{x_i}$ and $\left(x_i\right)_{i=1}^N$ sampled from $\rho$. With the above notations, the \algname{SVGD} update rule
\begin{equation}
	\squeeze	x_i\leftarrow x_i+\frac{\gamma}{N} \sum\limits_{j=1}^N \left[\nabla\log\pi(x_j)k(x_i,x_j)+\nabla_{x_j}k(x_i,x_j)\right],\quad i=1,\ldots,N,
\end{equation}
where $\gamma$ is the step-size, can be presented in the compact form
$\hat{\rho}\leftarrow\left(I-\gamma\hat{g}_{\hat{\rho}}\right)_{\#}\hat{\rho}.$ When we talk about the infinite particle \algname{SVGD}, or population limit \algname{SVGD}, we mean
$
\rho\leftarrow\left(I-\gamma{g}_{\rho}\right)_{\#}\rho.
$
The metric used in the study of \algname{SVGD} is the Stein Fisher information or the Kernelized Stein Discrepancy~(KSD).
\begin{definition}[Stein Fisher Information]\label{def:steinfisher}
	Let $\rho \in \mathcal{P}_{2}(\R^d)$. The Stein Fisher Information of $\rho$ relative to $\pi$  is defined by 
	\begin{equation}
		\squeeze		I_{Stein}(\rho \mid \pi):=\iint k(x,y)\inner{\nabla\log\left(\frac{\rho}{\pi}\right)(x)}{\nabla\log\left(\frac{\rho}{\pi}\right)(y)} \;d\rho(x) \;d\rho(y).
	\end{equation}
\end{definition}
A sufficient condition under which  $\lim_{n\to\infty}I_{Stein}(\rho_n\mid\pi)$ implies $\rho_n\to\pi$ weakly can be found in \cite{gorham2017measuring}, which requires: i) the kernel $k$ to be in the form $k(x,y)=\left(c^2+\normsq{x-y}\right)^{\theta}$ for some $c>0$ and $\theta\in (-1,0)$; ii) $\pi\propto e^{-V}$ to be distant dissipative; roughly speaking, this requires $V$ to be convex outside a compact set, see \cite{gorham2017measuring} for an accurate definition.

In the study of the kernelized Wasserstein gradient \myeqref{eq:svgdfloww} and its corresponding continuity equation
\begin{equation*}
	\squeeze	\frac{\partial\rho_t}{\partial t}+\operatorname{div}\left(\rho_tg_{\rho_t}\right)=0,
\end{equation*}
\cite{duncan2019geometry} introduced the following kernelized log-Sobolev inequality to prove the exponential convergence of $\KL\left(\rho_t\mid\pi\right)$ along the direction \myeqref{eq:svgdfloww}:
\begin{definition}[Stein log-Sobolev inequality]
	We say $\pi$ satisfies the Stein log-Sobolev inequality with constant $\lambda>0$ if
	\begin{equation}\label{eq:sli}
		\squeeze	\KL(\rho \mid \pi) \leq \frac{1}{2 \lambda} I_{Stein}(\rho \mid \pi).
	\end{equation}
\end{definition}
While this inequality can guarantee an exponential convergence rate of $\rho_t$ to $\pi$, quantified by the $\operatorname{KL}$-divergence, the condition for $\pi$ to satisfy the Stein log-Sobolev inequality is very restrictive. In fact, little is known about  when \myeqref{eq:sli} holds.

\section{Continuous time dynamics of the \algname{$\beta$-SVGD} flow}\label{sec:conti}
In this section, we mainly focus on the continuous time dynamics of the \algname{$\beta$-SVGD} flow. Due to page limitation, we leave all of the proofs to \Cref{apdx:lemmas}.

\subsection{\algname{$\beta$-SVGD} flow}\label{sec:3.1}
In this paper, a {\em flow} refers to some time-dependent vector field $v_t:\R^d\to\R^d$. This time-dependent vector field will influence the mass distribution on $\R^d$ by the continuity equation or the equation of conservation of mass 
\begin{equation}\label{eq:conservationofmass}
	\squeeze	\frac{\partial\rho_t}{\partial t}+\operatorname{div}\left(\rho_tv_t\right)=0,
\end{equation}
readers can refer to \cite{ambrosio2008gradient} for more details.

\begin{definition}[\algname{$\beta$-SVGD} flow] \label{def:betaflow}Given a weight parameter $\beta \in (-1,+\infty)$, the \algname{$\beta$-SVGD} flow is given by 
	\begin{equation}\label{eq:flowbeta}
		\squeeze	v_t^{\beta}(x):=
		-\left(\frac{\pi}{\rho_t}\right)^{\beta}(x)\int k(x,y)\nabla\log\left(\frac{\rho_t}{\pi}\right)(y)\; d\rho_t(y).
	\end{equation}
	Note that when $\beta=0$, this is the negative kernelized Wasserstein gradient \myeqref{eq:wstgd}. 
\end{definition}

Note that we can not treat \algname{$\beta$-SVGD} flow as the kernelized Wasserstein gradient flow of the 
$(\beta+1)$-R\'enyi divergence. However, they are closely related, and we can derive the following theorem.
\begin{theorem}[Main result]\label{thm:betaflow}
	Along the \algname{$\beta$-SVGD} flow \myeqref{eq:flowbeta}, we have\footnote{In fact, in the proof in \Cref{apdx:lemmas} we know a stronger result. When $\beta\in (-1,0)$, the right hand side of \myeqref{eq:mainresults} is only weakly dependent on $\rho_0$ and $\pi$ and should be $\frac{\left|e^{\beta\mathrm{D}_{\beta+1}\left(\rho_0\mid\pi\right)}-e^{\beta\mathrm{D}_{\beta+1}\left(\rho_T\mid\pi\right)}\right|}{T|\beta(\beta+1)|}$, which is less than $-\frac{1}{T\beta(\beta+1)}$.}
	\begin{equation}\label{eq:mainresults}
		\min_{t\in [0,T]}I_{Stein}\left(\rho_{t}\mid\pi\right)\leq\frac{1}{T}\int_{0}^TI_{Stein}(\rho_t\mid\pi)dt\leq \begin{cases}
			\frac{e^{\beta\mathrm{D}_{\beta+1}\left(\rho_0\mid\pi\right)}}{T\beta(\beta+1)}&\beta>0\\
			\frac{\KL(\rho_0\mid\pi)}{T}&\beta=0\\
			-\frac{1}{T\beta(\beta+1)}&\beta\in(-1,0)
		\end{cases}.
	\end{equation}
\end{theorem}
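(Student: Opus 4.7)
The plan is to identify an ``energy'' functional $F_\beta(\rho_t)$ whose time derivative along the $\beta$-SVGD flow \myeqref{eq:flowbeta} equals $-\beta(\beta+1)\,I_{Stein}(\rho_t\mid\pi)$, so that integrating over $[0,T]$ and controlling the terminal value converts an exact descent identity into the claimed upper bound on $\tfrac{1}{T}\int_0^T I_{Stein}(\rho_t\mid\pi)\,dt$. Guided by the observation $e^{\beta\mathrm{D}_{\beta+1}(\rho\mid\pi)}=\int(\rho/\pi)^{\beta+1}\,d\pi$, the natural choice is
\[
F_\beta(\rho):=\int\left(\frac{\rho}{\pi}\right)^{\beta+1}d\pi\quad\text{for }\beta\neq 0,\qquad F_0(\rho):=\KL(\rho\mid\pi).
\]

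Writing $u_t:=\rho_t/\pi$, I would compute $\tfrac{d}{dt}F_\beta(\rho_t)$ by differentiating under the integral sign, inserting the continuity equation \myeqref{eq:conservationofmass}, and integrating by parts once. Using $\nabla u_t^\beta=\beta u_t^\beta \nabla\log u_t$, this reduces to
\[
\frac{d}{dt}F_\beta(\rho_t)=\beta(\beta+1)\int u_t^{\beta}(x)\,\langle\nabla\log u_t(x),v_t^\beta(x)\rangle\,d\rho_t(x).
\]
The crucial algebraic point is that the definition \myeqref{eq:flowbeta} of $v_t^\beta$ carries a factor $(\pi/\rho_t)^\beta=u_t^{-\beta}$ that exactly cancels the $u_t^{\beta}$ produced by differentiating $F_\beta$. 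What remains is the double integral defining the Stein Fisher information, so $\tfrac{d}{dt}F_\beta(\rho_t)=-\beta(\beta+1)\,I_{Stein}(\rho_t\mid\pi)$. The case $\beta=0$ recovers the classical identity $\tfrac{d}{dt}\KL(\rho_t\mid\pi)=-I_{Stein}(\rho_t\mid\pi)$, either directly or as the $\beta\to 0$ limit of $(F_\beta-1)/\beta$.

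Integrating the identity over $[0,T]$ and using $\min_{t\in[0,T]}I_{Stein}\leq\tfrac{1}{T}\int_0^T I_{Stein}\,dt$ then splits into three regimes. For $\beta>0$, $\beta(\beta+1)>0$, so $F_\beta$ is non-increasing; dropping the non-negative $F_\beta(\rho_T)$ gives $\int_0^T I_{Stein}\,dt\leq F_\beta(\rho_0)/[\beta(\beta+1)]=e^{\beta\mathrm{D}_{\beta+1}(\rho_0\mid\pi)}/[\beta(\beta+1)]$. For $\beta=0$, direct integration yields the $\KL(\rho_0\mid\pi)/T$ bound. For $\beta\in(-1,0)$ the sign flips and $F_\beta$ is non-decreasing; since $x\mapsto x^{\beta+1}$ is concave on $[0,\infty)$, Jensen's inequality gives $F_\beta(\rho_t)\leq 1$ uniformly in $t$, so
\[
\int_0^T I_{Stein}\,dt = \frac{F_\beta(\rho_T)-F_\beta(\rho_0)}{-\beta(\beta+1)} \leq \frac{1}{|\beta(\beta+1)|},
\]
remarkably independent of $\rho_0$. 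The sharper footnote bound follows by retaining the numerator $F_\beta(\rho_0)-F_\beta(\rho_T)$ and using $F_\beta(\rho_0),F_\beta(\rho_T)\in[0,1]$.

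The algebra is short; the principal obstacle is the analytic justification of the interchange of time derivative with spatial integral and of the integration by parts. This demands sufficient decay at infinity of $u_t^{\beta+1}\pi\,v_t^\beta$ along the flow together with regularity of $k$ and $V$. I would adopt the standard hypotheses under which \citet{duncan2019geometry} treats the $\beta=0$ case and verify that the extra weight $(\pi/\rho_t)^\beta$ is controllable along the trajectory; this is the only delicate point for negative $\beta$, where $(\pi/\rho_t)^\beta$ can blow up in low-density regions of $\rho_t$, so a uniform bound on $\rho_t/\pi$ or a tail assumption along the flow may be required to make the heuristic rigorous.
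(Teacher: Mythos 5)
Your proposal is correct and takes essentially the same approach as the paper: your $F_\beta$ is exactly $e^{\beta\mathrm{D}_{\beta+1}(\cdot\mid\pi)}$, and the paper likewise derives the identity $\frac{d}{dt}e^{\beta\mathrm{D}_{\beta+1}\left(\rho_t\mid\pi\right)}=-\beta(\beta+1)I_{Stein}\left(\rho_t\mid\pi\right)$ (merely reaching it by differentiating $\mathrm{D}_{\beta+1}$ first and then exponentiating), integrates over $[0,T]$, and performs the identical three-case analysis. Your Jensen/concavity bound $F_\beta\le 1$ for $\beta\in(-1,0)$ is the same fact the paper uses in the form $\mathrm{D}_{\beta+1}\ge 0$, so there is no substantive difference.
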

\begin{figure}
	\centering
	\includegraphics[scale=0.31]{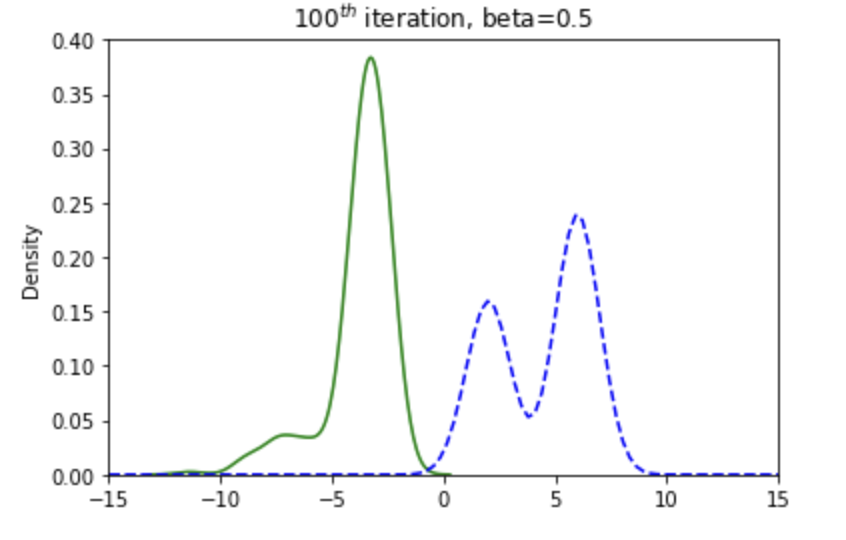}
	\includegraphics[scale=0.31]{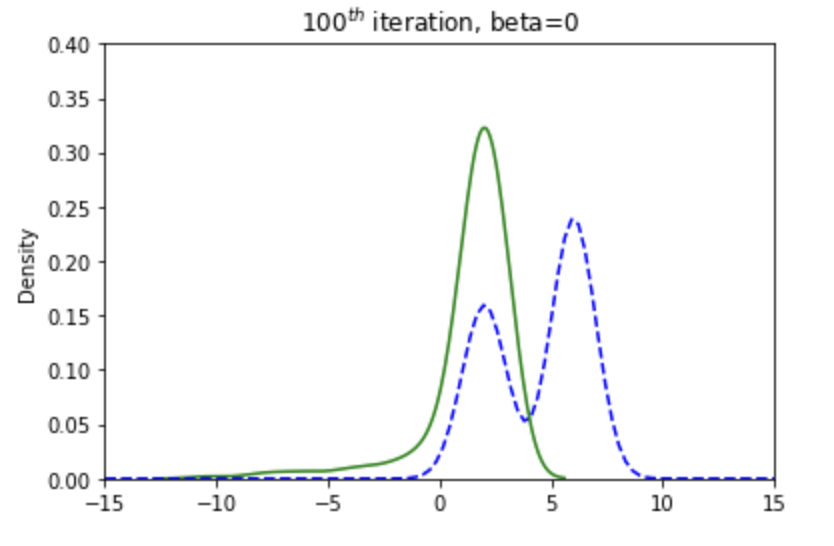}
	\includegraphics[scale=0.31]{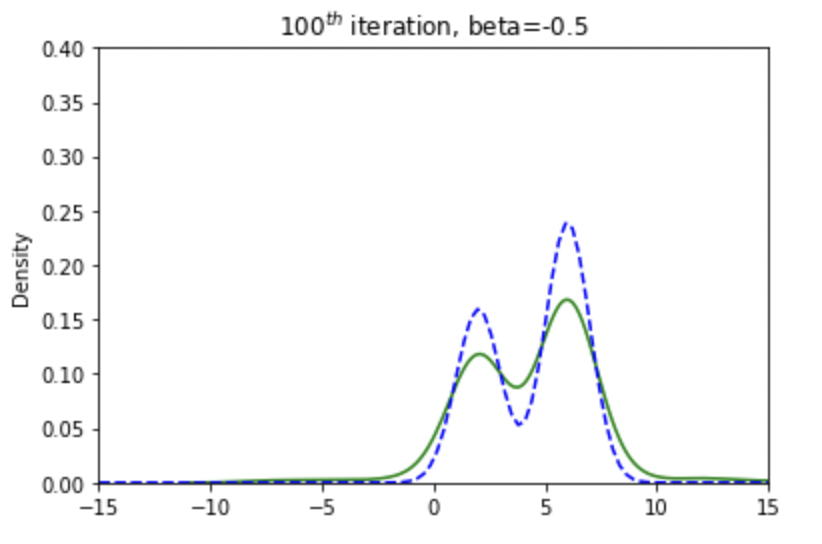}
	\caption{The performance of \algname{$\beta$-SVGD} with $\beta=0.5,0,-0.5$ from left to right, but with the same step-size. The blue dashed line is the target distribution $\pi$: the Gaussian mixture $\frac{2}{5}\cN(2,1)+\frac{3}{5}\cN(6,1)$. The green solid line is the distribution generated by  \algname{$\beta$-SVGD} after 100 iterations; More experiments can be found in \Cref{apdx:exp}.}
	\label{fig:3}
\end{figure}

Note the left hand side of \myeqref{eq:mainresults} is the Stein Fisher information. When $\beta$ decreases from positive to negative, the right hand side of \myeqref{eq:mainresults} changes dramatically; it appears to be independent of $\rho_0$ and $\pi$.  If we do not know the R\'enyi divergence between $\rho_0$ and $\pi$, it seems the best convergence rate is obtained by setting $\beta=-\frac{1}{2}$, that is 
\[
\squeeze		\min_{t\in [0,T]}I_{Stein}\left(\rho_{t}\mid\pi\right)\leq\frac{1}{T}\int_{0}^TI_{Stein}(\rho_t\mid\pi)dt\leq\leq\frac{4}{T}.
\]
It is somewhat unexpected to observe that the time complexity is independent of $\rho_0$ and $\pi$, or to be more precise, that it relies only very weakly on $\rho_0$ and $\pi$ when $\beta\in (-1,0)$. We wish to stress that this is {\em not} achieved by time re-parameterization. In the proof of \Cref{thm:betaflow}, we can see the term $\left(\nicefrac{\pi}{\rho_t}\right)^{\beta}$ in \algname{$\beta$-SVGD} flow~\myeqref{eq:flowbeta} is utilized to cancel term $\left(\nicefrac{\rho_t}{\pi}\right)^{\beta}$ in the Wasserstein gradient of $(\beta+1)$-R\'enyi divergence. Actually, when $\beta\in (-1,0)$, this term has an added advantage and can be seen as the acceleration factor in front of the kernelized Wasserstein gradient of $\operatorname{KL}$-divergence. Specifically, the negative kernelized Wasserstein gradient of $\operatorname{KL}$-divergence $v_t^{0}(x):=-\int k(x,y)\nabla\log(\frac{\rho_t}{\pi})(y)d\rho_t(y)$ is the vector field that compels $\rho_t$ to approach $\pi$, while $\left(\nicefrac{\pi}{\rho_t}\right)^{\beta}(x)$ is big~(roughly speaking this means $x$ is close to the mass concentration region of $\rho_t$ but away from the one of $\pi$), this factor will enhance the vector field at point $x$ and force the mass around $x$ move faster towards the mass concentration region of $\pi$; on the other hand, if $\left(\nicefrac{\pi}{\rho_t}\right)^{\beta}(x)$ is small~(this means $x$ is already near to the mass concentration region of $\pi$), this factor will weaken the vector field and make the mass surrounding $x$ remain within the mass concentration region of $\pi$. This is the intuitive justification for why, when $\beta\in (-1,0)$, the time complexity for \algname{$\beta$-SVGD} flow to diminish the Stein Fisher information only depends on $\rho_0$ and $\pi$ very weakly.


\begin{remark}\label{rmk:111}
	While it may seem reasonable to suspect that the time complexity of the \algname{$\beta$-SVGD} flow with $\beta\leq -1$ will also depend on $\rho_0$ and $\pi$ very weakly, surprisingly, this is not true. In fact,  we can prove that~(see \Cref{apdx:lemmas})
	\begin{equation*}
		\squeeze		\min_{t\in [0,T]}I_{Stein}\left(\rho_{t}\mid\pi\right)\leq\frac{1}{T}\int_{0}^TI_{Stein}(\rho_t\mid\pi)dt\leq \frac{e^{(-\beta-1)\mathrm{D}_{-\beta}\left(\pi\mid\rho_0\right)}}{|T\beta(\beta+1)|}.
	\end{equation*}
	Letting $\beta\to -1$, we get $	\min_{t\in [0,T]}I_{Stein}\left(\rho_{t}\mid\pi\right)\leq\frac{\KL\left(\pi\mid\rho_0\right)}{T}$. The regime when $\beta\leq -1$ is similar to the  $\beta>0$ regime  in \Cref{thm:betaflow}, which heavily depends on $\rho_0$ and $\pi$. 	Mathematically speaking, the weak dependence on $\rho_0$ and $\pi$ is caused by the concavity of the function $s^{\alpha}$ on $s\in\mathbb{R}^+$ when $\alpha=\beta+1\in (0,1)$.
\end{remark}

\subsection{\algname{1-SVGD} flow and the Stein Poincar\'e inequality}

Functional $\KL\left(\cdot\mid\cdot\right)$ is non-symmetric; that is, $\KL\left(\cdot\mid\pi\right)\neq \KL\left(\pi\mid\cdot\right)$, and so is their Wasserstein gradient. The Wasserstein gradient of $\KL\left(\pi\mid\cdot\right)$ at distribution $\rho\in\cP_2(\R^d)$ is $-\nabla\frac{\pi}{\rho}$~(see \Cref{apdx:calculus}), or, to put it another way, $\frac{\pi}{\rho}\nabla\log(\frac{\rho}{\pi})$, which may be regarded as the non-kernelized \algname{$1$-SVGD} flow~(module a minus sigh) when compared to \myeqref{eq:flowbeta}. To conclude, the \algname{$1$-SVGD} flow 
\begin{equation}\label{eq:flowone}
	\squeeze	v_t^{1}(x):=
	-\frac{\pi}{\rho_t}(x)\int k(x,y)\nabla\log\left(\frac{\rho_t}{\pi}\right)(y) \;d\rho_t(y),
\end{equation}
is the negative kernelized  Wasserstein gradient flow of $\KL\left(\pi\mid\cdot\right)$. Next, we will
study the exponential convergence of $2$-R\'enyi divergence along \algname{$1$-SVGD} flow under the Stein Poincar\'e inequality. 
\begin{definition}[Stein Poincar{\'e} inequality]
	We say that $\pi$ satisfies  the Stein Poincar{\'e} inequality with constant $\lambda>0$ if
	\begin{equation}\label{eq:spi}
		\squeeze		\int|g|^{2} \; d \pi\leq	\frac{1}{\lambda}\iint k(x, y)\inner{\nabla g(x)}{\nabla g(y)} \; d \pi(x) \; d \pi(y),
	\end{equation}
	for any smooth $g$ with $\int g \; d \pi=0$. 
\end{definition}
While \citet{duncan2019geometry} also introduced the Stein Poincar\'e inequality, they presented it in a different form. Just as Poincar\'e inequality is a linearized log-Sobolev inequality~(see for example \cite[Proposition 5.1.3]{bakry2014analysis}), Stein Poincar\'e inequality is also a linearized Stein log-Sobolev inequality~\myeqref{eq:sli}. Although  Stein Poincar\'e inequality is weaker than Stein log-Sobolev inequality, the condition for it to hold is quite restrictive, as in the case of Stein log-Sobolev inequality; see the discussion in \cite[Section 6]{duncan2019geometry}. 
\begin{lemma}[Stein log-Sobolev implies Stein Poincar{\'e}]\label{lem:1}
	If $\pi$ satisfies the Stein log-Sobolev inequality~\myeqref{eq:sli} with constant $\lambda>0$, then it also satisfies the Stein Poincar{\'e} inequality with the same constant $\lambda$.
\end{lemma}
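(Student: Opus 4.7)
The plan is to derive the Stein Poincar\'e inequality as the infinitesimal/linearized version of the Stein log-Sobolev inequality, which is the standard route from a full log-Sobolev inequality to its corresponding Poincar\'e inequality (cf. Bakry--\'Emery theory). Concretely, for a fixed smooth test function $g$ with $\int g\,d\pi=0$ and (without loss of generality) bounded, I would consider the perturbed probability measure $\rho_{\epsilon}:=(1+\epsilon g)\pi$ for $\epsilon>0$ small enough that $1+\epsilon g>0$, so that $\rho_\epsilon\in\cP_2(\R^d)$ and $d\rho_\epsilon/d\pi=1+\epsilon g$. I would then apply the Stein log-Sobolev inequality \myeqref{eq:sli} to $\rho_\epsilon$ and Taylor-expand both sides to second order in $\epsilon$.

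The left-hand side is handled by expanding $(1+\epsilon g)\log(1+\epsilon g)=\epsilon g+\tfrac{\epsilon^2}{2}g^2+O(\epsilon^3)$, integrating against $\pi$, and using $\int g\,d\pi=0$, which yields
\begin{equation*}
\squeeze	\KL(\rho_\epsilon\mid\pi)=\frac{\epsilon^2}{2}\int g^2\,d\pi+O(\epsilon^3).
\end{equation*}
For the right-hand side I would use $\nabla\log(\rho_\epsilon/\pi)=\nabla\log(1+\epsilon g)=\epsilon\nabla g+O(\epsilon^2)$ together with $d\rho_\epsilon=(1+\epsilon g)\,d\pi$ to obtain
\begin{equation*}
\squeeze	I_{Stein}(\rho_\epsilon\mid\pi)=\epsilon^2\iint k(x,y)\inner{\nabla g(x)}{\nabla g(y)}\,d\pi(x)\,d\pi(y)+O(\epsilon^3).
\end{equation*}
Substituting these into \myeqref{eq:sli}, dividing through by $\epsilon^2$, and sending $\epsilon\to 0$ gives the Stein Poincar\'e inequality \myeqref{eq:spi} with the same constant $\lambda$, because the factor $\tfrac{1}{2}$ appearing on both sides cancels.

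\textbf{Main obstacle.} The algebraic expansion is routine; the only real care is in the analytic bookkeeping for the remainder terms $O(\epsilon^3)$. I would need to justify that these remainders are indeed uniformly $O(\epsilon^3)$ so that they vanish after dividing by $\epsilon^2$ and passing to the limit. This is easy if $g$ is assumed bounded and smooth with bounded derivatives (which one can assume, since the Stein Poincar\'e inequality for all smooth $g$ with $\int g\,d\pi=0$ then follows by a density argument), and the kernel $k$ is assumed to produce a finite double integral against $\pi\otimes\pi$. Once that bound is in place, the conclusion $\int g^2\,d\pi\leq \lambda^{-1}\iint k(x,y)\inner{\nabla g(x)}{\nabla g(y)}\,d\pi(x)\,d\pi(y)$ for all such $g$ follows immediately, completing the proof of Lemma~\ref{lem:1}.
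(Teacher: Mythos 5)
Your proposal is correct and follows essentially the same route as the paper: perturb $\pi$ to $\rho_\epsilon=(1+\epsilon g)\pi$, expand $\KL(\rho_\epsilon\mid\pi)$ and $I_{Stein}(\rho_\epsilon\mid\pi)$ to second order in $\epsilon$, divide by $\epsilon^2$, and let $\epsilon\to 0$, handling unbounded $g$ by approximation. The only cosmetic difference is that the paper computes $I_{Stein}(\rho_\epsilon\mid\pi)$ exactly as $\epsilon^2\iint k(x,y)\inner{\nabla g(x)}{\nabla g(y)}\,d\pi(x)\,d\pi(y)$ via the identity $\nabla\log\left(\frac{\rho}{\pi}\right)d\rho=\nabla\left(\frac{\rho}{\pi}\right)d\pi$, whereas you carry an $O(\epsilon^3)$ remainder, which is harmless.
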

While the proof of the above lemma is a routine task, for completeness we provide it in  \Cref{apdx:lemmas}. The following theorem is inspired by \cite{cao2019exponential}, in which they proved the exponential convergence of R\'enyi divergence along Langevin dynamic under a strongly convex potential $V$. However, due to the structure of \algname{$1$-SVGD} flow, we can only prove the results for $\alpha$-R\'enyi divergence with $\alpha\in (0,2]$.

\begin{theorem}\label{thm:flowone}
	Suppose $\pi$ satisfies the Stein Poincar{\'e} inequality with constant $\lambda>0$. Then the flow \myeqref{eq:flowone} satisfies 
	\begin{equation}\label{eq:exponential2}
		\Renyii\left(\rho_{t} \mid \pi\right) \leq C \cdot \Renyii\left(\rho_{0} \mid \pi\right) \cdot e^{-2 \lambda t},
	\end{equation}
	where $C=\frac{e^{\Renyii\left(\rho_{0} \mid \pi\right)}-1}{\Renyii\left(\rho_{0} \mid \pi\right)}$. 
\end{theorem}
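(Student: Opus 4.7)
The plan is to work with the density ratio $f_t := \rho_t/\pi$, track the chi-squared-type quantity $H(t) := \int f_t^2\,d\pi = e^{\Renyii(\rho_t\mid\pi)}$, and show it satisfies a linear differential inequality which is precisely the infinitesimal form of the Stein Poincaré assumption.

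First, I would rewrite the $1$-SVGD flow so that its mass flux is naturally $\pi$-weighted. Using $\nabla\log(\rho_t/\pi)(y)\,d\rho_t(y) = \nabla f_t(y)\,d\pi(y)$ one obtains
$$\rho_t(x)\,v_t^1(x) \;=\; -\pi(x)\,w_t(x),\qquad w_t(x):=\int k(x,y)\,\nabla f_t(y)\,d\pi(y),$$
so the $\pi/\rho_t$ factor baked into $v_t^1$ is precisely what converts the transported mass $\rho_t v_t^1$ into a $\pi$-weighted quantity $-\pi w_t$. The continuity equation then reads $\partial_t\rho_t = \operatorname{div}(\pi w_t)$. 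Differentiating $H$ in time and integrating by parts (under the mild decay hypotheses implicit throughout the paper) gives
$$H'(t) \;=\; 2\int f_t\,\partial_t\rho_t\,dx \;=\; -2\int\langle\nabla f_t,w_t\rangle\,d\pi \;=\; -2\iint k(x,y)\,\langle\nabla f_t(x),\nabla f_t(y)\rangle\,d\pi(x)\,d\pi(y).$$

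Second, I would apply the Stein Poincaré inequality \myeqref{eq:spi} to the centered function $g := f_t - 1$, which satisfies $\int g\,d\pi = 0$ since $\int f_t\,d\pi = 1$. Because $\int g^2\,d\pi = H(t)-1$ and $\nabla g = \nabla f_t$, the inequality reads
$$H(t) - 1 \;\le\; \frac{1}{\lambda}\iint k(x,y)\,\langle\nabla f_t(x),\nabla f_t(y)\rangle\,d\pi(x)\,d\pi(y).$$
Combining with the previous display yields $H'(t)\le -2\lambda(H(t)-1)$, so Gronwall's inequality applied to $M(t):=H(t)-1$ gives $M(t) \le M(0)\,e^{-2\lambda t}$. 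To finish, convert back to Rényi divergence: since $H(t)=e^{\Renyii(\rho_t\mid\pi)}$, one has $e^{\Renyii(\rho_t\mid\pi)}-1 \le (e^{\Renyii(\rho_0\mid\pi)}-1)\,e^{-2\lambda t}$, and the elementary bound $\log(1+x)\le x$ for $x\ge 0$ together with the identity $e^{\Renyii(\rho_0\mid\pi)}-1 = C\cdot\Renyii(\rho_0\mid\pi)$ built into the definition of $C$ delivers \myeqref{eq:exponential2}.

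The crux of the proof is the alignment achieved in the first step: the $\pi/\rho_t$ prefactor in the $1$-SVGD flow is exactly what cancels the density ratio when computing the dissipation of $H$, producing a bilinear form in $\nabla f_t$ integrated against $d\pi\otimes d\pi$, which is the precise right-hand side of the Stein Poincaré inequality. Without this cancellation one would obtain $d\rho_t\otimes d\rho_t$ on the right, for which no matching functional inequality is available, and the argument would fail. This observation also explains why the technique appears restricted to $\alpha=2$ as noted before the theorem: for a general $\alpha$-Rényi divergence the time derivative of $\int f_t^\alpha\,d\pi$ along \myeqref{eq:flowone} produces an extra factor of $f_t^{\alpha-2}$ inside the bilinear form, which is incompatible with the pure $d\pi\otimes d\pi$ structure of Stein Poincaré. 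The remaining technical step — justifying the integration by parts and differentiation under the integral sign — relies only on the same mild regularity and decay assumptions implicitly used throughout the paper.
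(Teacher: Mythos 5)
Your proposal is correct and follows essentially the same route as the paper: both compute the dissipation of the chi-squared quantity $\int(\rho_t/\pi)^2\,d\pi=e^{\Renyii(\rho_t\mid\pi)}$ along the flow, use the $\pi/\rho_t$ prefactor to turn the dissipation into the bilinear form $\iint k(x,y)\langle\nabla(\rho_t/\pi)(x),\nabla(\rho_t/\pi)(y)\rangle\,d\pi(x)\,d\pi(y)$, apply the Stein Poincar\'e inequality to the centered ratio $\rho_t/\pi-1$, and integrate the resulting linear differential inequality before converting back via $\log(1+x)\le x$. The only cosmetic differences are that you track $H(t)=e^{\Renyii(\rho_t\mid\pi)}$ while the paper tracks $\Renyii$ itself (the two differential inequalities are equivalent), and the paper normalizes the centered function by $\epsilon_t$ before invoking Stein Poincar\'e, which changes nothing since the inequality is homogeneous.
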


Since $\mathrm{D}_{\alpha_1}\left(\rho\mid\pi\right)\leq \mathrm{D}_{\alpha_2}\left(\rho\mid\pi\right)$ for any $0<\alpha_1\leq\alpha_2<\infty$, the exponential convergence of $\alpha$-R\'enyi divergence with $\alpha\in (0,2)$ can be easily deduced from \myeqref{eq:exponential2}.

\begin{corollary}\label{cor:renyicor}
	Suppose $\pi$ satisfies the Stein Poincar{\'e} inequality with constant $\lambda>0$.  Then the flow \myeqref{eq:flowone} satisfies 
	\begin{equation}\label{eq:exponentialalpha}
		\Renyi\left(\rho_{t} \mid \pi\right) \leq C \cdot \Renyi\left(\rho_{0} \mid \pi\right) \cdot e^{-2 \lambda t}
	\end{equation}
	for all  $\alpha\in (0,2]$, where $C=\frac{e^{\Renyii\left(\rho_{0} \mid \pi\right)}-1}{\Renyi\left(\rho_{0} \mid \pi\right)}$.
\end{corollary}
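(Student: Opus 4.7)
The plan is to obtain the corollary as an essentially immediate consequence of Theorem~\ref{thm:flowone}, using the standard monotonicity of the R\'enyi divergence in its order parameter. Recall that for any probability measures $\rho \ll \pi$, the map $\alpha \mapsto \Renyi(\rho\mid\pi)$ is non-decreasing on $(0,\infty)$, a classical fact surveyed in \cite{van2014renyi} and already noted in the preliminaries of this paper (where it is also mentioned that $\Renyi$ is continuous in $\alpha$, so the case $\alpha=1$ is not exceptional). Applying this monotonicity at time $t$ with $\alpha \in (0,2]$ gives $\Renyi(\rho_t \mid \pi) \leq \Renyii(\rho_t \mid \pi)$.

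Next, I would bound the right-hand side by Theorem~\ref{thm:flowone}: under the Stein Poincar\'e inequality,
\[
\Renyii(\rho_t\mid\pi) \;\leq\; \frac{e^{\Renyii(\rho_0\mid\pi)}-1}{\Renyii(\rho_0\mid\pi)}\cdot \Renyii(\rho_0\mid\pi)\cdot e^{-2\lambda t} \;=\; \bigl(e^{\Renyii(\rho_0\mid\pi)}-1\bigr)\,e^{-2\lambda t}.
\]
Chaining the two inequalities yields $\Renyi(\rho_t\mid\pi) \leq (e^{\Renyii(\rho_0\mid\pi)}-1)\,e^{-2\lambda t}$, and rewriting the prefactor as $\frac{e^{\Renyii(\rho_0\mid\pi)}-1}{\Renyi(\rho_0\mid\pi)}\cdot \Renyi(\rho_0\mid\pi)$ produces exactly \myeqref{eq:exponentialalpha} with the advertised constant $C$.

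There is no real obstacle in this argument; the only point worth flagging is that the stated constant $C = \tfrac{e^{\Renyii(\rho_0\mid\pi)}-1}{\Renyi(\rho_0\mid\pi)}$ depends on $\alpha$ through the denominator, but the product $C \cdot \Renyi(\rho_0\mid\pi)$ collapses to the $\alpha$-independent quantity $e^{\Renyii(\rho_0\mid\pi)}-1$, so the proof genuinely only uses monotonicity in $\alpha$ and Theorem~\ref{thm:flowone}. No additional analysis of the \algname{$1$-SVGD} flow is needed, and in particular no new functional inequality beyond the Stein Poincar\'e assumption already used for the case $\alpha = 2$.
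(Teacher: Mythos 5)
Your proposal is correct and follows exactly the same route as the paper: monotonicity of $\alpha\mapsto\Renyi(\rho_t\mid\pi)$ to reduce to the $\alpha=2$ case, then Theorem~\ref{thm:flowone}, then rewriting the $\alpha$-independent bound $\left(e^{\Renyii(\rho_0\mid\pi)}-1\right)e^{-2\lambda t}$ in the advertised form $C\cdot\Renyi(\rho_0\mid\pi)\cdot e^{-2\lambda t}$. Your remark that the product collapses to an $\alpha$-independent quantity is a fair observation about how the constant is really just cosmetic repackaging.
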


\section{The \algname{$\beta$-SVGD} algorithm}\label{sec:algorithm}

The \algname{$\beta$-SVGD} algorithm\footnote{For simplicity, we will often just call it \algname{$\beta$-SVGD}; not to  be confused with the \algname{$\beta$-SVGD} flow.} proposed here  is a sampling method suggested by the discretization of the \algname{$\beta$-SVGD} flow~\myeqref{eq:flowbeta}. Our method reverts to the traditional \algname{SVGD} algorithm when $\beta=0$. 

As in \algname{SVGD}, the integral term in the \algname{$\beta$-SVGD} flow~\myeqref{eq:flowbeta} can be approximated by \myeqref{eq:approximatemethods}. However,  when $\beta\neq 0$, we have to estimate the extra importance weight term $\left(\nicefrac{\pi}{\rho_t}\right)^{\beta}$. {	Due to the lack of the normalization constant of $\pi$ and the curse of dimension, we can hardly to use  the kernel density estimation~\citep{silverman2018density} to approximate $\nicefrac{\pi}{\rho_t}$ accurately in high dimension. Here, we use a different approach to approximate $\nicefrac{\pi}{\rho_t} $, known as the Stein importance weight~\citep{liu2017black}. This method does not rely on the normalization constant of $\pi$ and can be scaled to high dimension. Given $N$ points $\left(x_i\right)^{N}_{i=1}$ sampled from $\rho_t$, a non-negative definite reproducing kernel $k$~(can be different from the one in \algname{$\beta$-SVGD}) and the score function $\nabla\log(\pi)=-\nabla V$, the Stein importance weight $\hat{\boldsymbol{w}}\in \mathbb{R}^d_{+}$ is the solution of the following constrained quadratic optimization problem:
	\begin{equation}\label{eq:optimi}
		\underset{\boldsymbol{w}}{\arg \min }\left\{\frac{1}{2}\boldsymbol{w}^{\top} \boldsymbol{K}_{\pi} \boldsymbol{w}, \quad \text { s.t. } \quad \sum_{i=1}^N w_i=1, \quad w_i \geq 0\right\},
	\end{equation}
	where $\boldsymbol{K}_{\pi}:=\left\{k_{\pi}(x_i,x_j)\right\}_{i,j=1}^{N}$  and 
	\begin{equation}
		\begin{aligned}
			{k}_{\pi}(x,y)&:=k(x,y)\inner{\nabla V(x)}{\nabla V(y)}-\inner{\nabla V(x)}{\nabla_y k(x,y)}\\
			&\quad-\inner{\nabla V(y)}{\nabla_x k(x,y)}+\operatorname{tr}\left(\nabla_x\nabla_yk(x,y)\right).
		\end{aligned}
	\end{equation}
	It can be proved that as $N\to+\infty$, $N\hat{\boldsymbol{w}}$ will approximate $\left(\nicefrac{\pi}{\rho_t}\right)$, see \citet[Theorem 3.2.]{liu2017black}. 
	Problem \eqref{eq:optimi} can be solved efficiently by mirror descent with step-size $r$, which can be simplified into the following:
	\begin{equation}\label{eq:mirror}
		\omega_i^{s+1}=\frac{\omega_i^s e^{-r\sum_{j=1}^Nk_{\pi}(x_i,x_j)\omega_j^s}}{\sum_{l=1}^n \omega_l^s e^{-r\sum_{j=1}^Nk_{\pi}(x_l,x_j)\omega_j^s}}, \quad i=1,2, \ldots, N.
	\end{equation}
	With matrix $\boldsymbol{K}_{\pi}$, the computation cost of mirror descent to find the optimum with $\varepsilon$-accuracy is $O(\nicefrac{N^2}{\varepsilon})$, which is independent of dimension $d$. In general, $N$ cannot be too large because the cost of one iteration of  \algname{SVGD} is $O(N^2d)$, which quadratically depends on $N$.
	
}
{\begin{remark}
		Stein matrix $\boldsymbol{K}_{\pi}$ can be efficiently constructed using simple matrix operation, since $\left\{\nabla V(x_i)\right\}_{i=1}^N$ have already been computed in the \algname{SVGD} update.
	\end{remark}
	
	\begin{remark}
		In \Cref{alg:betasvgd}, we replace $\left(\nicefrac{\pi}{\rho_t}\right)^{\beta}(x_i)$ by $\left(\max\left(N\hat{\boldsymbol{w}}_i,\tau\right)\right)^{\beta}$, here $\tau$ is a small positive number to separate 
		$N\hat{\boldsymbol{w}}_i$ from $0$. As explained in \Cref{sec:3.1}, the benefits of  $\left(N\hat{\boldsymbol{w}}\right)^{\beta}$ are twofold: it accelerates points with small weights and stabilizes  points with big weights, and these two advantages are observed in our experiments in \Cref{apdx:exp}.
	\end{remark}
	
}

\begin{algorithm}[h!]\footnotesize
	\caption{$\beta$-Stein Variational Gradient Descent~(\algname{$\beta$-SVGD})}\label{alg:betasvgd}
	\begin{algorithmic}[1]
		\State {\bfseries Input:} A set of initial particles  $(x^0_i)_{i=1}^N$, initial importance weight $\omega_i=\nicefrac{1}{N}$, iteration number $n$ for \algname{$\beta$-SVGD} update, iteration number $m$ for mirror descent update and inner loop number $g$.
		\For{$l=0,1,\ldots,n$}
		\If{$l\operatorname{mod}g\equiv 0$}
		\State $\omega_i^0=\omega_i, \quad i=1,2,\ldots,N$
		\For{$s=0,1,\ldots,m$}
		\State Update $\left\{\omega_i^{s+1}\right\}_{i=1}^N$ by \eqref{eq:mirror} with step-size $r$
		\EndFor
		\State Set $\omega_i=\omega_i^{m+1},\quad i=1,2,\ldots,N$
		\EndIf
		\State Update particles with step-size $\gamma$ and small gap $\tau$: $x^{l+1}_i\leftarrow x^l_i+\gamma\left(\max\left(N\omega_i,\tau\right)\right)^{\beta}\sum_{j=1}^N\left[-k(x_i^l,x_j^l)\nabla_{x_j^l} V(x_j^l)+\nabla_{x_j^l}k(x^l_i,x^l_j)\right],\quad i=1,\ldots,N$
		\EndFor
		\State {\bfseries Return:} Particles $(x^{n+1}_i)_{i=1}^N$.
	\end{algorithmic}
\end{algorithm}

\subsection{Non-asymptotic analysis for \algname{$\beta$-SVGD}}
In this section, we study the convergence of the population limit \algname{$\beta$-SVGD}, that is
\begin{equation}
	x_{n+1}=x_n-\gamma\left[\left(\frac{\pi}{\rho_n}\right)^{\beta}(x_n)\wedge M\right]\int k(x_n,y)\nabla\log\left(\frac{\rho_n}{\pi}\right)(y)d\rho_n(y),
\end{equation}
where  $$\left(\frac{\pi}{\rho_n}\right)^{\beta}(x_n)\wedge M=\lim_{N\to\infty}\left(\max\left(N\omega_i,\tau\right)\right)^{\beta}$$ and $M:=\frac{1}{\tau^{\beta}}$.

Specifically, we establish a descent lemma for it. The derivation of the descent lemma is based on several assumptions. 

The first assumption postulates $L$-smoothness of $V$; this is typically assumed in the study of optimization algorithms, Langevin algorithms and \algname{SVGD}.
\begin{assumption}[$L$-smoothness]\label{asp:4}
	The potential function $V$ of the target distribution $\pi\propto e^{-V}$ is $L$-smooth; that is, \begin{equation*}
		\norm{\nabla^2 V}_{op}\leq L.
	\end{equation*}
\end{assumption}
Our  second assumption postulates two bounds involving the reproducing kernel $k(\cdot,\cdot)$, and is also common when studying \algname{SVGD}; see \citep{liu2017stein,korba2020non,salim2021complexity,sun2022convergence}.
\begin{assumption}\label{asp:2}
	Kernel $k$ is continuously differentiable and there exists $B>0$ such that    $\|k(x, .)\|_{\mathcal{H}_{0}} \leq B$ and 
	\begin{equation*}
		\squeeze	\left\|\nabla_{x} k(x, .)\right\|_{\mathcal{H}}^2=\sum \limits_{i=1}^{d}\left\|\partial_{x_{i}} k\left(x, .\right)\right\|_{\mathcal{H}_{0}}^{2} \leq B^2, \qquad \forall x \in \R^d.
	\end{equation*}
\end{assumption}
By the reproducing property~\myeqref{eq:reproduce}, this is equivalent to $k(x,x)\leq B^2$ and $\sum_{i=1}^d\partial_{x_i}\partial_{y_i}k(x,y)\mid_{y=x}\leq B^2$ for any $x\in \R^d$,  and this is easily satisfied by kernel of the form $k(x,y)=f(x-y)$, where $f$ is some smooth function at point $0$.

The third assumption was already used by \cite{liu2017stein}, and was later  replaced by \cite{salim2021complexity} it with a Talagrand inequality~(Wasserstein distance can be upper bounded by $\operatorname{KL}$-divergence) which depends on $\pi$ only. However, \algname{$\beta$-SVGD} reduces the R\'enyi divergence instead of the $\operatorname{KL}$-divergence. Since we do not have a comparable inequality for the R\'enyi divergence, we are forced to adopt the one from \citep{liu2017stein} here.
\begin{assumption}\label{asp:3}
	There exists $C>0$ such that $\sqrt{I_{Stein}\left(\rho_n\mid\pi\right)}\leq C$ for all $n=0,1,\ldots,N$.
\end{assumption}
In the proof of the descent lemma, the next two assumptions help us deal with the extra term $\left(\nicefrac{\pi}{\rho_n}\right)^{\beta}$. Note that the fourth assumption is very weak. In fact, as long as $Z_n(x,y)\rho_n(x)\rho_n(y)$
is integrable on $\R^d\times\R^d$, then by the monotone convergence theorem, the truncating number $M_{\rho_n}(\delta)$ is always attainable since $\left(\nicefrac{\rho_n}{\pi}\right)^{\beta}\left[ \left(\nicefrac{\pi}{\rho_n}\right)^{\beta}\wedge M\right]$ is non-decreasing and converges point-wise to $1$ as $M\to+\infty$.

\begin{assumption}\label{asp:1}
	For any small $\delta>0$, we can find $M_{\rho_n}(\delta)>0$ such that 
	\begin{equation}
		\squeeze	\left|I_{Stein}\left(\rho_n\mid\pi\right)-\iint \left(\frac{\rho_n}{\pi}\right)^{\beta}(x) \left[\left(\frac{\pi}{\rho_n}\right)^{\beta}(x)\wedge M_{\rho_n}(\delta)\right] Z_n(x,y) \;d\rho_n(x) \;d\rho_n(y)\right|\leq\delta,
	\end{equation}
	where  $Z_n(x,y) := k(x,y)\inner{\nabla\log\left(\frac{\rho_n}{\pi}\right)(x)}{\nabla\log\left(\frac{\rho_n}{\pi}\right)(y)}$.	
\end{assumption}

Our fifth and last assumption is of a technical nature, and helps us bound $\norm{\nabla_x \left(\nicefrac{\pi}{\rho_n}\right)^{\beta}(x)\left(\int k(x,y)\nabla\log(\frac{\rho_n}{\pi})(y)d\rho_n(y)\right)^{\top} }_F$. It is also relatively weak, and achievable for example when the potential function of $\rho_n$ does not fluctuate wildly.

\begin{assumption}\label{asp:5}
	$\norm{\nabla \left(\frac{\pi}{\rho_n}\right)^{\beta}}\leq \cfive$ in the region 
	$\left\{x: \left(\frac{\pi}{\rho_n}\right)^{\beta}(x)\leq M_{\rho_n}(\delta)\right\}$.
\end{assumption}

Though Assumptions~\ref{asp:3}, \ref{asp:1} and \ref{asp:5} are relatively reasonable, as we stated, we do not know how to estimate constants $C$, $M_{\rho_n}(\delta)$ and $C_{\rho_n}(\delta)$ beforehand.

With all this preparation, we can now formulate our descent lemma for the population limit \algname{$\beta$-SVGD} when $\beta\in (-1,0)$. The proof can be found in \Cref{apdx:lemmas}. 
\begin{proposition}[Descent Lemma]\label{thm:main1}
	Suppose $\beta\in (-1,0)$, $I_{Stein}\left(\rho_n\mid\pi\right)\geq 2\delta$ and Assumptions \ref{asp:4}, \ref{asp:2}, \ref{asp:1} and \ref{asp:5} hold. Choosing 
	\begin{equation}
		\begin{cases}
			0< \gamma\leq \frac{1}{6\left(\cfive+M_{\rho_n}(\delta)\right)BI_{Stein}\left(\rho_n\mid\pi\right)^{\frac{1}{2}}}\\
			0<\gamma \leq \frac{2(\beta+1)\left(I_{Stein}\left(\rho_n\mid\pi\right)-\delta\right)}{B^2I_{Stein}\left(\rho_n\mid\pi\right)\left(LM_{\rho_n}(\delta)^2+10(\cfive+M_{\rho_n}(\delta))^2\right)}\\
			0<\gamma\leq\frac{\beta+1}{B^2\left(LM_{\rho_n}\left(\delta\right)^2+10(\cfive+M_{\rho_n}\left(\delta\right))^2\right)}
		\end{cases},
	\end{equation}
we have the descent property
	\begin{equation}\label{eq:descentlemma}
		\begin{aligned}
			\squeeze		e^{\beta\mathrm{D}_{\beta+1}\left(\rho_{n+1}\mid\pi\right)}-e^{\beta\mathrm{D}_{\beta+1}\left(\rho_{n}\mid\pi\right)}
			\geq -\beta(\beta+1)\gamma\left(\frac{1}{2}I_{Stein}\left(\rho_n\mid\pi\right)-\delta\right).
		\end{aligned}
	\end{equation}
\end{proposition}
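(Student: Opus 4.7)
The plan is to combine the push-forward change-of-variables formula with a Taylor expansion of the log density ratio in $\gamma$ and an integration by parts that produces $I_{Stein}(\rho_n\mid\pi)$ via Assumption~\ref{asp:1}. Writing the update as $\rho_{n+1}=T_\#\rho_n$ with $T(x)=x-\gamma\phi(x)$, $\phi(x):=w(x)g_{\rho_n}(x)$, $w(x):=(\pi/\rho_n)^\beta(x)\wedge M$ and $g_{\rho_n}(x):=\int k(x,y)\nabla\log(\rho_n/\pi)(y)\,d\rho_n(y)$, the standard RKHS estimates $\norm{g_{\rho_n}}_\infty\le B\sqrt{I_{Stein}(\rho_n\mid\pi)}$ and $\norm{Dg_{\rho_n}}_F\le B\sqrt{I_{Stein}(\rho_n\mid\pi)}$ coming from Assumption~\ref{asp:2} combine with Assumption~\ref{asp:5} (and $\nabla w\equiv 0$ in the truncated region) to give $\norm{D\phi}_F\le (C_{\rho_n}(\delta)+M)B\sqrt{I_{Stein}(\rho_n\mid\pi)}$. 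The first step-size condition then forces $\norm{\gamma D\phi}_{op}\le 1/6$, so $T$ is a $C^1$-diffeomorphism and the push-forward identity yields
\begin{equation*}
e^{\beta\mathrm{D}_{\beta+1}(\rho_{n+1}\mid\pi)}=\int\left(\frac{\rho_n}{\pi}\right)^\beta(x)\,e^{\beta F(x)}\,d\rho_n(x),\qquad F(x):=V(T(x))-V(x)-\log|\det DT(x)|.
\end{equation*}

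The next step is to Taylor-expand $F$ to first order in $\gamma$: Assumption~\ref{asp:4} ($L$-smoothness) bounds $|V(T(x))-V(x)+\gamma\langle\nabla V(x),\phi(x)\rangle|$ by $\tfrac{L\gamma^2}{2}\norm{\phi(x)}^2$, and the expansion $\log\det(I-\gamma D\phi)=-\gamma\operatorname{tr}(D\phi)+O(\gamma^2\norm{D\phi}_F^2)$, valid for $\norm{\gamma D\phi}_{op}\le 1/6$, handles the log-determinant. Writing $F=\gamma[\operatorname{tr}(D\phi)-\langle\nabla V,\phi\rangle]+R$, one obtains $\norm{R}_\infty\le\tfrac{1}{2}\gamma^2 B^2 I_{Stein}(\rho_n\mid\pi)\bigl(LM^2+10(C_{\rho_n}(\delta)+M)^2\bigr)$. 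The elementary inequality $e^u\ge 1+u$ then gives the one-sided estimate
\begin{equation*}
e^{\beta\mathrm{D}_{\beta+1}(\rho_{n+1}\mid\pi)}-e^{\beta\mathrm{D}_{\beta+1}(\rho_n\mid\pi)}\ge\beta\int\left(\frac{\rho_n}{\pi}\right)^\beta F\,d\rho_n,
\end{equation*}
which sidesteps the second-order Taylor remainder entirely. Rewriting $\operatorname{tr}(D\phi)-\langle\nabla V,\phi\rangle=\pi^{-1}\operatorname{div}(\pi\phi)$ and using $(\rho_n/\pi)^\beta\rho_n/\pi=(\rho_n/\pi)^{\beta+1}$, one integration by parts produces
\begin{equation*}
\int\left(\frac{\rho_n}{\pi}\right)^\beta[\operatorname{tr}(D\phi)-\langle\nabla V,\phi\rangle]\,d\rho_n=-(\beta+1)\iint\left(\frac{\rho_n}{\pi}\right)^\beta(x)w(x)Z_n(x,y)\,d\rho_n(x)\,d\rho_n(y),
\end{equation*}
whose deviation from $-(\beta+1)I_{Stein}(\rho_n\mid\pi)$ is at most $(\beta+1)\delta$ by Assumption~\ref{asp:1}. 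Since $-\beta(\beta+1)>0$ for $\beta\in(-1,0)$, multiplying by $\beta\gamma$ lower-bounds the leading contribution by $-\beta(\beta+1)\gamma(I_{Stein}(\rho_n\mid\pi)-\delta)$.

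For the remainder, the bound $|\beta\int(\rho_n/\pi)^\beta R\,d\rho_n|\le|\beta|\norm{R}_\infty\,e^{\beta\mathrm{D}_{\beta+1}(\rho_n\mid\pi)}\le|\beta|\norm{R}_\infty$ (using $e^{\beta\mathrm{D}_{\beta+1}(\rho_n\mid\pi)}\le 1$ for $\beta<0$) combined with step-size conditions 2 and 3 gives $|\beta|\norm{R}_\infty\le\tfrac{1}{2}|\beta|(\beta+1)\gamma I_{Stein}(\rho_n\mid\pi)$, leaving the final lower bound $-\beta(\beta+1)\gamma\bigl(\tfrac{1}{2}I_{Stein}(\rho_n\mid\pi)-\delta\bigr)$. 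The main obstacles I anticipate are: (i) justifying the boundary contribution in the integration by parts vanishes, which requires the implicit tail-decay $(\rho_n/\pi)^{\beta+1}\pi\phi\to 0$ at infinity; (ii) bookkeeping the exact constant in the log-determinant Taylor remainder so that it combines cleanly with the $L$-smoothness remainder to match the factor $LM^2+10(C_{\rho_n}(\delta)+M)^2$ appearing in the step-size conditions; and (iii) the careful interplay between Assumptions~\ref{asp:2}, \ref{asp:1} and \ref{asp:5}, since $w$ is continuous but non-smooth at the truncation threshold $M_{\rho_n}(\delta)$ so the derivative bound must be interpreted piecewise.
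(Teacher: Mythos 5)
Your proof is correct and reaches the stated bound exactly, and its skeleton coincides with the paper's: rewrite $e^{\beta\mathrm{D}_{\beta+1}(\rho_{n+1}\mid\pi)}$ via the push-forward identity, split the log-ratio into the potential increment and the log-determinant, control both by $L$-smoothness and a quantitative $\log\det(I+\epsilon B)$ expansion (the paper's Lemma~2, valid under your first step-size condition), and convert the first-order term into $-(\beta+1)\iint(\rho_n/\pi)^\beta\,[(\pi/\rho_n)^\beta\wedge M]\,Z_n$ by integration by parts, which Assumption~\ref{asp:1} relates to $I_{Stein}(\rho_n\mid\pi)$ up to $\delta$. The one genuine divergence is how you linearize the exponential: the paper first applies Jensen's inequality to the probability measure proportional to $(\rho_n/\pi)^\beta\rho_n$, then needs its second step-size condition to make the resulting exponent non-negative before invoking $e^x\geq 1+x$; you instead apply $e^u\geq 1+u$ pointwise under the integral, which is both simpler and renders that second condition superfluous --- only conditions~1 and~3 are actually used in your argument. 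Your packaging of the integration by parts through $\operatorname{tr}(\operatorname{D}\phi)-\inner{\nabla V}{\phi}=\pi^{-1}\operatorname{div}(\pi\phi)$ is also cleaner, and your signs are the correct ones: the first-order term really is $-\inner{\nabla V}{\phi}+\operatorname{div}(\phi)$, whereas the paper writes $V(x)-V(\phi_n(x))$ for $\log\pi(x)-\log\pi(\phi_n(x))$ and then performs the term-$III$ computation as if the sign were flipped back, two typos that cancel. The obstacles you flag (vanishing boundary terms, constant bookkeeping against $LM^2+10(\cfive+M)^2$, the piecewise reading of $\nabla[(\pi/\rho_n)^\beta\wedge M]$) are real but are handled no more rigorously in the paper itself, so they do not constitute a gap.
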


\Cref{thm:main1} contains the descent lemma for the population limit \algname{SVGD}~\cite{liu2017stein}. Actually, let $\beta$ and $\delta$ approach to $0$, the descent lemma for the population limit \algname{SVGD} will be derived by L'Hospital rule. When $\beta>0$, we also have \Cref{eq:descentlemma}, however due to the sign change of $-\beta$, \Cref{eq:descentlemma} can not guarantee $\mathrm{D}_{\beta+1}\left(\rho_{n+1}\mid\pi\right)<\mathrm{D}_{\beta+1}\left(\rho_{n}\mid\pi\right)$ anymore~(for an asymptotic analysis, please refer to \Cref{apdx:mis}). 
\begin{remark}
	The lack of a descent lemma for \algname{$\beta$-SVGD} when $\beta>0$ is not a great loss for us, as explained in \Cref{sec:3.1}, negative $\beta$ is preferable in the implementation of \algname{$\beta$-SVGD}. One can see from our experiments that \algname{$\beta$-SVGD} with negative $\beta$ performs much better than the one with positive $\beta$, this verifies our theory in \Cref{sec:3.1}.
\end{remark} 
The next corollary is a discrete time version of \Cref{thm:betaflow}. Letting $M_{\rho_n}(\varepsilon)$ and $C_{\rho_n}(\varepsilon)$ have consistent upper bound is reasonable since intuitively $\rho_n$ will approach $\pi$, though we can not verify this beforehand.
\begin{corollary}\label{cor:descentlemmacor1}In \Cref{thm:main1}, choose $\delta=\varepsilon$ and suppose Assumptions \ref{asp:4}, \ref{asp:2}, \ref{asp:3}, \ref{asp:1} and \ref{asp:5} hold with uniformly bounded $M_{\rho_n}\left(\varepsilon\right)$ and $C_{\rho_n}\left(\varepsilon\right)$, so that $\gamma$ is uniformly lower bounded. Then we have at most 
	\begin{equation}
		\squeeze	N=\Omega\left(-\frac{2}{\beta(\beta+1)\varepsilon\gamma}\right)
	\end{equation}
	iterations to achieve $\min_{i\in\{0,1,\ldots,N\}}I_{Stein}\left(\rho_i\mid\pi\right)\leq 3\varepsilon$.
\end{corollary}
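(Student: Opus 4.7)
The plan is a proof by contradiction combined with a telescoping argument applied to the descent inequality of Proposition~\ref{thm:main1}. Suppose, on the contrary, that $I_{Stein}(\rho_i\mid\pi) > 3\varepsilon$ for every $i = 0, 1, \ldots, N-1$. Since $3\varepsilon > 2\varepsilon = 2\delta$, the hypothesis $I_{Stein}(\rho_n\mid\pi) \geq 2\delta$ of Proposition~\ref{thm:main1} is satisfied at every such step. The uniform boundedness hypotheses on $M_{\rho_n}(\varepsilon)$ and $C_{\rho_n}(\varepsilon)$, together with the constants $L$, $B$, $C$ from Assumptions~\ref{asp:4},~\ref{asp:2},~\ref{asp:3}, guarantee that the three step-size bounds in Proposition~\ref{thm:main1} can all be met by a single positive $\gamma$ that works at every iteration.

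Setting $a_n := e^{\beta \mathrm{D}_{\beta+1}(\rho_n\mid\pi)}$ and noting that $\tfrac{1}{2}I_{Stein}(\rho_n\mid\pi) - \varepsilon > \varepsilon/2$ under the contradiction hypothesis, Proposition~\ref{thm:main1} applied at each step gives
\begin{equation*}
a_{n+1} - a_n \;\geq\; -\beta(\beta+1)\,\gamma\cdot\tfrac{\varepsilon}{2} \;>\; 0,
\end{equation*}
the strict positivity coming from $\beta(\beta+1) < 0$ for $\beta \in (-1, 0)$. Telescoping over $n = 0, \ldots, N-1$ and using $a_N \leq 1$ (because $\mathrm{D}_{\beta+1}(\rho_N\mid\pi) \geq 0$ and $\beta < 0$) together with $a_0 \geq 0$ yields
\begin{equation*}
1 \;\geq\; a_N - a_0 \;\geq\; N\cdot\bigl(-\beta(\beta+1)\bigr)\,\gamma\cdot\tfrac{\varepsilon}{2},
\end{equation*}
so that $N \leq -\tfrac{2}{\beta(\beta+1)\,\gamma\,\varepsilon}$. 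The contrapositive is exactly the claimed iteration complexity.

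The main obstacle is conceptual rather than computational: one must ensure that the step-size $\gamma$ supplied by Proposition~\ref{thm:main1} does not have to shrink along the trajectory, which is precisely why the extra hypothesis of the corollary (uniform boundedness of $M_{\rho_n}(\varepsilon)$ and $C_{\rho_n}(\varepsilon)$) is essential. Without it, the per-step descent would still hold but with $\gamma_n$ possibly tending to $0$, and no $O(1/\varepsilon)$ global bound could be extracted from summing the descents. Once the uniform $\gamma$ is in hand, the remainder of the proof is a one-line telescoping plus the elementary bound $a_N \leq 1$.
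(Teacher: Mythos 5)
Your proof is correct and follows essentially the same route as the paper's: both telescope the descent inequality of Proposition~\ref{thm:main1} over the iterations and exploit the fact that $e^{\beta\mathrm{D}_{\beta+1}(\cdot\mid\pi)}\in[0,1]$ for $\beta\in(-1,0)$ to bound the total accumulated descent by $1$. The paper phrases it by bounding the minimum summand rather than by explicit contradiction, but this is only a cosmetic difference.
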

{\begin{remark}
	We do not claim here that the complexity of \algname{$\beta$-SVGD} is independent of $\pi$ and $\rho_0$, since the upper bound for constant $M_{\rho_n}(\delta)$ and $C_{\rho_n}(\delta)$ are not determined.
\end{remark}}

\section{Conclusion}
We construct a family of continuous time flows called \algname{$\beta$-SVGD} flows on the space of probability distributions, when $\beta\in (-1,0)$, its convergence rate is independent of the initial distribution and the target distribution. Based on \algname{$\beta$-SVGD} flow, we design a family of weighted \algname{SVGD} called \algname{$\beta$-SVGD}. \algname{$\beta$-SVGD} has the similar computation complexity as \algname{SVGD}, and due to the Stein importance weight, it converges more quickly and is more stable than \algname{SVGD} in our experiments. 

{We use importance weight as a preconditioner in the update of \algname{SVGD}, and this idea can be applied to other kinds of sampling algorithms, such as Langevin algorithm. There have been a number of generalised Langevin type dynamics proposed, see \citet{garbuno2020interacting,li2019hessian}, however, the advantages of these dynamics over the original Langevin dynamics are unclear.  Inspired by \algname{$\beta$-SVGD} flow~(\ref{def:betaflow}) and \Cref{thm:betaflow}, we can easily prove a  similar theorem for the importance weighted Langevin dynamic with a stronger Fisher information criterion. We left this for future study.}

\clearpage
\bibliography{iclr2023_conference}
\bibliographystyle{iclr2023_conference}

\appendix
\appendix 
\section{Calculus}\label{apdx:calculus}
This section is devoted to provide rigorous verification for several claims in the main paper, these results are already known to readers who are familiar with R\'enyi divergence.
We first calculate the Wasserstein gradient flow of R\'enyi divergence. Let $\rho_t$ satisfies 
\begin{align*}
	\frac{\partial\rho_t}{\partial t}+\operatorname{div}\left(\rho_tv_t\right)=0, 
\end{align*}
for some vector field $v_t$ on $\R^d$, then when $\alpha\in (0,1)\cup(1,\infty)$, we have 
\begin{align*}
	\frac{d}{dt}\Renyi\left(\rho_t\mid\pi\right)&=\frac{d}{dt}\frac{1}{\alpha-1}\log\left(\int\left(\frac{\rho_t}{\pi}\right)^{\alpha-1}(x)d\rho_t(x)\right)\\
	&=\frac{1}{\alpha-1}\frac{\int\frac{d}{dt}\left(\frac{\rho_t}{\pi}\right)^{\alpha}(x)d\pi(x)}{\int\left(\frac{\rho_t}{\pi}\right)^{\alpha-1}(x)d\rho_t(x)}\\
	&=\frac{\alpha}{\alpha-1}\frac{\int\left(\frac{\rho_t}{\pi}\right)^{\alpha-1}(x)\frac{\partial\rho_t}{\partial t}(x)dx}{\int\left(\frac{\rho_t}{\pi}\right)^{\alpha-1}(x)d\rho_t(x)}\\
	&=-\frac{\alpha}{\alpha-1}\frac{\int\left(\frac{\rho_t}{\pi}\right)^{\alpha-1}(x)\operatorname{div}\left(\rho_tv_t\right)(x)dx}{\int\left(\frac{\rho_t}{\pi}\right)^{\alpha-1}(x)d\rho_t(x)}\\
	&=\frac{\alpha}{\alpha-1}\frac{\int\inner{\nabla\left(\frac{\rho_t}{\pi}\right)^{\alpha-1}(x)}{v_t(x)}d\rho_t(x)}{\int\left(\frac{\rho_t}{\pi}\right)^{\alpha-1}(x)d\rho_t(x)}\\
	&=\alpha\frac{\int\inner{\left(\frac{\rho_t}{\pi}\right)^{\alpha-1}(x)\nabla\log\left(\frac{\rho_t}{\pi}\right)(x)}{v_t(x)}d\rho_t(x)}{\int\left(\frac{\rho_t}{\pi}\right)^{\alpha-1}(x)d\rho_t(x)}\\
	&=\inner{\frac{\alpha\left(\frac{\rho_t}{\pi}\right)^{\alpha-1}\nabla\log(\frac{\rho_t}{\pi})}{\int\left(\frac{\rho_t}{\pi}\right)^{\alpha-1}d\rho_t}}{v_t}_{\rho_t}.
\end{align*}
When $\alpha=1$, we have
\begin{align*}
	\frac{d}{dt}\KL\left(\rho_t\mid\pi\right)&=\frac{d}{dt}\int\log\left(\frac{\rho_t}{\pi}\right)(x)d\rho_t(x)\\
	&=\int\frac{d}{dt}\left\{\frac{\rho_t}{\pi}(x)\log\left(\frac{\rho_t}{\pi}\right)(x)\right\}d\pi(x)\\
	&=\int \left(1+\log\left(\frac{\rho_t}{\pi}\right)(x)\right)\frac{\partial\rho_t}{\partial t}(x)dx\\
	&=-\int \left(1+\log\left(\frac{\rho_t}{\pi}\right)(x)\right)\operatorname{div}\left(\rho_tv_t\right)(x)dx\\
	&=\int\inner{\nabla\log\left(\frac{\rho_t}{\pi}\right)(x)}{v_t(x)}d\rho_t(x)\\
	&=\inner{\nabla\log\left(\frac{\rho_t}{\pi}\right)}{v_t}_{\rho_t}.
\end{align*}
The Wasserstein gradient of the reverse KL-divergence:
\begin{equation*}
	\begin{aligned}
		\frac{d}{dt}\KL\left(\pi\mid\rho_t\right)&:=\frac{d}{dt}\int\log(\frac{\pi}{\rho_t})(x)d\pi(x)\\
		&=-\int \frac{\frac{\partial\rho_t}{\partial t}}{\rho_t}(x)d\pi(x)\\
		&=\int \operatorname{div}\left(\rho_tv_t\right)(x)\frac{\pi}{\rho_t}(x)dx\\
		&=\int \inner{-\nabla\frac{\pi}{\rho_t}(x)}{v_t(x)}d\rho_t(x)\\
		&=\inner{-\nabla\frac{\pi}{\rho_t}}{v_t}_{\rho_t},
	\end{aligned}
\end{equation*}
so it is $-\nabla\frac{\pi}{\rho_t}$.

Next, we verify that $\Renyi\left(\rho\mid\pi\right)\geq 0$.
For $\alpha>1$, we have
\begin{equation*}
	\int\left(\frac{\rho}{\pi}\right)^{\alpha-1}(x)d\rho(x)=\int\left(\frac{\rho}{\pi}\right)^{\alpha}(x)d\pi(x)\geq\left(\int\frac{\rho}{\pi}(x)d\pi(x)\right)^{\alpha}=1
\end{equation*}
by the convexity of function $t^{\alpha}$ for $t\geq 0$, so 
\begin{equation*}
	\Renyi\left(\rho\mid\pi\right)=\frac{1}{\alpha-1}\log\left(\int\left(\frac{\rho}{\pi}\right)^{\alpha-1}(x)d\rho(x)\right)\geq 0.
\end{equation*}
When $\alpha=1$, by the convexity of function $t\log(t)$ for $t\geq0$, we also have 
\begin{equation*}
	\KL\left(\rho\mid\pi\right)=\int\log\left(\frac{\rho_t}{\pi}\right)(x)d\rho(x)
	=\int\frac{\rho_t}{\pi}(x)\log\left(\frac{\rho_t}{\pi}\right)(x)d\pi(x)\geq 0.
\end{equation*}
When $\alpha\in (0,1)$, function $t^{\alpha}$ for $t\geq 0$ is concave, so we first have
\begin{equation*}
	\int\left(\frac{\rho}{\pi}\right)^{\alpha-1}(x)d\rho(x)=\int\left(\frac{\rho}{\pi}\right)^{\alpha}(x)d\pi(x)\leq\left(\int\frac{\rho}{\pi}(x)d\pi(x)\right)^{\alpha}=1,
\end{equation*}
finally
\begin{equation*}
	\Renyi\left(\rho\mid\pi\right)=\frac{1}{\alpha-1}\log\left(\int\left(\frac{\rho}{\pi}\right)^{\alpha-1}(x)d\rho(x)\right)\geq 0.
\end{equation*}
\section{Missing Proofs}\label{apdx:lemmas}

\begin{proof}[\textbf{proof of \Cref{thm:betaflow}}]A direct calculation yields
	\begin{equation}\label{eq:flowprove}
		\begin{aligned}
			\frac{d}{dt}\mathrm{D}_{\beta+1}\left(\rho_t\mid\pi\right)&=\inner{\frac{(\beta+1)\left(\frac{\rho_t}{\pi}\right)^{\beta}\nabla\log(\frac{\rho_t}{\pi})}{\int\left(\frac{\rho_t}{\pi}\right)^{\beta}d\rho_t}}{v_t^{\beta}}_{\rho_t}\quad // \text{refer to \Cref{apdx:calculus} for more calculation details}\\
			&=-\frac{\beta+1}{\int\left(\frac{\rho_t}{\pi}\right)^{\beta}d\rho_t}\iint k(x,y)\inner{\nabla\log(\frac{\rho_t}{\pi})(x)}{\nabla\log(\frac{\rho_t}{\pi})(y)}\left(\frac{\rho_t}{\pi}\right)^{\beta}\left(\frac{\pi}{\rho_t}\right)^{\beta}d\rho_t(x)d\rho_t(y)\\
			&=-(\beta+1)\frac{\iint k(x,y)\inner{\nabla\log(\frac{\rho_t}{\pi})(x)}{\nabla\log(\frac{\rho_t}{\pi})(y)}d\rho_{t}(x)d\rho_{t}(y)}{\int\left(\frac{\rho_t}{\pi}\right)^{\beta}d\rho_t}\leq 0,
		\end{aligned}
	\end{equation}
	which is equivalent to
	\begin{equation}\label{eq:ooo}
		\frac{d}{dt}e^{\beta\mathrm{D}_{\beta+1}\left(\rho_t\mid\pi\right)}= -\beta(\beta+1)I_{Stein}\left(\rho_t\mid\pi\right).
	\end{equation}
	Integrate the above equation for $t$ from $0$ to $T$,  after rearrangement then we will have
	\begin{equation*}
		\begin{aligned}
			\min_{t\in [0,T]}I_{Stein}\left(\rho_{t}\mid\pi\right)&\leq\frac{1}{T}\int_{0}^TI_{Stein}(\rho_t\mid\pi)dt\\ &\leq\frac{\left|e^{\beta\mathrm{D}_{\beta+1}\left(\rho_0\mid\pi\right)}-e^{\beta\mathrm{D}_{\beta+1}\left(\rho_T\mid\pi\right)}\right|}{T|\beta(\beta+1)|}.
		\end{aligned}
	\end{equation*}
	By \myeqref{eq:flowprove}, we know $\mathrm{D}_{\beta+1}\left(\rho_t\mid\pi\right)$ decreases along \algname{$\beta$-SVGD} flow for any $\beta\in (-1,\infty)$. For $\beta>0$, we have
	\begin{equation*}
		\frac{\left|e^{\beta\mathrm{D}_{\beta+1}\left(\rho_0\mid\pi\right)}-e^{\beta\mathrm{D}_{\beta+1}\left(\rho_T\mid\pi\right)}\right|}{T|\beta(\beta+1)|}\leq \frac{e^{\beta\mathrm{D}_{\beta+1}\left(\rho_0\mid\pi\right)}}{T\beta(\beta+1)}.
	\end{equation*}
	For $\beta=0$, we use L'Hospital rule and get 
	\begin{equation*}
		\lim_{\beta\to 0}	\frac{\left|e^{\beta\mathrm{D}_{\beta+1}\left(\rho_0\mid\pi\right)}-e^{\beta\mathrm{D}_{\beta+1}\left(\rho_T\mid\pi\right)}\right|}{T|\beta(\beta+1)|}=\frac{\KL(\rho_0\mid\pi)-\KL(\rho_T\mid\pi)}{T}\leq\frac{\KL(\rho_0\mid\pi)}{T}.
	\end{equation*}
	For $\beta\in (-1,0)$, we have $0\leq e^{\beta\mathrm{D}_{\beta+1}\left(\rho_0\mid\pi\right)}\leq e^{\beta\mathrm{D}_{\beta+1}\left(\rho_T\mid\pi\right)}\leq1$, so $\left|e^{\beta\mathrm{D}_{\beta+1}\left(\rho_0\mid\pi\right)}\leq e^{\beta\mathrm{D}_{\beta+1}\left(\rho_T\mid\pi\right)}\right|\leq 1$ and 
	\begin{equation*}
		\frac{\left|e^{\beta\mathrm{D}_{\beta+1}\left(\rho_0\mid\pi\right)}-e^{\beta\mathrm{D}_{\beta+1}\left(\rho_T\mid\pi\right)}\right|}{T|\beta(\beta+1)|}\leq -\frac{1}{T\beta(\beta+1)}.
	\end{equation*}
	Combine all the three cases, we finish the proof.
\end{proof}
\begin{proof}[\textbf{proof of \Cref{rmk:111}}]
	A similar calculation yields
	\begin{equation*}
		\frac{d}{dt}\int\left(\frac{\rho_t}{\pi}\right)^{\beta+1}(x)d\pi(x)=
		\frac{d}{dt}\int \left(\frac{\pi}{\rho_t}\right)^{-\beta}(x)d\rho_t(x)=-\beta(\beta+1)I_{Stein}\left(\rho_t\mid\pi\right)\leq 0,\quad\text{with} \beta<-1.
	\end{equation*}
	A rearrangement yields
	\begin{equation*}
		\begin{aligned}
			\min_{t\in [0,T]}I_{Stein}\left(\rho_{t}\mid\pi\right)\leq\frac{1}{T}\int_{0}^TI_{Stein}(\rho_t\mid\pi)dt&\leq\frac{\int \left(\frac{\pi}{\rho_0}\right)^{-\beta}(x)d\rho_0(x)-\int \left(\frac{\pi}{\rho_T}\right)^{-\beta}(x)d\rho_T(x)}{|T\beta(\beta+1)|}\\
			&\leq\frac{\int \left(\frac{\pi}{\rho_0}\right)^{-\beta}(x)d\rho_0(x)}{|T\beta(\beta+1)|}\\
			&=\frac{e^{(-\beta-1)\mathrm{D}_{-\beta}\left(\pi\mid\rho_0\right)}}{|T\beta(\beta+1)|},
		\end{aligned}
	\end{equation*}
\end{proof}

\begin{proof}[\textbf{proof of \Cref{lem:1}}]
	Let $g$ be bounded and $\int gd\pi=0$. Let $\epsilon$ be small enough such that $1+\epsilon g\geq 0$, so $\rho:=\pi(1+\epsilon g)$ is a probability distribution and $\rho\ll\pi$. We need first calculate $\KL(\rho\mid\pi)$.
	\begin{equation}
		\begin{aligned}
			\KL(\rho\mid\pi)&=\int\log(\frac{(1+\epsilon g)\pi}{\pi})(x)(1+\epsilon g)(x)d\pi(x)\\
			&=\int (1+\epsilon g)(x)\log(1+\epsilon g)(x)d\pi(x)\\
			&=\int (1+\epsilon g)(x)\left(\epsilon g(x)-\frac{1}{2}\epsilon^2|g|^2(x)\right)d\pi(x)+o(\epsilon^2)\\
			&=\frac{1}{2}\epsilon^2\int|g|^2(x)d\pi(x)+o(\epsilon^2),
		\end{aligned}
	\end{equation}
	in the last step, we used $\int gd\pi=0$. Now we calculate the right hand side of \ref{eq:sli},
	\begin{equation}
		\begin{aligned}
			I_{Stein}(\rho\mid\pi)&=\iint k(x,y)\inner{\nabla\log(\frac{\rho}{\pi})(x)}{\nabla\log(\frac{\rho}{\pi})(y)}d\rho(x)d\rho(y)\\
			&=\iint k(x,y)\inner{\nabla\frac{\rho}{\pi}(x)}{\nabla\frac{\rho}{\pi}(y)}d\pi(x)d\pi(y)\\
			&=\iint k(x,y)\inner{\nabla(1+\epsilon g)(x)}{\nabla(1+\epsilon g)(y)}d\pi(x)d\pi(y)\\
			&=\epsilon^2\iint k(x,y)\inner{\nabla g(x)}{\nabla g(y)}d\pi(x)d\pi(y).
		\end{aligned}
	\end{equation}
	Since we have \Cref{eq:sli}, so
	\begin{equation}
		\frac{1}{2}\epsilon^2\int|g|^2(x)d\pi(x)+o(\epsilon^2)\leq\frac{1}{2\lambda}\epsilon^2\iint k(x,y)\inner{\nabla g(x)}{\nabla g(y)}d\pi(x)d\pi(y),
	\end{equation}
	divide both side by $\epsilon^2$ and let $\epsilon\to 0$, we  have Stein Poincar{\'e} inequality
	\begin{equation}
		\int|g|^{2} d \pi\leq	\frac{1}{\lambda}\iint k(x, y)\inner{\nabla g(x)}{\nabla g(y)}  d \pi(x) d \pi(y).
	\end{equation}
	For general unbounded function $g$ with $\int gd\pi=0$, we can use bounded sequence to approximate it and will also have Stein Poincar{\'e} inequality \ref{eq:spi}
\end{proof}
\begin{proof}[\textbf{proof of \Cref{thm:flowone}}]
	Denoting ${\epsilon_t}^{2}=\int\left(\frac{\rho_{t}-\pi}{\pi}\right)^{2} d \pi$, $f_t=\frac{\rho_{t}-\pi}{{\epsilon_t}}$, then $\int f_t d x=0$, $\int \frac{f_t^{2}}{\pi} d x=1$, 
	$C_{t}:=\int\left(\frac{\rho_{t}}{\pi}\right)^{2} d \pi=1+{\epsilon_t}^{2}$. Thus

	\begin{align*}
		-\frac{d}{d t} \Renyii\left(\rho_{t} \mid \pi\right) &=2 \left<\nabla \log \left(\frac{\rho_{t}}{\pi}\right), v_{t} \right>_{C_{t}^{-1}\left(\frac{\rho_{t}}{\pi}\right)^{2} \pi} \\
		&=\frac{2}{1+{\epsilon_t}^{2}} \iint \inner{\nabla \log \left(\frac{\rho_{t}}{\pi}\right)(y)}{\nabla \log \left(\frac{\rho_{t}}{\pi}\right)(x)}\left(\frac{\rho_{t}}{\pi}\right)^{-1}(y)\left(\frac{\rho_{t}}{\pi}\right)^{2}(y) k(x, y) \left(\frac{\rho_{t}}{\pi}\right)(x) d \pi(x) \pi(y) \\
		&=\frac{2}{1+{\epsilon_t}^{2}} \iint k(x, y)\inner{\nabla\left(\frac{\rho_{t}}{\pi}\right)(x) }{\nabla\left(\frac{\rho_{t}}{\pi}\right)(y)}  d \pi(x) d \pi(y) \\
		&=\frac{2}{1+{\epsilon_t}^{2}} \iint k(x, y)\inner{\nabla\left(\frac{\rho_{t}}{\pi}-1\right)(x) }{\nabla\left(\frac{\rho_{t}}{\pi}-1\right)(y)}  d \pi(x) d \pi(y) \\
		&=\frac{2 {\epsilon_t}^{2}}{1+{\epsilon_t}^{2}} \iint k(x, y)\inner{\nabla\left(\frac{f_{t}}{\pi}\right)(x) }{\nabla\left(\frac{f_{t}}{\pi}\right)(y)}  d \pi(x) d \pi(y).
	\end{align*}
	By Stein Poincar{\'e} inequality, we have 
	\begin{equation*}
		- \iint k(x, y)\inner{\nabla\left(\frac{f_{t}}{\pi}\right)(x) }{\nabla\left(\frac{f_{t}}{\pi}\right)(y)}  d \pi(x) d \pi(y)\leq -\lambda \int \left|\frac{f_t}{\pi}\right|^2(x)d\pi(x),
	\end{equation*}
	so finally we have
	\begin{equation*}
		\begin{aligned}
			\frac{d \Renyii\left(\rho_{t} \mid \pi\right)}{d t}&=-\frac{2 {\epsilon_t}^{2}}{1+{\epsilon_t}^{2}} \iint k(x, y)\inner{\nabla\left(\frac{f_{t}}{\pi}\right)(x) }{\nabla\left(\frac{f_{t}}{\pi}\right)(y)}  d \pi(x) d \pi(y)\\
			&\leq-\frac{2 {\epsilon_t}^{2}}{1+{\epsilon_t}^{2}} \lambda \int\left|\frac{f_t}{\pi}\right|^{2}(x) d \pi(x)\\
			&=-\frac{2 \lambda {\epsilon_t}^{2}}{1+{\epsilon_t}^{2}}\\
			&=-2\lambda\frac{e^{\Renyii(\rho_{t}\mid\pi)}-1}{e^{\Renyii(\rho_{t}\mid\pi)}}\\
			&=-2 \lambda\left(1-e^{-\Renyii\left(\rho_{t} \mid \pi\right)}\right),
		\end{aligned}
	\end{equation*}
	which is equivalent to
	\begin{equation*}
		\frac{d}{dt}\log(e^{\Renyii\left(\rho_t\mid\pi\right)}-1)\leq -2\lambda.
	\end{equation*}
	So
	\begin{equation*}
		\begin{aligned}
			\Renyii\left(\rho_{t} \mid \pi\right) &\leq \log\left(1+\left(e^{\Renyii\left(\rho_0\mid\pi\right)}-1\right)e^{-2\lambda t}\right)\\
			&\leq\left(e^{\Renyii\left(\rho_0\mid\pi\right)}-1\right)e^{-2\lambda t}\\
			&=\frac{e^{\Renyii\left(\rho_0\mid\pi\right)}-1}{\Renyii\left(\rho_0\mid\pi\right)}\Renyii\left(\rho_0\mid\pi\right)e^{-2\lambda t}.
		\end{aligned}
	\end{equation*}
\end{proof}
\begin{proof}[\textbf{proof of \Cref{cor:renyicor}}]
	By \myeqref{eq:exponential2}, when $\alpha\in (0,2)$ we have
	\begin{equation}
		\begin{aligned}
			\Renyi\left(\rho_t\mid\pi\right)&\leq\Renyii\left(\rho_t\mid\pi\right)\\
			&\leq\frac{e^{\Renyii\left(\rho_{0} \mid \pi\right)}-1}{\Renyii\left(\rho_{0} \mid \pi\right)}\Renyii\left(\rho_{0} \mid \pi\right) e^{-2 \lambda t}\\
			&= \frac{e^{\Renyii\left(\rho_{0} \mid \pi\right)}-1}{\Renyii\left(\rho_{0} \mid \pi\right)}\frac{\Renyii\left(\rho_0\mid\pi\right)}{\Renyi\left(\rho_0\mid\pi\right)}{\Renyi\left(\rho_0\mid\pi\right)}e^{-2\lambda t}\\
			&=\frac{e^{\Renyii\left(\rho_{0} \mid \pi\right)}-1}{\Renyi\left(\rho_{0} \mid \pi\right)}{\Renyi\left(\rho_0\mid\pi\right)}e^{-2\lambda t}.
		\end{aligned}
	\end{equation}
\end{proof}
\begin{proof}[\textbf{proof of \Cref{thm:main1}}]\footnotesize
	For simplicity, we will denote $M:=M_{\rho_n}(\delta)$. Define $g_n(x)={\left(\frac{\pi}{\rho_n}\right)^{\beta}(x)\wedge M\int k(x,y)\nabla\log\left(\frac{\rho_n}{\pi}\right)(y)d\rho_n(y)}$,
	$\phi_n(x):=x-\gamma g_n(x)$ and $\rho_{n+1}={\phi_n}_{\#}\rho_n$.
	Then we have
	\begin{equation}
		\begin{aligned}
			e^{\beta\mathrm{D}_{\beta+1}\left(\rho_{n+1}\mid\pi\right)}-e^{\beta\mathrm{D}_{\beta+1}\left(\rho_{n}\mid\pi\right)}&=	e^{\beta\mathrm{D}_{\beta+1}\left(\rho_{n}\mid{\phi_n^{-1}}_{\#}\pi\right)}-e^{\beta\mathrm{D}_{\beta+1}\left(\rho_{n}\mid\pi\right)}\\
			&=\int\left(\frac{\rho_n}{{\phi_n^{-1}}_{\#}\pi}\right)^{\beta}(x)d\rho_n(x)-\int\left(\frac{\rho_n}{\pi}\right)^{\beta}(x)d\rho_n(x)\\
			&=\int\left(\frac{\rho_n}{\pi}\right)^{\beta}(x)\left(\left(\frac{\pi(x)}{{\phi_n^{-1}}_{\#}\pi(x)}\right)^{\beta}-1\right)d\rho_n(x).
		\end{aligned}
	\end{equation}
	We need to upper bound term $I$ and term $II$ in the next equation,
	\begin{equation}
		\left(\frac{\pi(x)}{{\phi_n^{-1}}_{\#}\pi(x)}\right)^{\beta}=\left(\frac{\pi(x)}{\pi(\phi_n(x))\left|\det\operatorname{D}\phi_n\right|(x)}\right)^{\beta}=\exp\left(	\beta\Big(\underbrace{\log(\pi)(x)-\log(\pi)(\phi_n(x))}_{I}\underbrace{-\log(\left|\det\operatorname{D}\phi_n\right|)(x)}_{II}\Big)\right).
	\end{equation}
	For term $I$, we have that 
	\begin{equation}
		\begin{aligned}
			I&=\log(\pi)(x)-\log(\pi)(\phi_n(x))\\
			&=V(x)-V(x-\gamma g_n(x))\\
			&=\gamma\inner{\nabla V(x)}{g_n(x)}-\int_{0}^{\gamma}(t-\gamma)\inner{g_n(x)}{\nabla^2V(x-tg_n(x))g_n(x)}dt\\
			&\leq \gamma\inner{\nabla V(x)}{g_n(x)}-L\int_{0}^{\gamma}(t-\gamma)\normsq{g_n(x)}dt\\
			&=\gamma\inner{\nabla V(x)}{g_n(x)}+\frac{L\gamma^2}{2}\normsq{g_n(x)}.
		\end{aligned}
	\end{equation}
	For term $II$, we have by \Cref{lem:2} that if $\gamma$ satisfies $0 \leq \gamma<\frac{1}{6\left(\cfive+M\right)BI_{Stein}\left(\rho_n\mid\pi\right)^{\frac{1}{2}}}\leq\frac{1}{6\max_{x\in\R^d}\norm{B(x)}_F}$~(see \Cref{eq:uiui}) with $B(x)=\nabla g_n(x)$, then
	\begin{equation}
		II\leq \gamma\operatorname{div}\left(g_n(x)\right)+5\gamma^2\norm{\nabla g_n(x)}_{F}^2.
	\end{equation}
	So all in all, we have 
	\begin{equation}
		\beta\left(I+II\right)\geq \beta\gamma \left(\inner{\nabla V(x)}{g_n(x)}+\operatorname{div}\left(g_n(x)\right)+\gamma\left(\frac{L}{2}\normsq{g_n(x)}+5\normsq{\nabla g_n(x)}_{F}\right)\right).
	\end{equation}
	
	We apply Jensen inequality $\psi\left(\E\left[f(X)\right]\right)\leq\E\left[\psi\left(f(X)\right)\right]$ with $\psi(x)=e^{x}-1$ convex and $f(x)=	\beta\Big(\log(\pi)(x)-\log(\pi)(\phi_n(x))-\log(\left|\det\operatorname{D}\phi_n\right|)(x)\Big)$, then we have when $\beta\in(-1,0)$ that
	\begin{equation}\hspace{0cm}
		\label{eq:tttt}
		\begin{aligned}
			&e^{\beta\mathrm{D}_{\beta+1}\left(\rho_{n+1}\mid\pi\right)}-e^{\beta\mathrm{D}_{\beta+1}\left(\rho_{n}\mid\pi\right)}\\
			&=\left(\int \left(\frac{\rho_n}{\pi}\right)^{\beta}(x)\rho_n(x)dx\right)\frac{\int\left(\left(\frac{\pi(x)}{{\phi_n^{-1}}_{\#}\pi(x)}\right)^{\beta}-1\right)\left(\frac{\rho_n}{\pi}\right)^{\beta}(x)\rho_n(x)dx}{\int \left(\frac{\rho_n}{\pi}\right)^{\beta}(x)\rho_n(x)dx}\\
			&=\left(\int \left(\frac{\rho_n}{\pi}\right)^{\beta}(x)\rho_n(x)dx\right)\frac{\int\left(\exp\left(	\beta\Big(\log(\pi)(x)-\log(\pi)(\phi_n(x))-\log(\left|\det\operatorname{D}\phi_n\right|)(x)\Big)\right)-1\right)\left(\frac{\rho_n}{\pi}\right)^{\beta}(x)\rho_n(x)dx}{\int \left(\frac{\rho_n}{\pi}\right)^{\beta}(x)\rho_n(x)dx}\\
			&\geq \left(\int \left(\frac{\rho_n}{\pi}\right)^{\beta}(x)\rho_n(x)dx\right)\left\{\exp\left(\frac{\int \beta\Big(\log(\pi)(x)-\log(\pi)(\phi_n(x))-\log(\left|\det\operatorname{D}\phi_n\right|)(x)\Big)\left(\frac{\rho_n}{\pi}\right)^{\beta}(x)d\rho_n(x)}{\int \left(\frac{\rho_n}{\pi}\right)^{\beta}(x)\rho_n(x)dx}\right)-1\right\}\\
			&\geq \left(\int \left(\frac{\rho_n}{\pi}\right)^{\beta}(x)\rho_n(x)dx\right)\exp\left(\frac{\int \beta\gamma \left(\inner{\nabla V(x)}{g_n(x)}+\operatorname{div}\left(g_n(x)\right)+\gamma\left(\frac{L}{2}\normsq{g_n(x)}+5\normsq{\nabla g_n(x)}_{F}\right)\right)\left(\frac{\rho_n}{\pi}\right)^{\beta}(x)d\rho_n(x)}{\int \left(\frac{\rho_n}{\pi}\right)^{\beta}(x)\rho_n(x)dx}\right)\\
			&\quad -\int \left(\frac{\rho_n}{\pi}\right)^{\beta}(x)\rho_n(x)dx\\
			&\geq\left(\int \left(\frac{\rho_n}{\pi}\right)^{\beta}(x)\rho_n(x)dx\right)\\
			&\quad\times \left\{\exp\left(\frac{\beta\gamma\int \left(\frac{\rho_n}{\pi}\right)^{\beta}(x)\Big(\inner{\nabla V(x)}{g_n(x)}+\operatorname{div}\left(g_n(x)\right)\Big)d\rho_n(x)}{\int \left(\frac{\rho_n}{\pi}\right)^{\beta}(x)\rho_n(x)dx}+\beta \gamma^2\max_{x\in\R^d}\left(\frac{L}{2}\normsq{g_n(x)}+5\normsq{\nabla g_n(x)}_{F}\right)\right)-1\right\}.
		\end{aligned}
	\end{equation}
	We need to calculate term $III:=\int \left(\frac{\rho_n}{\pi}\right)^{\beta}(x)\Big(\inner{\nabla V(x)}{g_n(x)}+\operatorname{div}\left(g_n(x)\right)\Big)d\rho_n(x)$, we have
	
	\begin{equation}\label{eq:termthreemore}
		\begin{aligned}
			III&=\int \left(\frac{\rho_n}{\pi}\right)^{\beta}(x)\inner{\nabla V(x)}{\left(\frac{\pi}{\rho_n}\right)^{\beta}(x)\wedge M\int k(x,y)\nabla\log\left(\frac{\rho_n}{\pi}\right)(y)d\rho_n(y)}d\rho_n(x)\\
			&\quad-\int \inner{\nabla\left\{\rho_n(x)\left(\frac{\rho_n}{\pi}\right)^{\beta}(x)\right\}}{\left(\frac{\pi}{\rho_n}\right)^{\beta}(x)\wedge M\int k(x,y)\nabla\log\left(\frac{\rho_n}{\pi}\right)(y)d\rho_n(y)}dx\\
			&=\int \left(\frac{\rho_n}{\pi}\right)^{\beta}(x)\left(\frac{\pi}{\rho_n}\right)^{\beta}(x)\wedge M\inner{\nabla V(x)}{\int k(x,y)\nabla\log\left(\frac{\rho_n}{\pi}\right)(y)d\rho_n(y)}d\rho_n(x)\\
			&\quad-\int \inner{\left(\frac{\rho_n}{\pi}\right)^{\beta}(x)\nabla\rho_n(x)+\beta\rho_n(x)\left(\frac{\rho_n}{\pi}\right)^{\beta}(x)\nabla\log(\frac{\rho_n}{\pi})(x)}{\left(\frac{\pi}{\rho_n}\right)^{\beta}(x)\wedge M\int k(x,y)\nabla\log\left(\frac{\rho_n}{\pi}\right)(y)d\rho_n(y)}dx\\
			&=\int \left(\frac{\rho_n}{\pi}\right)^{\beta}(x)\left(\frac{\pi}{\rho_n}\right)^{\beta}(x)\wedge M\inner{\nabla V(x)}{\int k(x,y)\nabla\log\left(\frac{\rho_n}{\pi}\right)(y)d\rho_n(y)}d\rho_n(x)\\&\quad-\int \left(\frac{\rho_n}{\pi}\right)^{\beta}(x)M^{-1}\vee\left(\frac{\pi}{\rho_n}\right)^{\beta}(x)\wedge M\inner{\nabla \log\left(\rho_n\right)(x)}{\int k(x,y)\nabla\log\left(\frac{\rho_n}{\pi}\right)(y)d\rho_n(y)}d\rho_n(x)
			\\
			&\quad-\beta \iint \left(\frac{\rho_n}{\pi}\right)^{\beta}(x) \left(\frac{\pi}{\rho_n}\right)^{\beta}\wedge Mk(x,y)\inner{\nabla\log\left(\frac{\rho_n}{\pi}\right)(x)}{\nabla\log\left(\frac{\rho_n}{\pi}\right)(y)}d\rho_n(x)d\rho_n(y)\\
			&=-(\beta+1)\iint \left(\frac{\rho_n}{\pi}\right)^{\beta}(x) \left(\frac{\pi}{\rho_n}\right)^{\beta}\wedge Mk(x,y)\inner{\nabla\log\left(\frac{\rho_n}{\pi}\right)(x)}{\nabla\log\left(\frac{\rho_n}{\pi}\right)(y)}d\rho_n(x)d\rho_n(y)\\
			&\leq -(\beta+1)\left(I_{Stein}\left(\rho_n\mid\pi\right)-\delta\right).
		\end{aligned}
	\end{equation}
	So combine \Cref{eq:tttt}, we have
	\begin{equation}
		\begin{aligned}
			&e^{\beta\mathrm{D}_{\beta+1}\left(\rho_{n+1}\mid\pi\right)}-e^{\beta\mathrm{D}_{\beta+1}\left(\rho_{n}\mid\pi\right)}\\&\geq \left(\int \left(\frac{\rho_n}{\pi}\right)^{\beta}(x)\rho_n(x)dx\right) \left\{\exp\left(\frac{-\beta(\beta+1)\gamma \left(I_{Stein}\left(\rho_n\mid\pi\right)-\delta\right)}{\int \left(\frac{\rho_n}{\pi}\right)^{\beta}(x)\rho_n(x)dx}+\beta \gamma^2\max_{x\in\R^d}\left(\frac{L}{2}\normsq{g_n(x)}+5\normsq{\nabla g_n(x)}_{F}\right)\right)-1\right\}.
		\end{aligned}
	\end{equation}
	
	Now, we need to bound $\max_{x\in\R^d}\frac{L}{2}\normsq{g_n(x)}+5\normsq{\nabla g_n(x)}_{F}$. First denote $s(x):=\int k(x,y)\nabla\log\left(\frac{\rho_n}{\pi}\right)(y)d\rho_n(y)$, and we have
	\begin{equation}
		\norm{s(x)}=\sqrt{\sum_{i=1}^d\normsq{s_i(x)}}=\sqrt{\sum_{i=1}^d\normsq{\inner{s_i(\cdot)}{k(x,\cdot)}_{\cH_0}}}\leq \sqrt{\sum_{i=1}^dB^2\norm{s_i}^2_{\cH_0}}=B\norm{s}_{\cH}=B I_{Stein}\left(\rho_n\mid\pi\right)^{\frac{1}{2}}
	\end{equation}
	and
	\begin{equation}
		\begin{aligned}
			\|\nabla s(x)\|_{F} &=\sqrt{ \sum_{i, j=1}^{d}\left|\frac{\partial s_i(x)}{\partial x_{j}}\right|^{2}}\\
			&=\sqrt{\sum_{i, j=1}^{d}\left\langle\partial_{x_{j}} k(x, .), s_i\right\rangle_{\mathcal{H}_{0}}} \\
			&\leq \sqrt{\sum_{i, j=1}^{d}\left\|\partial_{x_{j}} k(x, .)\right\|_{\mathcal{H}_{0}}^{2}\left\|s_{i}\right\|_{\mathcal{H}_{0}}^{2} }\\
			&=\sqrt{\|\nabla k(x, .)\|_{\mathcal{H}}^{2}\left\|s\right\|_{\mathcal{H}}^{2}} \\
			&\leq \sqrt{B^{2}\left\|s\right\|_{\mathcal{H}}^{2} }=B I_{Stein}\left(\rho_n\mid\pi\right)^{\frac{1}{2}}.
		\end{aligned}
	\end{equation}
	Then we have $\norm{g_n(x)}\leq M\norm{s(x)}\leq MBI_{Stein}\left(\rho_n\mid\pi\right)^{\frac{1}{2}}$ and
	\begin{equation}\label{eq:uiui}
		\begin{aligned}
			\norm{\nabla g_n(x)}_{F}&=\norm{\nabla\left(\frac{\pi}{\rho_n}\right)^{\beta}(x)s(x)^{\top}1_{\left(\frac{\pi}{\rho_n}\right)^{\beta}(x)\in [0,M]}(x)+\left(\frac{\pi}{\rho_n}\right)^{\beta}(x)\wedge M\nabla s(x)}_F\\
			&\leq \norm{\nabla\left(\frac{\pi}{\rho_n}\right)^{\beta}(x)s(x)^{\top}1_{\left(\frac{\pi}{\rho_n}\right)^{\beta}(x)\in [0,M]}(x)}_F+\norm{\left(\frac{\pi}{\rho_n}\right)^{\beta}(x)\wedge M\nabla s(x)}_F\\
			&\leq \cfive B I_{Stein}\left(\rho_n\mid\pi\right)^{\frac{1}{2}}+MB I_{Stein}\left(\rho_n\mid\pi\right)^{\frac{1}{2}}=(\cfive+M)B I_{Stein}\left(\rho_n\mid\pi\right)^{\frac{1}{2}}.
		\end{aligned}
	\end{equation}
	So we have 
	\begin{equation}
		\max_{x\in\R^d}\frac{L}{2}\normsq{g_n(x)}+5\normsq{\nabla g_n(x)}_{F}\leq \left(\frac{L}{2}M^2+5(\cfive+M)^2\right)B^2I_{Stein}\left(\rho_n\mid\pi\right)
	\end{equation}
	and 
	\begin{equation}
		\begin{aligned}
			&e^{\beta\mathrm{D}_{\beta+1}\left(\rho_{n+1}\mid\pi\right)}-e^{\beta\mathrm{D}_{\beta+1}\left(\rho_{n}\mid\pi\right)}\\&\geq \left(\int \left(\frac{\rho_n}{\pi}\right)^{\beta}(x)\rho_n(x)dx\right) \exp\left(\frac{-\beta(\beta+1)\gamma \left(I_{Stein}\left(\rho_n\mid\pi\right)-\delta\right)}{\int \left(\frac{\rho_n}{\pi}\right)^{\beta}(x)\rho_n(x)dx}+\beta \gamma^2B^2I_{Stein}\left(\rho_n\mid\pi\right)\left(\frac{L}{2}M^2+5(\cfive+M)^2\right)\right)\\
			&\quad -\int \left(\frac{\rho_n}{\pi}\right)^{\beta}(x)\rho_n(x)dx
		\end{aligned}
	\end{equation}
	
	Since $\int \left(\frac{\rho_n}{\pi}\right)^{\beta}d\rho_n(x)\leq 1$ when $\beta\in (-1,0)$, so set $\gamma\leq \frac{2(\beta+1)\left(I_{Stein}\left(\rho_n\mid\pi\right)-\delta\right)}{B^2I_{Stein}\left(\rho_n\mid\pi\right)\left(LM^2+10(\cfive+M)^2\right)}$, then we have $\frac{-\beta(\beta+1)\gamma \left(I_{Stein}\left(\rho_n\mid\pi\right)-\delta\right)}{\int \left(\frac{\rho_n}{\pi}\right)^{\beta}(x)\rho_n(x)dx}+\beta \gamma^2B^2I_{Stein}\left(\rho_n\mid\pi\right)\left(\frac{L}{2}M^2+5(\cfive+M)^2\right)\geq 0$. Finally we use $e^x\geq 1+x$ when $x\geq 0$ to get
	\begin{equation}
		\begin{aligned}
			e^{\beta\mathrm{D}_{\beta+1}\left(\rho_{n+1}\mid\pi\right)}-e^{\beta\mathrm{D}_{\beta+1}\left(\rho_{n}\mid\pi\right)}
			&\geq -\beta(\beta+1)\left(I_{Stein}\left(\rho_n\mid\pi\right)-\delta\right)\\
			&\quad+\beta \gamma^2B^2I_{Stein}\left(\rho_n\mid\pi\right)\left(\frac{L}{2}M^2+5(\cfive+M)^2\right)e^{\beta\rm D_{\beta+1}\left(\rho_n\mid\pi\right)}\\
			&\geq -\beta(\beta+1)\gamma\left(\frac{1}{2}I_{Stein}\left(\rho_n\mid\pi\right)-\delta\right),
		\end{aligned}
	\end{equation}
	the last line is because we choose $\gamma\leq \frac{\beta+1}{B^2\left(LM^2+10(\cfive+M)^2\right)}$.
\end{proof}

\begin{proof}[\textbf{proof of \Cref{cor:descentlemmacor1}}]
	\footnotesize
	Due to \Cref{thm:main1}, we have
	\begin{equation}\label{eq:betasmall}
		\	e^{\beta\mathrm{D}_{\beta+1}\left(\rho_{n+1}\mid\pi\right)}-e^{\beta\mathrm{D}_{\beta+1}\left(\rho_{n}\mid\pi\right)}
		\geq -\beta(\beta+1)\gamma\left(\frac{1}{2}I_{Stein}\left(\rho_n\mid\pi\right)-\varepsilon\right).
	\end{equation}
	Without loss of generality, we suppose $I_{Stein}\left(\rho_i\mid\pi\right)\geq 2\varepsilon$ for $i=0,1,\ldots,N$. We take summation of \Cref{eq:betasmall} for $i=0,1,\ldots,N$,
	\begin{equation}
		\begin{aligned}
			\min_{i\in\{0,1,\ldots,N\}}(\beta+1)\left(\frac{1}{2}I_{Stein}\left(\rho_i\mid\pi\right)-\varepsilon\right)&\leq\frac{e^{\beta\mathrm{D}_{\beta+1}\left(\rho_{N+1}\mid\pi\right)}-e^{\beta\mathrm{D}_{\beta+1}\left(\rho_{0}\mid\pi\right)}}{-N\beta\gamma}\\
			&\leq-\frac{1}{N\beta\gamma},
		\end{aligned}
	\end{equation}
	so
	\begin{equation}
		\min_{i\in\{0,1,\ldots,N\}}I_{Stein}\left(\rho_i\mid\pi\right)\leq -\frac{2}{N\beta(\beta+1)\gamma}+2\varepsilon,
	\end{equation}
	so when $N\geq-\frac{2}{\beta(\beta+1)\varepsilon\gamma}$ we have
	\begin{equation}
		\min_{i\in\{0,1,\ldots,N\}}I_{Stein}\left(\rho_i\mid\pi\right)\leq -\frac{2}{N\beta(\beta+1)\gamma}+2\varepsilon\leq 3\varepsilon.
	\end{equation}
\end{proof}

\section{Miscellaneous}\label{apdx:mis}

The following proposition is the asymptotic analysis for population limit \algname{$\beta$-SVGD} when $\beta>0$. 
\begin{proposition}\label{thm:main} Suppose $\beta>0$ and $I_{Stein}\left(\rho_n\mid\pi\right)\geq \delta$.
	Let Assumptions \ref{asp:4},\ref{asp:2},\ref{asp:1}  and \ref{asp:5} hold. Suppose \newline
	$\max_{x\in\R^d}\left|\inner{\nabla V(x)}{g_n(x)}+\operatorname{div}\left(g_n(x)\right)\right|\leq \cthree$. Choose 
	\begin{equation}
		0\leq \gamma\ll\min\left\{\frac{1}{\left(C_{\rho_n}(\delta)+M_{\rho_n}(\delta)\right)BI_{Stein}\left(\rho_n\mid\pi\right)^{\frac{1}{2}}},\frac{1}{\cthree}\right\},
	\end{equation}
	then 
	\begin{equation}\label{eq:main}
		\begin{aligned}
			e^{\beta\mathrm{D}_{\beta+1}\left(\rho_{n+1}\mid\pi\right)}-e^{\beta\mathrm{D}_{\beta+1}\left(\rho_{n}\mid\pi\right)}=-\beta(\beta+1)\gamma \left(I_{Stein}\left(\rho_n\mid\pi\right)-\cO(\gamma)e^{\beta\mathrm{D}_{\beta+1}\left(\rho_{n}\mid\pi\right)}\right).
		\end{aligned}
	\end{equation}
	
\end{proposition}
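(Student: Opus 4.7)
The plan is to mimic the argument behind Proposition~\ref{thm:main1} but replace the Jensen step and the inequality $e^x\geq 1+x$ (which had the right sign only when $\beta\in(-1,0)$) with first-order Taylor expansions in $\gamma$, since for $\beta>0$ the sign flips and one can only hope for an asymptotic identity. The starting point is the same change-of-variables identity
\begin{equation*}
e^{\beta\mathrm{D}_{\beta+1}(\rho_{n+1}\mid\pi)}-e^{\beta\mathrm{D}_{\beta+1}(\rho_n\mid\pi)}=\int\left(\frac{\rho_n}{\pi}\right)^{\beta}(x)\bigl(e^{\beta(I(x)+II(x))}-1\bigr)\,d\rho_n(x),
\end{equation*}
with $\phi_n(x):=x-\gamma g_n(x)$, $\rho_{n+1}={\phi_n}_{\#}\rho_n$, $I(x)=V(x)-V(\phi_n(x))$, and $II(x)=-\log|\det\operatorname{D}\phi_n(x)|$.

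Next I would Taylor-expand $I$ and $II$ to first order in $\gamma$. Assumption~\ref{asp:4} ($L$-smoothness) gives $I(x)=\gamma\inner{\nabla V(x)}{g_n(x)}+\cO(\gamma^2 L\normsq{g_n(x)})$, and the standard log-determinant expansion gives $II(x)=\gamma\operatorname{div}(g_n(x))+\cO(\gamma^2\normsq{\nabla g_n(x)}_F)$. The bounds $\cone\leq M_{\rho_n}(\delta)B\sqrt{I_{Stein}(\rho_n\mid\pi)}$ and $\ctwo\leq(\cfive+M_{\rho_n}(\delta))B\sqrt{I_{Stein}(\rho_n\mid\pi)}$ derived in the proof of Proposition~\ref{thm:main1} (see~\eqref{eq:uiui}) make those remainders uniform in $x$. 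Thanks to the extra hypothesis $|\inner{\nabla V}{g_n}+\operatorname{div}(g_n)|\leq\cthree$ and the step-size choice $\gamma\ll 1/\cthree$, the exponent $\beta(I+II)$ is itself uniformly $\cO(\gamma)$, so expanding $e^z-1=z+\cO(z^2)$ yields
\begin{equation*}
e^{\beta(I(x)+II(x))}-1=\beta\gamma\bigl(\inner{\nabla V(x)}{g_n(x)}+\operatorname{div}(g_n(x))\bigr)+\cO(\gamma^2)
\end{equation*}
with a remainder that is uniform in $x$.

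Then I would integrate against $(\rho_n/\pi)^{\beta}d\rho_n$. The leading-order integral is the quantity $III$ computed in the proof of Proposition~\ref{thm:main1}: one integration by parts combined with Assumption~\ref{asp:1} yields
\begin{equation*}
\int\left(\frac{\rho_n}{\pi}\right)^{\beta}\bigl(\inner{\nabla V}{g_n}+\operatorname{div}(g_n)\bigr)d\rho_n=-(\beta+1)I_{Stein}(\rho_n\mid\pi)+\cO(\delta).
\end{equation*}
Meanwhile the uniform $\cO(\gamma^2)$ Taylor remainder integrates against the finite mass $\int(\rho_n/\pi)^{\beta}d\rho_n=e^{\beta\mathrm{D}_{\beta+1}(\rho_n\mid\pi)}$, which is precisely the source of the factor $e^{\beta\mathrm{D}_{\beta+1}(\rho_n\mid\pi)}$ in the claimed expansion~\eqref{eq:main}. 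Factoring out $-\beta(\beta+1)\gamma$ gives the stated identity, the $\cO(\delta)$ piece being absorbed into the main $I_{Stein}(\rho_n\mid\pi)$ term under the running hypothesis $I_{Stein}(\rho_n\mid\pi)\geq\delta$.

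The main obstacle will be the bookkeeping of remainders: one has to verify that the second-order contributions coming from expanding $V\circ\phi_n$, $\log\det\operatorname{D}\phi_n$, and the outer exponential all combine into a genuine uniform $\cO(\gamma^2)$, so that after pulling out the $-\beta(\beta+1)\gamma$ prefactor they collapse into a single $\cO(\gamma)\cdot e^{\beta\mathrm{D}_{\beta+1}(\rho_n\mid\pi)}$ correction. This is precisely what forces the joint smallness conditions on $\gamma$ in the statement. In contrast with the $\beta\in(-1,0)$ regime, where convexity of the relevant exponential gave a one-sided inequality essentially for free via Jensen, the sign flip for $\beta>0$ means the same convexity argument is useless; the price is that one obtains an asymptotic equality with an explicit $\cO(\gamma)$ correction rather than a clean monotone descent inequality.
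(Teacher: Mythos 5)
Your proposal is correct and follows essentially the same route as the paper's proof: the same change-of-variables identity, the same first-order expansions of $V\circ\phi_n$ and $\log|\det\operatorname{D}\phi_n|$ with remainders controlled via the bounds $\cone\leq M_{\rho_n}(\delta)B\sqrt{I_{Stein}(\rho_n\mid\pi)}$ and $\ctwo\leq(\cfive+M_{\rho_n}(\delta))B\sqrt{I_{Stein}(\rho_n\mid\pi)}$, the same use of the $\cthree$ hypothesis to expand the outer exponential, and the same integration-by-parts identity for the leading term. The paper likewise obtains the $e^{\beta\mathrm{D}_{\beta+1}(\rho_n\mid\pi)}$ factor by integrating the uniform $\cO(\gamma^2)$ remainder against $(\rho_n/\pi)^{\beta}d\rho_n$, so there is nothing substantively different to compare.
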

\begin{proof}[\textbf{proof of \Cref{thm:main}}]\footnotesize
	Same as in the proof of \Cref{thm:main1}, we need to estimate term $I$ and $II$ in the following 
	\begin{equation}
		\beta\Big(\underbrace{\log(\pi)(x)-\log(\pi)(\phi_n(x))}_{I}\underbrace{-\log(\left|\det\operatorname{D}\phi_n\right|)(x)}_{II}\Big).
	\end{equation}
	For term $I$, we have
	\begin{equation}\label{eq:termone}
		\begin{aligned}
			I&=\log(\pi)(x)-\log(\pi)(\phi_n(x))\\
			&=V(x)-V(x-\gamma g_n(x))\\
			&=\gamma\inner{\nabla V(x)}{g_n(x)}-\int_{0}^{\gamma}(t-\gamma)\inner{g_n(x)}{\nabla^2V(x-tg_n(x))g_n(x)}dt\\
			&\leq \gamma\inner{\nabla V(x)}{g_n(x)}-\int_{0}^{\gamma}(t-\gamma)\normsq{g_n(x)}\norm{\nabla^2V(x-tg_n(x))}_{op}dt\\
			&\leq \gamma\inner{\nabla V(x)}{g_n(x)}-\int_{0}^{\gamma}(t-\gamma)\normsq{g_n(x)}Ldt\\
			&=\gamma\inner{\nabla V(x)}{g_n(x)}+\frac{L\gamma^2}{2}\normsq{g_n(x)}.
		\end{aligned}
	\end{equation}
	Similarly we have
	\begin{equation}
		I\geq \gamma\inner{\nabla V(x)}{g_n(x)}-\frac{L\gamma^2}{2}\normsq{g_n(x)}.
	\end{equation}
	For term $II$, we need to apply \Cref{lem:2} to matrix $B=-\nabla g_n$, then based on the condition on $\gamma$ 
	we have 
	\begin{equation}
		\begin{aligned}
			-\log(\left|\det\operatorname{D}\phi_n\right|)(\phi_n(x))&\leq \gamma\operatorname{tr}\left(\nabla g_n(x)\right)+5\gamma^2\ctwo^2\\
			&=\gamma\operatorname{div}\left(g_n(x)\right)+5\gamma^2\ctwo^2
		\end{aligned}
	\end{equation}
	and 
	\begin{equation}
		\begin{aligned}
			-\log(\left|\det\operatorname{D}\phi_n\right|)(\phi_n(x))&\geq \gamma\operatorname{tr}\left(\nabla g_n(x)\right)+2\gamma^2\ctwo^2\\
			&=\gamma\operatorname{div}\left(g_n(x)\right)+2\gamma^2\ctwo^2.
		\end{aligned}
	\end{equation}
	So we have 
	\begin{equation}
		\begin{aligned}
			I+II&\leq \gamma\inner{\nabla V(x)}{g_n(x)}+\frac{L\cone^2\gamma^2}{2}+\gamma\operatorname{div}\left(g_n(x)\right)+5\gamma^2\ctwo^2\\
			&=\gamma\left(\inner{\nabla V(x)}{g_n(x)}+\operatorname{div}\left(g_n(x)\right)\right)+\gamma^2\frac{L\cone^2+10\ctwo^2}{2}.
		\end{aligned}
	\end{equation}
	Similarly, we can build 
	\begin{equation}
		I+II\geq \gamma\left(\inner{\nabla V(x)}{g_n(x)}+\operatorname{div}\left(g_n(x)\right)\right)+\gamma^2\frac{L\cone^2+4\ctwo^2}{2}.
	\end{equation}
	So
	\begin{equation}
		\begin{aligned}
			&\left(\frac{\pi(x)}{{\phi_n^{-1}}_{\#}\pi(x)}\right)^{\beta}-1\\&=e^{\beta\left(I+II\right)}-1\\
			&\leq\beta\gamma\left(\inner{\nabla V(x)}{g_n(x)}+\operatorname{div}(g_n(x))\right)+\cO(\gamma^2),
		\end{aligned}
	\end{equation}
	where we use the assumption that $\max_{x\in\R^d}\left|\inner{\nabla V(x)}{g_n(x)}+\operatorname{div}\left(g_n(x)\right)\right|\leq \cthree$ and $\gamma\ll \frac{1}{\max\left\{\cone,\ctwo,\cthree\right\}}$.
	
	Now we arrive at
	\begin{equation}
		\begin{aligned}
			&\int\left(\frac{\rho_n}{\pi}\right)^{\beta}(x)\left(\left(\frac{\pi(x)}{{\phi_n^{-1}}_{\#}\pi(x)}\right)^{\beta}-1\right)d\rho_n(x)\\
			&=\int \left(\frac{\rho_n}{\pi}\right)^{\beta}(x)\Big(\beta\gamma\left(\inner{\nabla V(x)}{g_n(x)}+\operatorname{div}(g_n(x))\right)+\cO(\gamma^2)\Big)d\rho_n(x)\\
			&=\beta\gamma\underbrace{\int \left(\frac{\rho_n}{\pi}\right)^{\beta}(x)\Big(\inner{\nabla V(x)}{g_n(x)}+\operatorname{div}\left(g_n(x)\right)\Big)d\rho_n(x)}_{III}+\cO(\gamma^2)e^{\beta\mathrm{D}_{\beta+1}\left(\rho_n\mid\pi\right)}\\
			&\leq-\beta(\beta+1)\gamma\left(I_{Stein}\left(\rho_n\mid\pi\right)-\delta\right)+\cO(\gamma^2)e^{\beta\mathrm{D}_{\beta+1}\left(\rho_n\mid\pi\right)}.
		\end{aligned}
	\end{equation}
	
	Combine all of these, we finally have
	\begin{equation}
		\begin{aligned}
			e^{\beta\mathrm{D}_{\beta+1}\left(\rho_{n+1}\mid\pi\right)}-e^{\beta\mathrm{D}_{\beta+1}\left(\rho_{n}\mid\pi\right)}\leq -\beta(\beta+1)\gamma\left(I_{Stein}\left(\rho_n\mid\pi\right)-\delta-\cO(\gamma)e^{\beta\mathrm{D}_{\beta+1}\left(\rho_n\mid\pi\right)}\right).
		\end{aligned}
	\end{equation}
\end{proof}
\begin{corollary}\label{cor:descentlemmacor2}
	In \Cref{thm:main1}, choose $\delta=\varepsilon$ and suppose Assumptions \ref{asp:4},\ref{asp:2},\ref{asp:3},\ref{asp:1} and \ref{asp:5} hold with uniformly bounded $M_{\rho_n}\left(\varepsilon\right)$, $C_{\rho_n}\left(\varepsilon\right)$ and $\cthree$.If we further set $\gamma\ll \frac{\varepsilon}{e^{\beta\rm D_{\beta+1}\left(\rho_0\mid\pi\right)}}$, then we need 
	\begin{equation}
		N=
		\Omega\left(\frac{e^{\beta\mathrm{D}_{\beta+1}\left(\rho_0\mid\pi\right)}}{\beta(\beta+1)\varepsilon\gamma}\right)
	\end{equation}
	steps to get $\min_{i\in\{0,1,\ldots,N\}}I_{Stein}\left(\rho_i\mid\pi\right)\leq 2\varepsilon$.
\end{corollary}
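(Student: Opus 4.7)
The plan is to mimic the telescoping argument used for Corollary~\ref{cor:descentlemmacor1}, but now starting from the asymptotic descent in Proposition~\ref{thm:main} instead of the clean descent in Proposition~\ref{thm:main1}. First I would observe that the uniform bounds on $M_{\rho_n}(\varepsilon)$, $C_{\rho_n}(\varepsilon)$ and $C_3$ allow the step-size $\gamma$ in Proposition~\ref{thm:main} to be chosen independently of $n$; in particular one can fix a single $\gamma$ small enough so that $\gamma \ll 1/\max\{C_{\rho_n}(\varepsilon)+M_{\rho_n}(\varepsilon),C_3\}B I_{Stein}(\rho_n\mid\pi)^{1/2}$ holds along the whole trajectory (at least as long as $I_{Stein}(\rho_n\mid\pi)$ remains controlled, which it will do under the descent we are about to establish).

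Next I would apply Proposition~\ref{thm:main} with $\delta=\varepsilon$ at every step $n=0,1,\dots,N-1$, yielding
\begin{equation*}
e^{\beta\mathrm{D}_{\beta+1}(\rho_{n+1}\mid\pi)}-e^{\beta\mathrm{D}_{\beta+1}(\rho_{n}\mid\pi)}\leq -\beta(\beta+1)\gamma\Big(I_{Stein}(\rho_n\mid\pi)-\varepsilon-\mathcal{O}(\gamma)e^{\beta\mathrm{D}_{\beta+1}(\rho_n\mid\pi)}\Big).
\end{equation*}
Because $\beta>0$ and $\gamma\ll\varepsilon/e^{\beta\mathrm{D}_{\beta+1}(\rho_0\mid\pi)}$, the correction term $\mathcal{O}(\gamma)e^{\beta\mathrm{D}_{\beta+1}(\rho_0\mid\pi)}$ is at most $\varepsilon$, so as long as $I_{Stein}(\rho_n\mid\pi)\geq 2\varepsilon$ the right-hand side is strictly negative. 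This shows that $e^{\beta\mathrm{D}_{\beta+1}(\rho_n\mid\pi)}$ is monotonically decreasing, so $e^{\beta\mathrm{D}_{\beta+1}(\rho_n\mid\pi)}\leq e^{\beta\mathrm{D}_{\beta+1}(\rho_0\mid\pi)}$ for all $n\leq N$; this inductive control is what justifies using the initial divergence as a uniform bound on the correction.

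Telescoping from $n=0$ to $N-1$ and using non-negativity of $e^{\beta\mathrm{D}_{\beta+1}(\rho_N\mid\pi)}$ then gives
\begin{equation*}
\min_{0\leq n\leq N-1}I_{Stein}(\rho_n\mid\pi) \leq \frac{e^{\beta\mathrm{D}_{\beta+1}(\rho_0\mid\pi)}}{\beta(\beta+1)\gamma N}+\varepsilon+\mathcal{O}(\gamma)\,e^{\beta\mathrm{D}_{\beta+1}(\rho_0\mid\pi)}.
\end{equation*}
With the choice $\gamma\ll \varepsilon/e^{\beta\mathrm{D}_{\beta+1}(\rho_0\mid\pi)}$ the last term is absorbed into $\varepsilon$, and choosing $N=\Omega\!\left(e^{\beta\mathrm{D}_{\beta+1}(\rho_0\mid\pi)}/(\beta(\beta+1)\varepsilon\gamma)\right)$ makes the first term at most $\varepsilon$, yielding $\min_n I_{Stein}(\rho_n\mid\pi)\leq 2\varepsilon$, as claimed (note that either the minimum is already achieved, breaking the induction early, or the descent continues to the stated $N$).

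The main obstacle, as in the $\beta\in(-1,0)$ case, is not the algebra but the need to maintain the inductive invariant: the uniform upper bound on $e^{\beta\mathrm{D}_{\beta+1}(\rho_n\mid\pi)}$ required to control the $\mathcal{O}(\gamma)$ correction only holds because the exponential is actually decreasing, and this in turn relies on $I_{Stein}(\rho_n\mid\pi)\geq 2\varepsilon$. One must therefore carefully phrase the argument as a stopping-time/induction: either we have already hit the target $\min_n I_{Stein}(\rho_n\mid\pi)\leq 2\varepsilon$ (in which case there is nothing to prove), or the descent holds at every step and the telescoping conclusion goes through. The dependence of the complexity on $e^{\beta\mathrm{D}_{\beta+1}(\rho_0\mid\pi)}$ — the central qualitative difference from Corollary~\ref{cor:descentlemmacor1} — appears both in the numerator of $N$ and in the required smallness of $\gamma$, consistent with the continuous-time rate in Theorem~\ref{thm:betaflow} for $\beta>0$.
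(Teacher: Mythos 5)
Your proposal is correct and follows essentially the same route as the paper: apply Proposition~\ref{thm:main} with $\delta=\varepsilon$ at each step, telescope the resulting one-step bound, and use an induction (valid until the target $2\varepsilon$ is first hit) to keep $e^{\beta\mathrm{D}_{\beta+1}(\rho_n\mid\pi)}\leq e^{\beta\mathrm{D}_{\beta+1}(\rho_0\mid\pi)}$ so that the $\mathcal{O}(\gamma)$ correction is absorbed via $\gamma\ll\varepsilon/e^{\beta\mathrm{D}_{\beta+1}(\rho_0\mid\pi)}$. If anything, you make the inductive invariant and stopping-time phrasing more explicit than the paper, which dispatches it with ``by induction we can easily get.''
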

\begin{proof}[\textbf{proof of \Cref{cor:descentlemmacor2}}]
	\footnotesize
	Due to \Cref{thm:main}, we have
	\begin{equation}
		\begin{aligned}
			e^{\beta\mathrm{D}_{\beta+1}\left(\rho_{i+1}\mid\pi\right)}-e^{\beta\mathrm{D}_{\beta+1}\left(\rho_{i}\mid\pi\right)}=-\beta(\beta+1)\gamma \left(I_{Stein}\left(\rho_i\mid\pi\right)-\varepsilon-\cO(\gamma)e^{\beta\mathrm{D}_{\beta+1}\left(\rho_{i}\mid\pi\right)}\right).
		\end{aligned}
	\end{equation}
	add from $i=0$ to $i=N$, we have
	\begin{equation}
		\begin{aligned}
			\beta\left(\beta+1\right)\gamma\sum_{i=0}^N\left(I_{Stein}\left(\rho_i\mid\pi\right)-\varepsilon-\cO(\gamma)e^{\beta\mathrm{D}_{\beta+1}\left(\rho_{i}\mid\pi\right)}\right)&=-\sum_{i=0}^N \left({e^{\beta\rm{D}_{\beta+1}\left(\rho_{i+1}\mid\pi\right)}}-{e^{\beta\rm{D}_{\beta+1}\left(\rho_{i}\mid\pi\right)}}\right)\\
			&=e^{\beta\mathrm{D}_{\beta+1}\left(\rho_{0}\mid\pi\right)}-e^{\beta\mathrm{D}_{\beta+1}\left(\rho_{N+1}\mid\pi\right)},
		\end{aligned}
	\end{equation}
	so we finally have
	\begin{equation}
		\min_{i\in\{0,1,\ldots,N\}}\left(I_{Stein}\left(\rho_i\mid\pi\right)-\varepsilon-\cO(\gamma)e^{\beta\mathrm{D}_{\beta+1}\left(\rho_{i}\mid\pi\right)}\right)\leq \frac{e^{\beta\mathrm{D}_{\beta+1}\left(\rho_{0}\mid\pi\right)}-e^{\beta\mathrm{D}_{\beta+1}\left(\rho_{N+1}\mid\pi\right)}}{\beta\left(\beta+1\right)N\gamma}.
	\end{equation}
	For any error bound $\varepsilon$, suppose
	$\min_{i\in\{0,1,\ldots,N\}}I_{Stein}\left(\rho_i\mid\pi\right)\geq2\varepsilon$.For $\beta> 0$,
	we can further require $\gamma\ll \frac{\varepsilon}{e^{\beta\rm D_{\beta+1}\left(\rho_0\mid\pi\right)}}$, then by induction we can easily get $I_{Stein}\left(\rho_i\mid\pi\right)-\varepsilon-\cO(\gamma)e^{\beta\mathrm{D}_{\beta+1}\left(\rho_{i}\mid\pi\right)}\geq 0$ for any $i\in\{0,1,\ldots,N\}$. 
	So all in all, to get $\min_{i\in \{0,1\ldots,N\}}I_{Stein}\left(\rho_i\mid\pi\right)\leq 2\varepsilon$, we need $N=\Omega\left(\frac{e^{\beta\mathrm{D}_{\beta+1}\left(\rho_{0}\mid\pi\right)}-e^{\beta\mathrm{D}_{\beta+1}\left(\rho_{N+1}\mid\pi\right)}}{\beta\left(\beta+1\right)\varepsilon\gamma}\right)$.
\end{proof}
The next lemma is similar to the one from \cite{liu2017stein}, but with both lower and upper bounds for the log determinant term.
\begin{lemma}\label{lem:2}
	Let $B$ be a square matrix and $\|B\|_{F}=\sqrt{\sum_{i j} b_{i j}^{2}}$ its Frobenius norm. Let $\epsilon$ be a positive number that satisfies $0 \leq \gamma<\frac{1}{3\norm{B}_F}$, where $\varrho(\cdot)$ denotes the spectrum radius. Then $I+\epsilon\left(B+B^{\top}\right)+\epsilon^2BB^{\top}$ is positive definite, and
	\begin{equation}
		\begin{aligned}
			&\epsilon \operatorname{tr}(B)-\frac{\epsilon^{2}}{4} \left(\frac{9\|B\|_{F}^{2}}{1-3\epsilon \norm{B}_F}+2{\norm{B}^2_F}\right)\\
			&\leq	\log |\operatorname{det}(I+\epsilon B)| \\
			&\leq \epsilon \operatorname{tr}(B)-\frac{\epsilon^{2}}{4} \left(\frac{9\|B\|_{F}^{2}}{1+3\epsilon \norm{B}_F}+2{\norm{B}^2_F}\right).
		\end{aligned}
	\end{equation}
	Therefore, take an even smaller $\epsilon$ such that $0 \leq \epsilon \leq \frac{1}{6\norm{B}_F}$, we get
	$$
	\epsilon \operatorname{tr}(B)-5 \epsilon^{2}|| B \|_{F}^{2}\leq	\log |\operatorname{det}(I+\epsilon B)| \leq \epsilon \operatorname{tr}(B)- 2\epsilon^{2}|| B \|_{F}^{2}.
	$$
\end{lemma}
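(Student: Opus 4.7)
The plan is to reduce $\log|\det(I+\epsilon B)|$ to the log-determinant of a symmetric matrix, expand eigenvalue-by-eigenvalue via Taylor's theorem, and track the resulting remainders. First, I set $M:=(I+\epsilon B)(I+\epsilon B)^{\top}=I+N$ with $N:=\epsilon(B+B^{\top})+\epsilon^{2}BB^{\top}$. Since $\|\epsilon B\|_{op}\leq\epsilon\|B\|_{F}<1/3<1$, a Neumann series shows $I+\epsilon B$ is invertible, hence $M$ is symmetric positive definite, which settles the first claim. Then $\log|\det(I+\epsilon B)|=\tfrac{1}{2}\log\det M=\tfrac{1}{2}\log\det(I+N)$, with $N$ symmetric and with real eigenvalues.

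The second step provides spectral control on $N$. Triangle inequalities and $\|B\|_{op}\leq\|B\|_{F}$ give both $\|N\|_{op}\leq 2\epsilon\|B\|_{F}+\epsilon^{2}\|B\|_{F}^{2}$ and $\|N\|_{F}\leq 2\epsilon\|B\|_{F}+\epsilon^{2}\|B\|_{F}^{2}$ (using $\|BB^{\top}\|_{F}\leq\|B\|_{op}\|B\|_{F}\leq\|B\|_{F}^{2}$). Under $3\epsilon\|B\|_{F}<1$ the second summand is dominated by the first, so $\|N\|_{op}\leq 3\epsilon\|B\|_{F}<1$ and $\operatorname{tr}(N^{2})=\|N\|_{F}^{2}\leq 9\epsilon^{2}\|B\|_{F}^{2}$. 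These are the quantitative inputs behind the lemma: the factor $3\epsilon\|B\|_{F}$ drives the denominators $1\mp 3\epsilon\|B\|_{F}$, and $9\|B\|_{F}^{2}$ appears in the numerator of the Frobenius residual.

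Next, let $\mu_{1},\dots,\mu_{d}$ be the eigenvalues of $N$. For $|x|<1$ the identity $\log(1+x)-x+\tfrac{x^{2}}{2}=\int_{0}^{x}\tfrac{t^{2}}{1+t}\,dt$ yields the sign-aware remainder $-\tfrac{|x|^{3}}{3(1-|x|)}\leq\log(1+x)-x+\tfrac{x^{2}}{2}\leq\tfrac{|x|^{3}}{3}$. Applying this at each eigenvalue and using $\sum_{i}|\mu_{i}|^{3}\leq\|N\|_{op}\operatorname{tr}(N^{2})$ gives
\[
\log\det(I+N)=\operatorname{tr}(N)-\tfrac{1}{2}\operatorname{tr}(N^{2})+R,\qquad -\tfrac{\|N\|_{op}\operatorname{tr}(N^{2})}{3(1-\|N\|_{op})}\leq R\leq\tfrac{\|N\|_{op}\operatorname{tr}(N^{2})}{3}.
\]
Substituting $\tfrac{1}{2}\operatorname{tr}(N)=\epsilon\operatorname{tr}(B)+\tfrac{\epsilon^{2}\|B\|_{F}^{2}}{2}$, the bound $\operatorname{tr}(N^{2})\leq 9\epsilon^{2}\|B\|_{F}^{2}$, and $\|N\|_{op}\leq 3\epsilon\|B\|_{F}$ into $\tfrac{1}{2}\log\det(I+N)$, and grouping the $-\tfrac{1}{4}\operatorname{tr}(N^{2})$ term with $\tfrac{R}{2}$, produces a two-sided bound of the form $\epsilon\operatorname{tr}(B)-\tfrac{\epsilon^{2}}{4}\bigl(2\|B\|_{F}^{2}+\tfrac{9\|B\|_{F}^{2}}{1\mp 3\epsilon\|B\|_{F}}\bigr)$. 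The final corollary at $0\leq\epsilon\leq 1/(6\|B\|_{F})$ then drops out by plugging $3\epsilon\|B\|_{F}\leq 1/2$ into the denominators, turning the bracket into $20\|B\|_{F}^{2}$ and $8\|B\|_{F}^{2}$ respectively.

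The analytic ingredients are routine; the main obstacle is bookkeeping the constants. One must keep the crude spectral estimate tight enough that the denominator comes out exactly as $1\mp 3\epsilon\|B\|_{F}$ rather than some looser expression, and organize the $-\tfrac{1}{4}\operatorname{tr}(N^{2})$ contribution together with the cross term $\tfrac{\epsilon^{2}\|B\|_{F}^{2}}{2}$ coming out of $\tfrac{1}{2}\operatorname{tr}(N)$ so that the $2\|B\|_{F}^{2}$ coefficient inside the bracket materializes. A sloppy choice in either the bound on $\|N\|_{F}$ or the splitting of the quadratic/cubic residual gives a valid but quantitatively weaker inequality.
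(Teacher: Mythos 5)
Your route is essentially the paper's: symmetrize via $\log|\operatorname{det}(I+\epsilon B)|=\tfrac12\log\operatorname{det}\left(I+\epsilon(B+B^{\top})+\epsilon^{2}BB^{\top}\right)$, expand the symmetric log-determinant eigenvalue by eigenvalue with an integral-form remainder, and feed in the spectral estimates $\norm{N}_F,\norm{N}_{op}\leq 3\epsilon\norm{B}_F$ (the paper phrases this with $A=B+B^{\top}+\epsilon BB^{\top}$ and a first-order expansion with quadratic remainder; your second-order expansion with cubic remainder is an equivalent bookkeeping). Your positive-definiteness argument is clean, and your \emph{lower} bound is correct: dropping the positive term $\tfrac{\epsilon^{2}\norm{B}_F^{2}}{2}$ arising from $\tfrac12\operatorname{tr}(N)$ and combining $-\tfrac14\operatorname{tr}(N^{2})+\tfrac{R}{2}\geq-\tfrac{\operatorname{tr}(N^{2})}{4(1-\norm{N}_{op})}\geq-\tfrac{9\epsilon^{2}\norm{B}_F^{2}}{4(1-3\epsilon\norm{B}_F)}$ yields the stated lower bound, which is the half of the lemma actually used in the descent lemma for \algname{$\beta$-SVGD}.

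The upper bound, however, does not ``drop out'' of the substitutions you list, and this step would fail. Two of your inputs point the wrong way there: first, turning $-\tfrac14\operatorname{tr}(N^{2})$ into $-\tfrac14\cdot 9\epsilon^{2}\norm{B}_F^{2}$ requires a \emph{lower} bound $\operatorname{tr}(N^{2})\geq 9\epsilon^{2}\norm{B}_F^{2}$, whereas you only have the upper bound, and no lower bound in terms of $\norm{B}_F$ exists (for antisymmetric $B$ one gets $N=\epsilon^{2}BB^{\top}$, of order $\epsilon^{2}$); second, the term $+\tfrac{\epsilon^{2}\norm{B}_F^{2}}{2}$ from $\tfrac12\operatorname{tr}(N)$ enters with a plus sign while the target carries $-\tfrac{\epsilon^{2}}{4}\cdot 2\norm{B}_F^{2}$. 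Indeed the stated upper bound is false: for the $2\times 2$ antisymmetric matrix $B$ with off-diagonal entries $1$ and $-1$ one has $\operatorname{tr}(B)=0$, $\norm{B}_F^{2}=2$, and $\log|\operatorname{det}(I+\epsilon B)|=\log(1+\epsilon^{2})>0$, whereas the claimed bound $\epsilon\operatorname{tr}(B)-2\epsilon^{2}\norm{B}_F^{2}=-4\epsilon^{2}$ is negative. To be fair, the paper's own proof of the upper half has the same defect---it applies $\norm{B+B^{\top}+\epsilon BB^{\top}}_F\leq 3\norm{B}_F$ inside a negative term where the opposite inequality is needed, and silently flips the sign of the $\tfrac{\epsilon^{2}}{2}\norm{B}_F^{2}$ contribution---so you have reproduced the paper's argument, error included. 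Only the lower bound survives; the upper bound is invoked only in the asymptotic analysis of the appendix, which would need to be repaired separately.
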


\begin{proof}[\textbf{proof of \Cref{lem:2}}]
	We follow the proof from \cite{liu2017stein}. When $\epsilon<\frac{1}{\varrho\left(B+B^{\top}\right)}$, we have $$\varrho\left(I+\epsilon\left(B+B^{\top}\right)+\epsilon^2 BB^{\top}\right) \geq 1-\epsilon \varrho\left(B+B^{\top}\right)>0,$$ and so $I+\epsilon\left(B+B^{\top}\right)+\epsilon^2BB^{\top}$ is positive definite.
	By the property of matrix determinant, we have
	\begin{equation}\label{eq:rom1}
		\begin{aligned}
			\log |\operatorname{det}(I+\epsilon B)| &=\frac{1}{2} \log \operatorname{det}\left((I+\epsilon B)(I+\epsilon B)^{\top}\right) \\
			&=\frac{1}{2} \log \operatorname{det}\left(I+\epsilon\left(B+B^{\top}\right)+\epsilon^{2} B B^{\top}\right) \\
			& = \frac{1}{2} \log \operatorname{det}\left(I+\epsilon\left(B+B^{\top}+\epsilon B B^{\top}\right)\right).
		\end{aligned}
	\end{equation}
	
	Let $A=B+B^{\top}+\epsilon B B^{\top}$, we can establish
	$$
	\epsilon \operatorname{tr}(A)-\frac{\epsilon^{2}}{2} \frac{\|A\|_{F}^{2}}{1-\epsilon \varrho(A)}\leq\log \operatorname{det}(I+\epsilon A) \leq \epsilon \operatorname{tr}(A)-\frac{\epsilon^{2}}{2} \frac{\|A\|_{F}^{2}}{1+\epsilon \varrho(A)},
	$$
	which holds for any symmetric matrix $A$ and $0 \leq \epsilon<1 / \varrho(A)$. This is because, assuming $\left\{\lambda_{i}\right\}$ are the eigenvalues of $A$,
	$$
	\begin{aligned}
		\log \operatorname{det}(I+\epsilon A)-\epsilon \operatorname{tr}(A) &=\sum_{i}\left[\log \left(1+\epsilon \lambda_{i}\right)-\epsilon \lambda_{i}\right] \\
		&=\sum_{i}\left[\int_{0}^{1} \frac{\epsilon \lambda_{i}}{1+s \epsilon \lambda_{i}} \mathrm{~d} s-\epsilon \lambda_{i}\right] \\
		&=-\sum_{i} \int_{0}^{1} \frac{s \epsilon^{2} \lambda_{i}^{2}}{1+s \epsilon \lambda_{i}} \mathrm{~d} s,
	\end{aligned}
	$$
	while 
	\begin{eqnarray*}
		-\frac{\epsilon^2}{2}\frac{\norm{A}_{F}}{1-\epsilon\varrho(A)}	&=& -\frac{1}{2}\sum_i\frac{\epsilon^2\lambda_{i}^2}{1-\epsilon\max_{i}|\lambda_{i}|} \\
		&\leq & -\sum_{i} \int_{0}^{1} \frac{s \epsilon^{2} \lambda_{i}^{2}}{1+s \epsilon \lambda_{i}} \mathrm{~d} s \\
		&\leq & 	-\frac{1}{2}\sum_i\frac{\epsilon^2\lambda_{i}^2}{1+\epsilon\max_{i}|\lambda_{i}|} \\
		&=& -\frac{\epsilon^2}{2}\frac{\norm{A}_{F}}{1+\epsilon\varrho(A)},
	\end{eqnarray*}
	so we have 
	\begin{equation}\label{eq:romrom}
		-\frac{\epsilon^2}{2}\frac{\norm{A}_F}{1-\epsilon\varrho(A)}\leq	\log \operatorname{det}(I+\epsilon A)-\epsilon \operatorname{tr}(A)\leq -\frac{\epsilon^2}{2}\frac{\norm{A}_F}{1+\epsilon\varrho(A)}.
	\end{equation}
	
	Taking $A=B+B^{\top}+\epsilon BB^{\top}$ into \Cref{eq:romrom} and combine it with \Cref{eq:rom1}, we get
	$$
	\begin{aligned}
		\log |\operatorname{det}(I+\epsilon B)| & \geq \frac{1}{2} \log \operatorname{det}\left(I+\epsilon\left(B+B^{\top}+\epsilon BB^{\top}\right)\right) \\
		& \geq \frac{\epsilon}{2} \operatorname{tr}\left(B+B^{\top}+\epsilon BB^{\top}\right)-\frac{\epsilon^{2}}{4} \frac{\left\|B+B^{\top}+\epsilon BB^{\top}\right\|_{F}^{2}}{1-\epsilon \varrho\left(B+B^{\top}+\epsilon BB^{\top}\right)} \\
		& \geq \epsilon \operatorname{tr}(B)-\frac{\epsilon^{2}}{4} \left(\frac{9\|B\|_{F}^{2}}{1-\epsilon \varrho\left(B+B^{\top}+\epsilon BB^{\top}\right)}+2{\norm{B}^2_F}\right),
	\end{aligned}
	$$
	similarly
	\begin{equation*}
		\log |\operatorname{det}(I+\epsilon B)| \leq \epsilon \operatorname{tr}(B)-\frac{\epsilon^{2}}{4} \left(\frac{9\|B\|_{F}^{2}}{1+\epsilon \varrho\left(B+B^{\top}+\epsilon BB^{\top}\right)}+2{\norm{B}^2_F}\right)
	\end{equation*}
	where we used the fact that $\operatorname{tr}(B)=\operatorname{tr}\left(B^{\top}\right)$ , $\norm{BB^{\top}}_F\leq\norm{B}_F^2$  and $\left\|B+B^{\top}+\epsilon BB^{\top}\right\|_{F} \leq\|B\|_{F}+\left\|B^{\top}\right\|_{F}+\epsilon\norm{ BB^{\top}}_F=3\|B\|_{F}$~(since $\epsilon\leq\frac{1}{\norm{B}_F}$). Finally we use inequality $\varrho\left(B+B^{\top}+\epsilon BB^{\top}\right)\leq\varrho\left(B+B^{\top}\right)+\epsilon\varrho\left(BB^{\top}\right)\leq \varrho\left(B+B^{\top}\right)+\sqrt{\varrho\left(BB^{\top}\right)}$ and 
	\begin{equation*}
		\begin{aligned}
			\varrho(B+B^{\top})^2&\leq \operatorname{tr}\left(BB+BB^{\top}+B^{\top}B+B^{\top}B^{\top}\right)\\
			&=\operatorname{tr}(BB)+\operatorname{tr}(B^{\top}B^{\top})+2\operatorname{tr}(BB^{\top})\\
			&\leq 4\operatorname{tr}(BB^{\top})\qquad\qquad //\text{since}~\operatorname{tr}(BB)\leq\operatorname{tr}(BB^{\top})\\
			&=4\norm{B}_{F}^2
		\end{aligned}
	\end{equation*}
	and $\varrho(BB^{\top})\leq \norm{B}_{F}^2$, so we have 
	\begin{equation}
		\varrho\left(B+B^{\top}+\epsilon BB^{\top}\right)\leq 3\norm{B}_F.
	\end{equation}
	Combining all of these, we finally get
	\begin{equation}
		\begin{aligned}
			&\epsilon \operatorname{tr}(B)-\frac{\epsilon^{2}}{4} \left(\frac{9\|B\|_{F}^{2}}{1-3\epsilon \norm{B}_F}+2{\norm{B}^2_F}\right)\\
			&\leq	\log |\operatorname{det}(I+\epsilon B)| \\
			&\leq \epsilon \operatorname{tr}(B)-\frac{\epsilon^{2}}{4} \left(\frac{9\|B\|_{F}^{2}}{1+3\epsilon \norm{B}_F}+2{\norm{B}^2_F}\right).
		\end{aligned}
	\end{equation}
	
\end{proof}

\section{Experiments}\label{apdx:exp}
The code can be found in \url{https://github.com/Iwillnottellyou/BETA-SVGD.git}.

\subsection{Gaussian Mixtures}
In \Cref{fig:9}, \Cref{fig:4}, \Cref{fig:5} and \Cref{fig:6}, we use Gaussian Mixtures to test the performance of \algname{$\beta$-SVGD}.  We choose the reproducing kernal  $k(x,y)=e^{-\frac{\normsq{x-y}}{d}}$, where $d$ is the dimension. 

\begin{figure}
	\centering
	\includegraphics[scale=0.45]{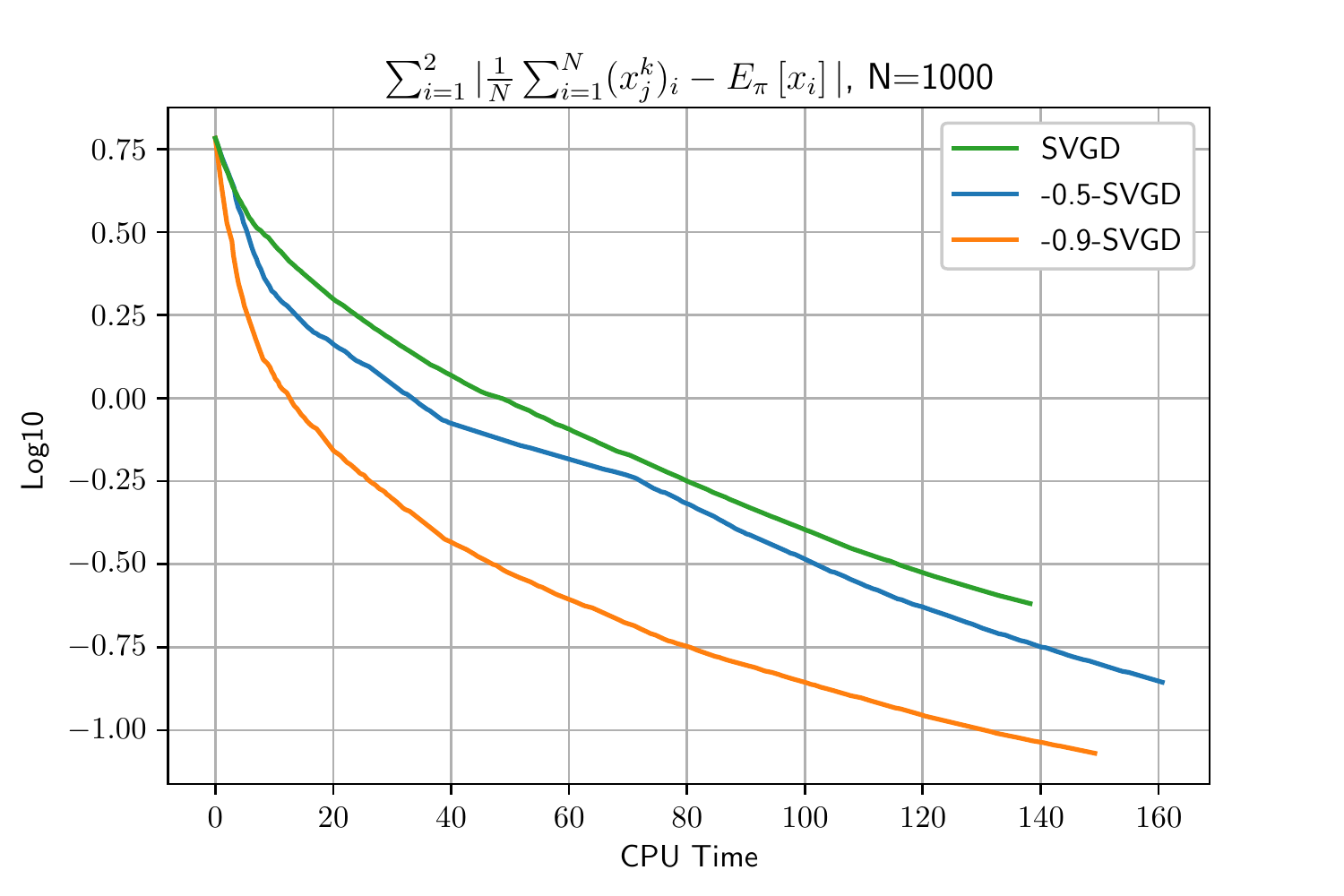}
	\includegraphics[scale=0.45]{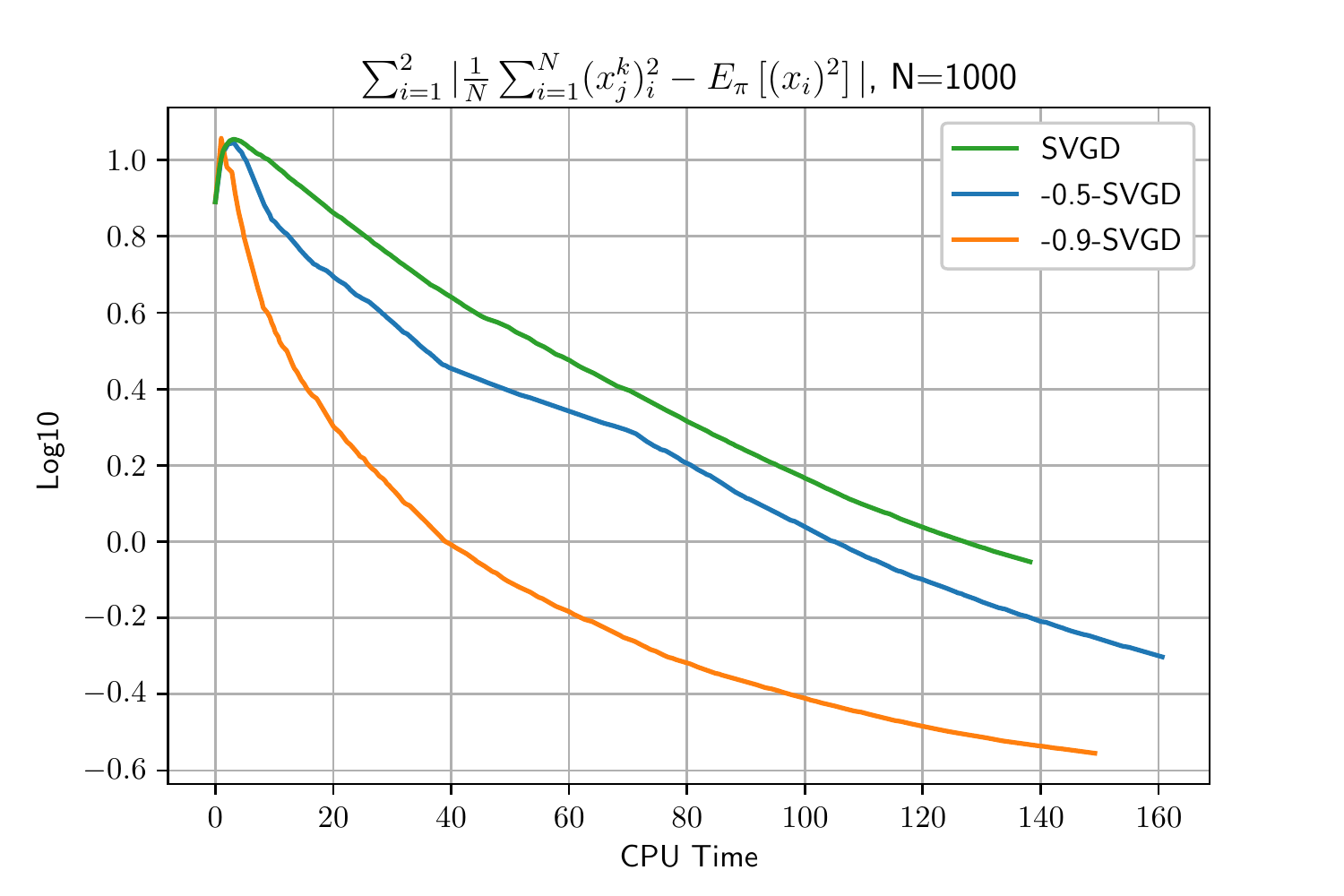}
	\caption{The target distribution is $\pi(x)=\frac{2}{5}\mathcal{N}((2,0),I_2)+\frac{1}{5}\mathcal{N}((4,0),I_2)+\frac{2}{5}\mathcal{N}((3,-3),I_2)$. Each sampled point  $x_j^k$ is of the form $\left((x_j^k)_1,(x_j^k)_2\right)$, where $k$ denote the $k$-th iteration, $j$ denote the $j$-th sampled point. For distribution $\pi$, we have $\mathbb{E}_{\pi}\left[x_1\right]=2.8,\mathbb{E}_{\pi}\left[x_2\right]=-1.2$ and $\mathbb{E}_{\pi}\left[(x_1)^2\right]=9.4,\mathbb{E}_{\pi}\left[(x_2)^2\right]=4.6$. The initial $N$ points are sampled from $\mathcal{N}\left((-2,0),I_2\right)$. The step-size $\gamma$ for both algorithms equals $0.2$. In \algname{$\beta$-SVGD}, we choose the small gap $\tau=0.01$ and we update the Stein importance weights every $20$ iterations using $40$ mirror descent steps with step-size $r=0.3$. Since the function computed in the second image is $x^2$, it is not surprising that there is an increase in the first few iterations.}
	\label{fig:9}
\end{figure}

\begin{figure}
	\hspace{-2.7cm}	\includegraphics[scale=0.45]{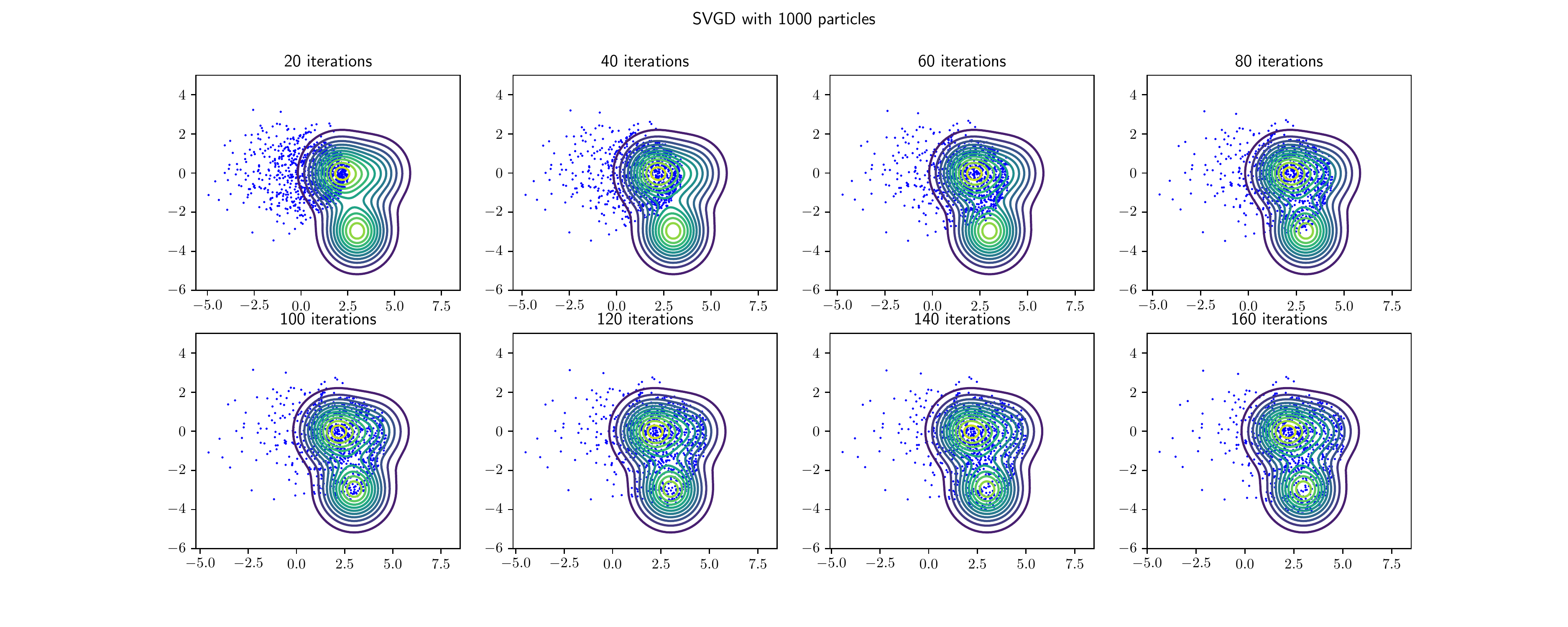}
	
	\hspace{-2.7cm}	\includegraphics[scale=0.45]{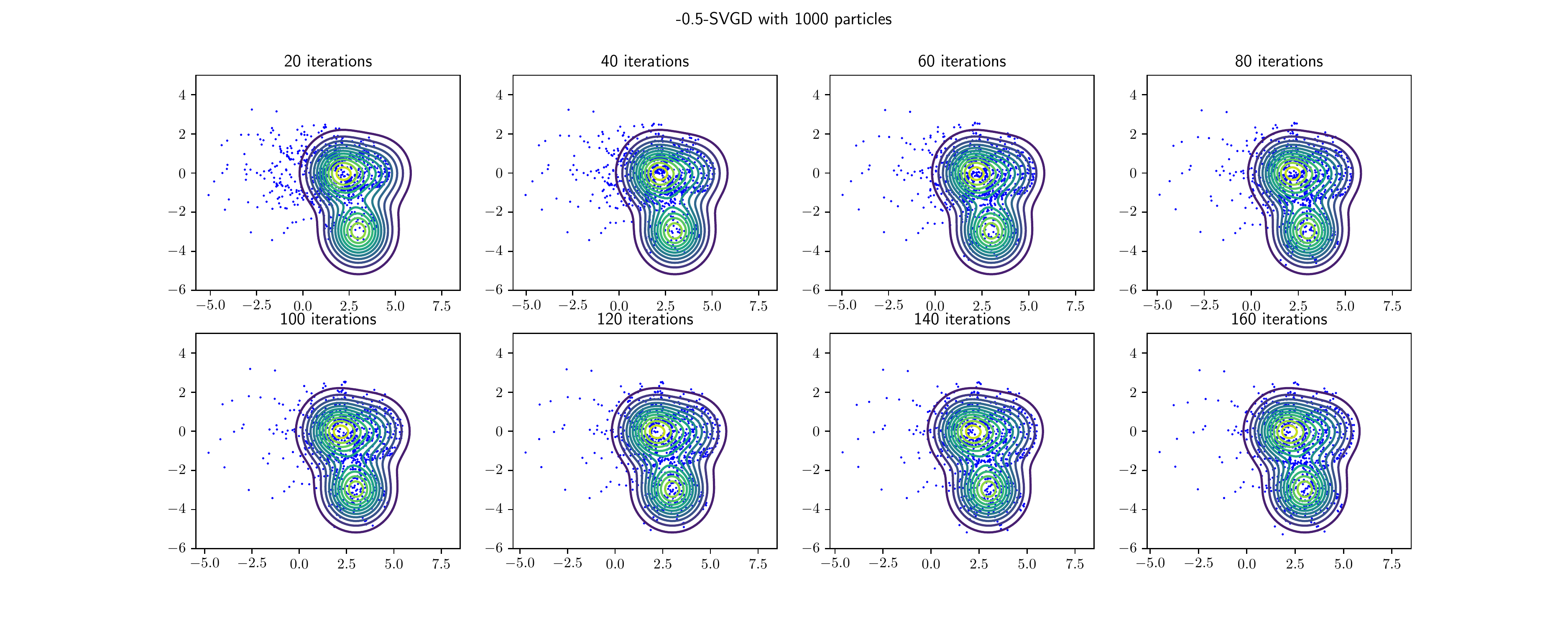}
	
	\hspace{-2.7cm}	\includegraphics[scale=0.45]{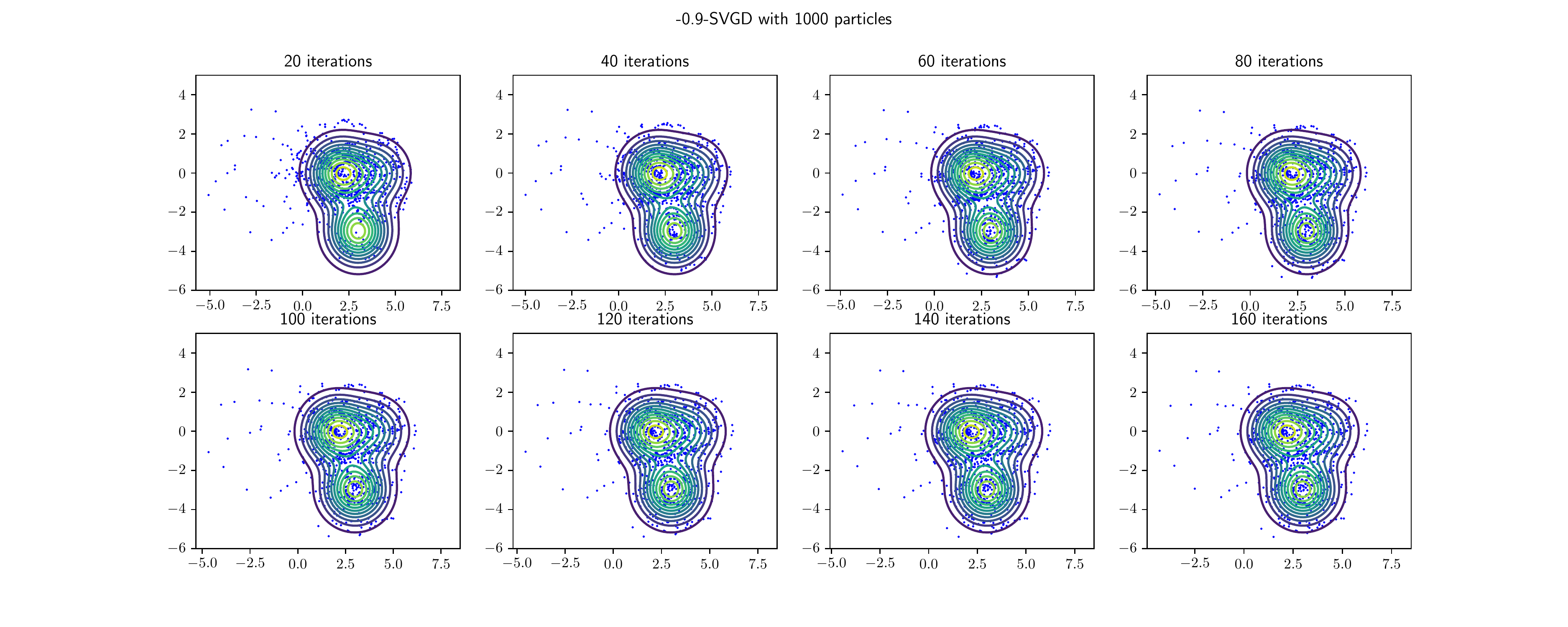}
	
	\caption{
		The same experiment setting as in \Cref{fig:3}. We show how the particles move in the update of \algname{$\beta$-SVGD} with $\beta=0,-0.5,-0.9$. }
	\label{fig:4}
\end{figure}

\begin{figure}
	\centering
	\includegraphics[scale=0.45]{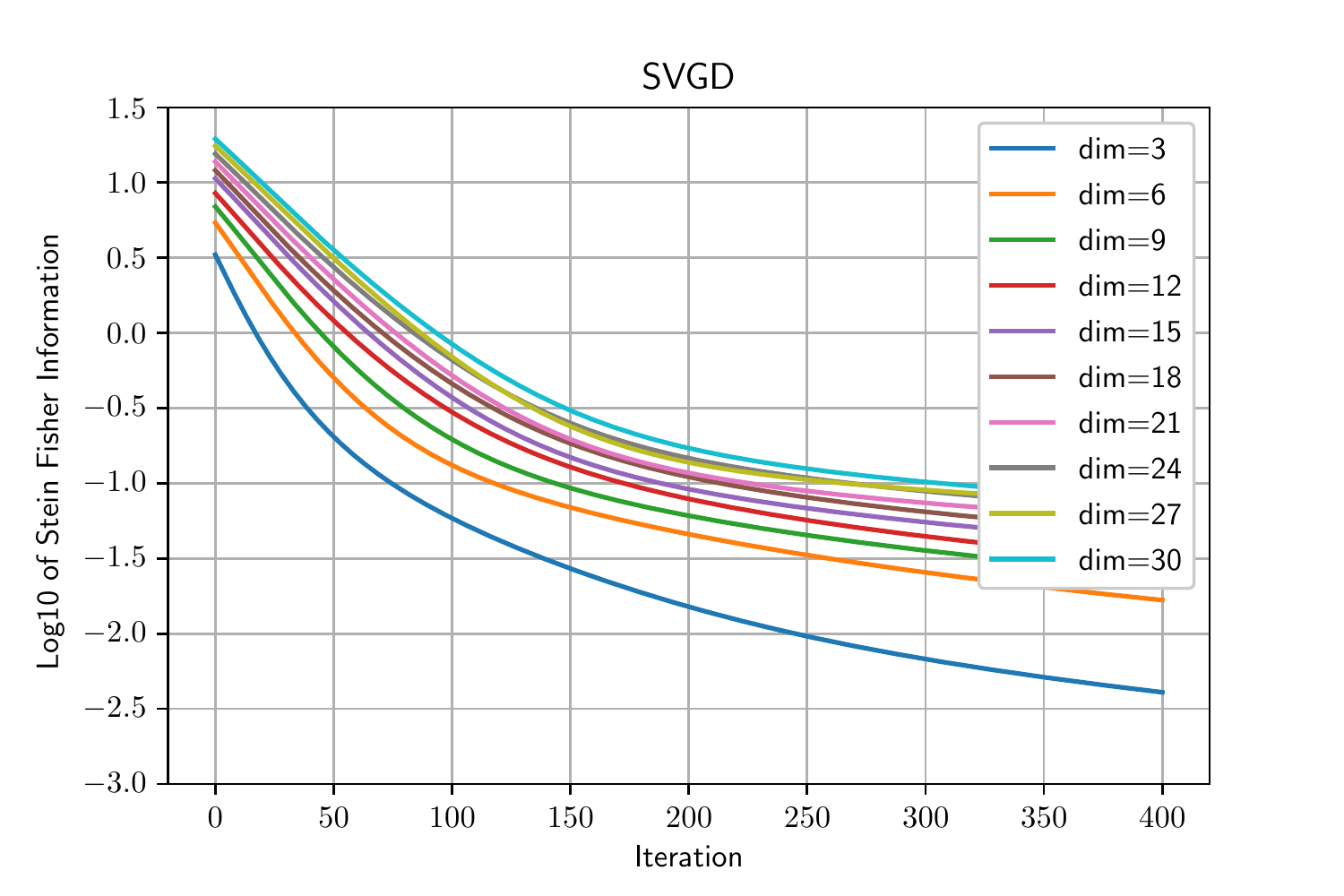}
	\includegraphics[scale=0.45]{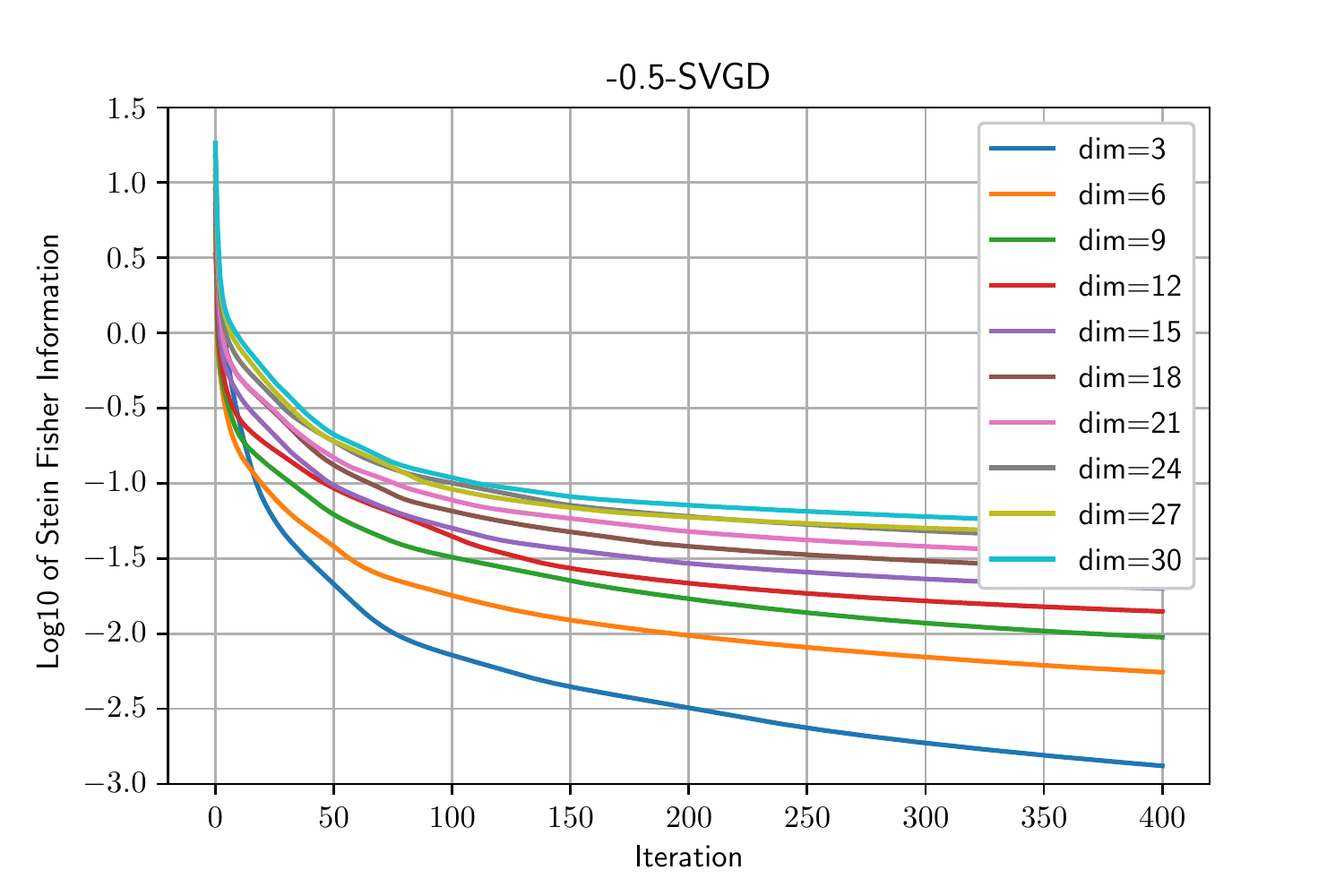}
	\caption{In this experiment, we show how the Stein Fisher information changes in the update of \algname{SVGD} and \algname{$-0.5$-SVGD}. The target distribution is $\mathcal{N}((2,\ldots,2)_d,I_d)$ and the initial points are sampled from $\mathcal{N}((0,\ldots,0)_d,I_d)$ with $N=300$. The step-size $\gamma=0.1$ for both algorithm and for \algname{$-0.5$-SVGD} algorithm, we set the small gap $\tau=0.01$ and we update the Stein importance weight in every iteration using $40$ mirror descent with step-size $r=0.3$. We can see that the Stein Fisher information drops immediately below $1$~(note in the picture, the axis $y$ is $\log_{10}$ of the Stein Fisher information) in \algname{$-0.5$-SVGD}, while in \algname{SVGD} it drops slowly.
	}
	\label{fig:5}
\end{figure}

\begin{figure}
	\centering
	\includegraphics[scale=0.45]{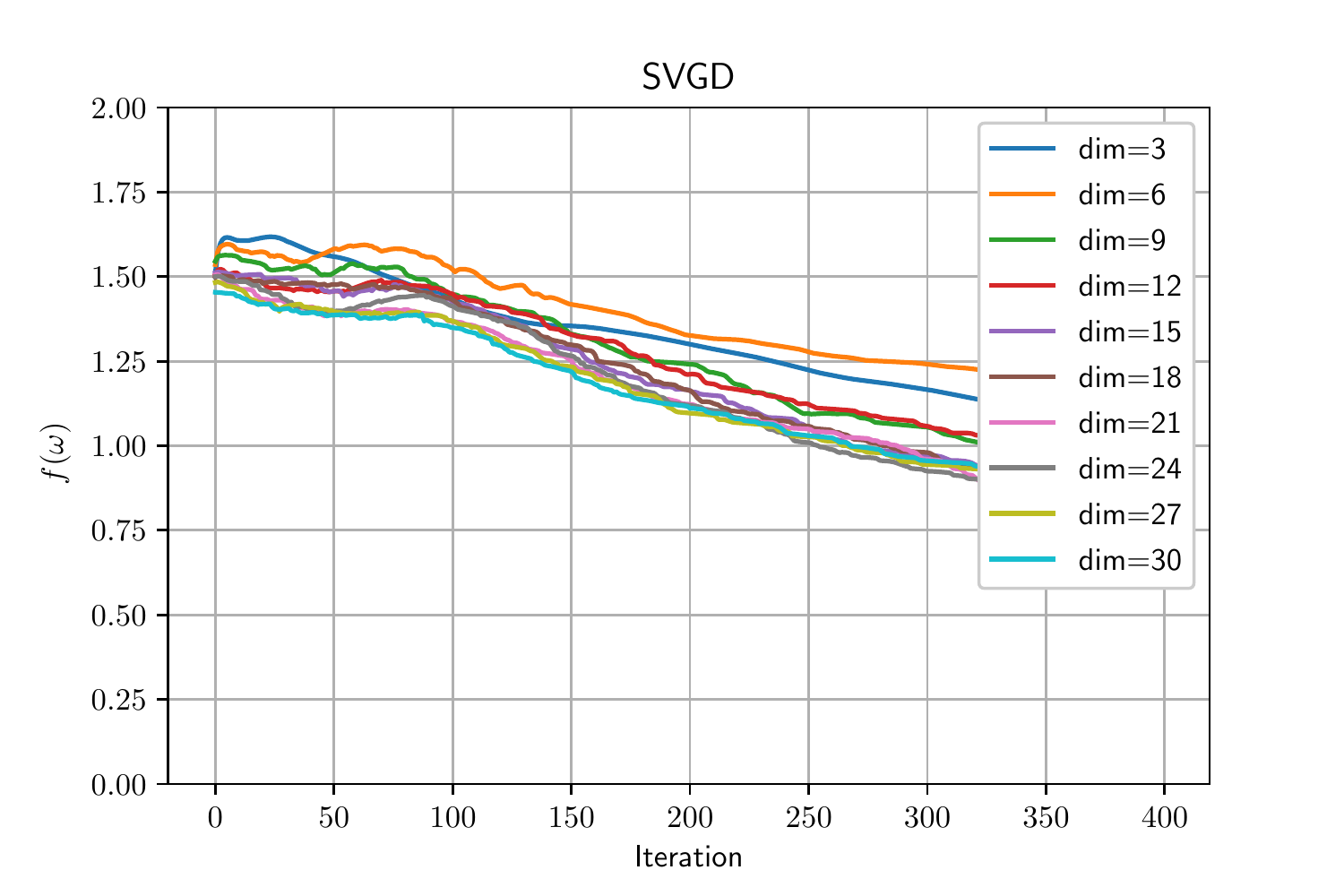}
	\includegraphics[scale=0.45]{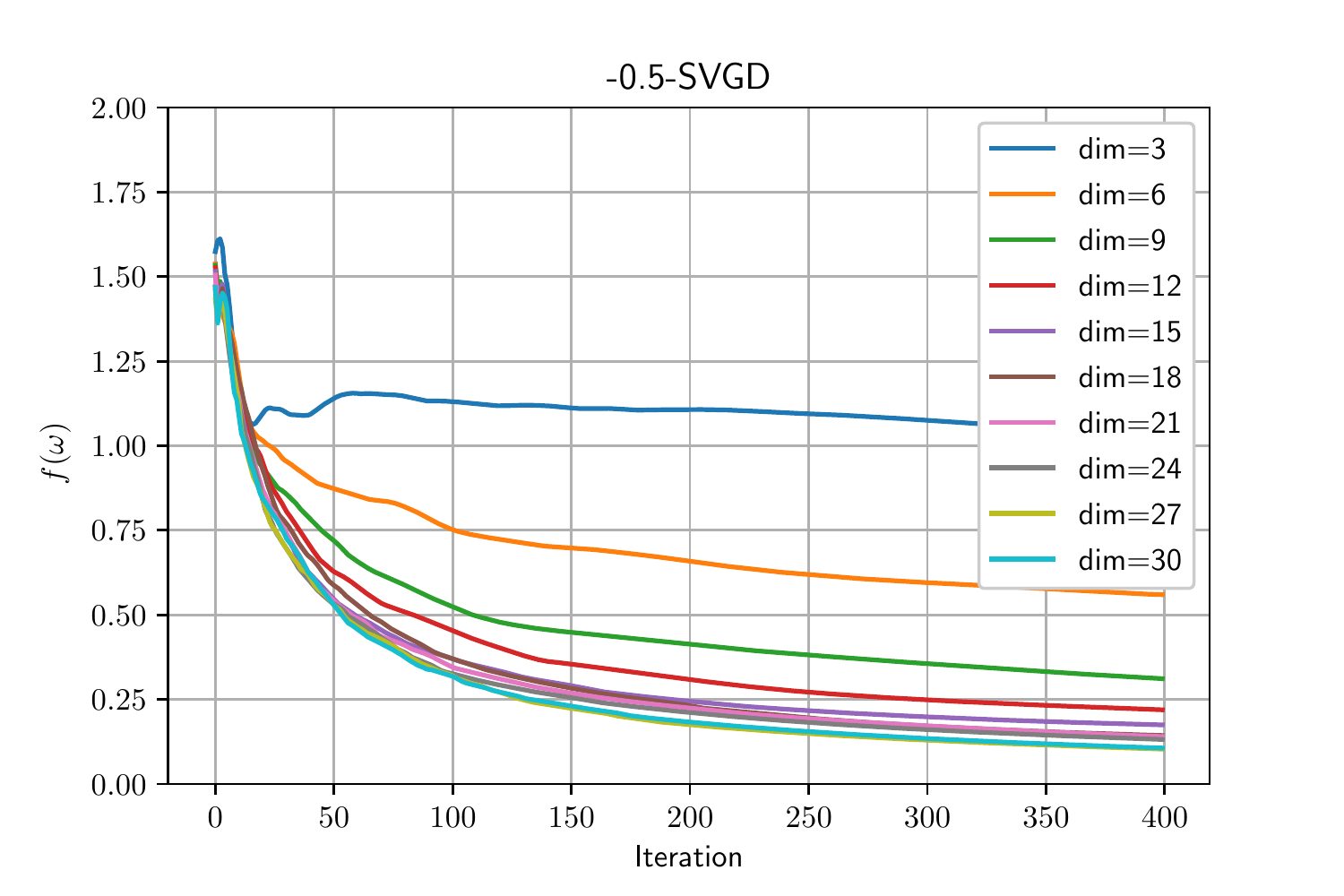}
	\caption{The experiment settings are the same as in \Cref{fig:5}. We compare how the Stein importance weight changes in the update of \algname{SVGD} and \algname{$-0.5$-SVGD}~(though we don't have to compute the Stein importance weight in the implementation of \algname{SVGD}). The error is defined by $f(\omega^k):=\sum_{i=1}^{N}|w_i^k-\frac{1}{N}|$, where $\omega_i^k$ denote the Stein importance weight of point $x_i^k$ and $N=300$. The results suggest that in high dimensional cases,  the Stein importance weight can help to accelerate the decreasing of Stein Fisher information in the beginning, then it will approach to the identical weight $\frac{1}{N}$ quickly.}
	\label{fig:6}
\end{figure}

\subsection{Bayesian Logistic Regression}
In \Cref{fig:7}, we compare the performance of \algname{SVGD} and \algname{$\beta$-SVGD} with $\beta=-0.5$ in Bayesian Logistic regression problem. This Bayesian Logistic regression experiment is done in \cite{liu2016stein} to compare \algname{SVGD} with several Markov Chain Monte Carlo methods, more details about this experiment can refer to \cite{liu2016stein}. As in the Gaussian Mixtures experiment, we choose the reproducing kernal $k(x,y)=e^{-\frac{\normsq{x-y}}{d}}$.

\begin{figure}
	\centering
	\includegraphics[scale=0.4]{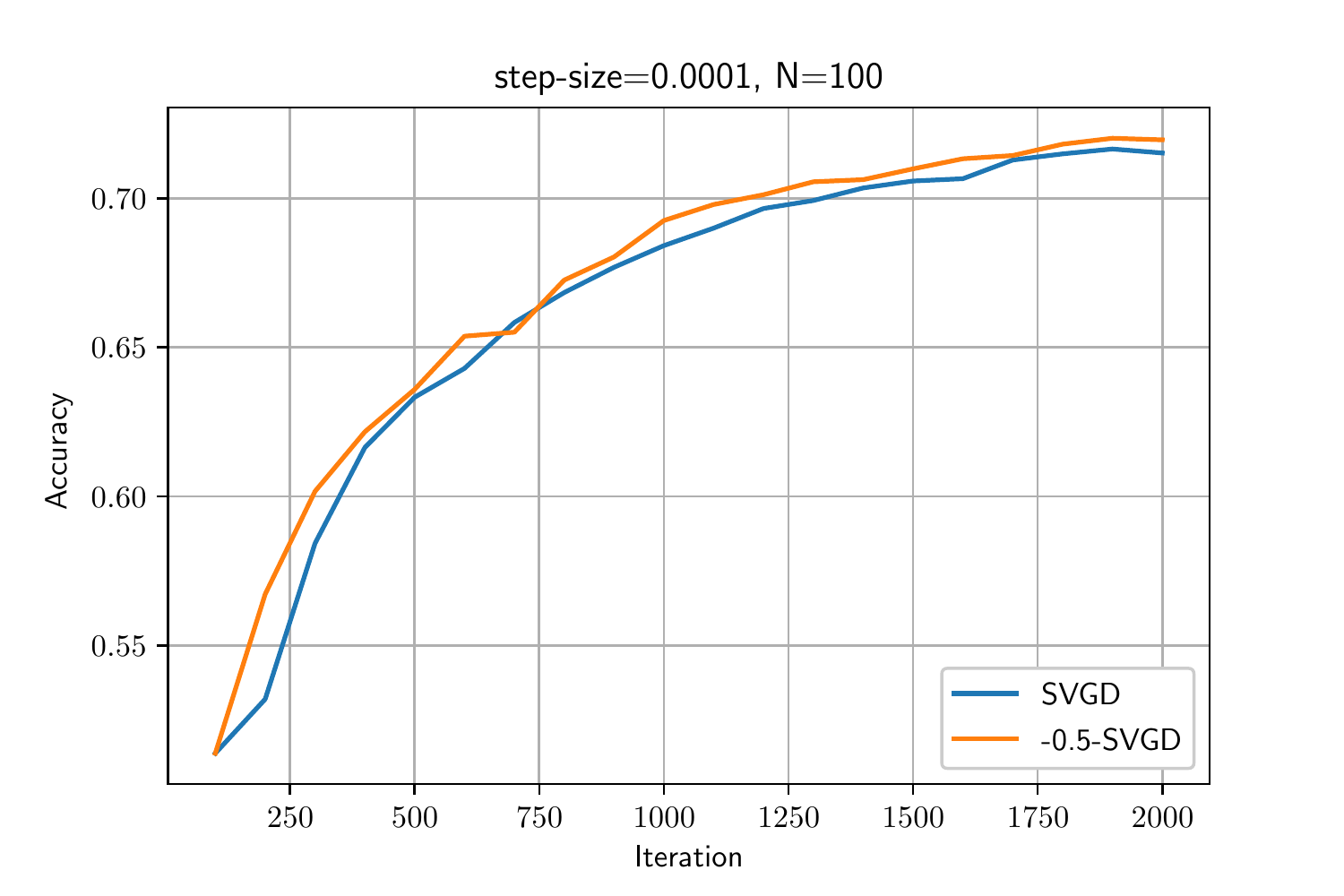}
	\includegraphics[scale=0.4]{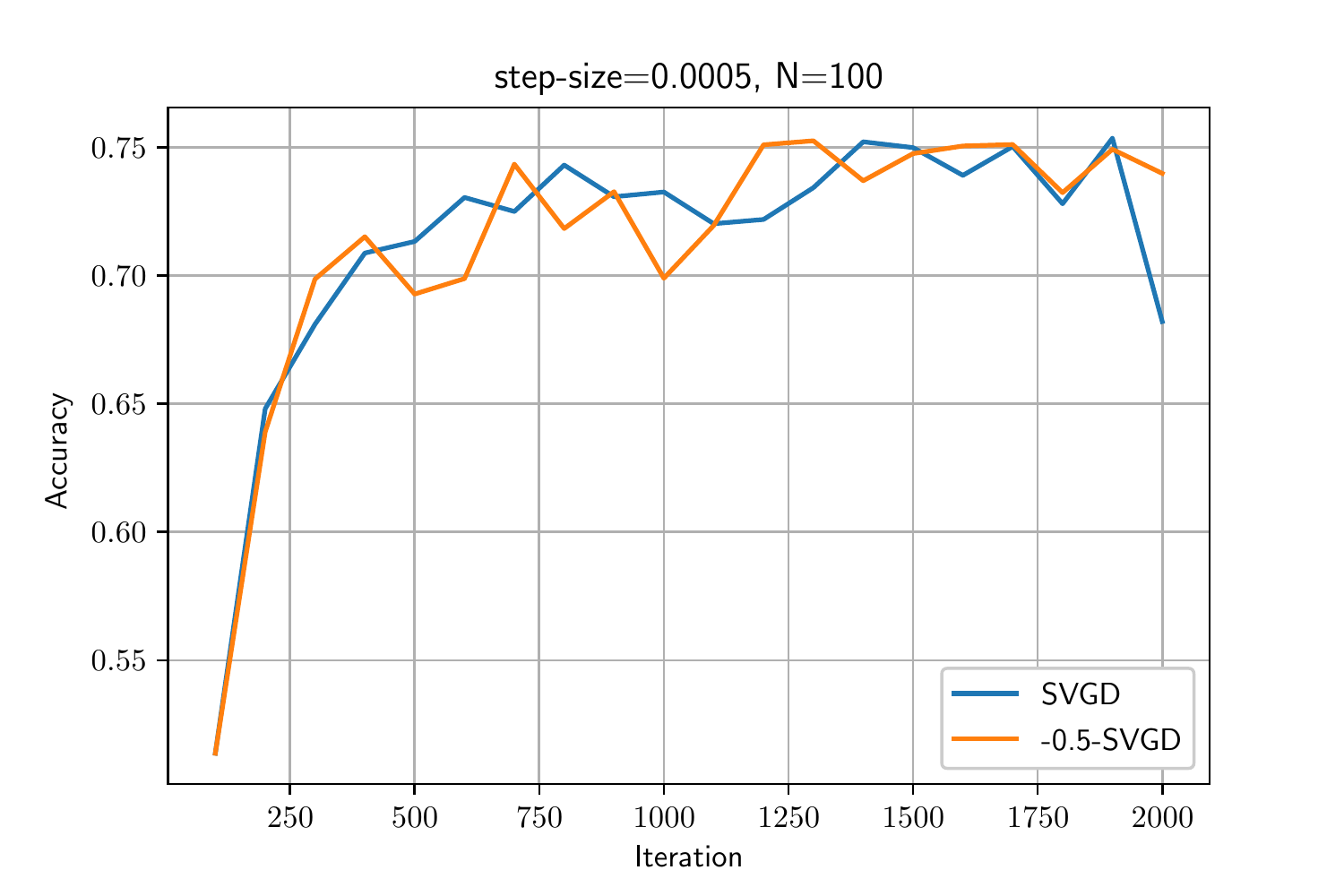}
	\includegraphics[scale=0.4]{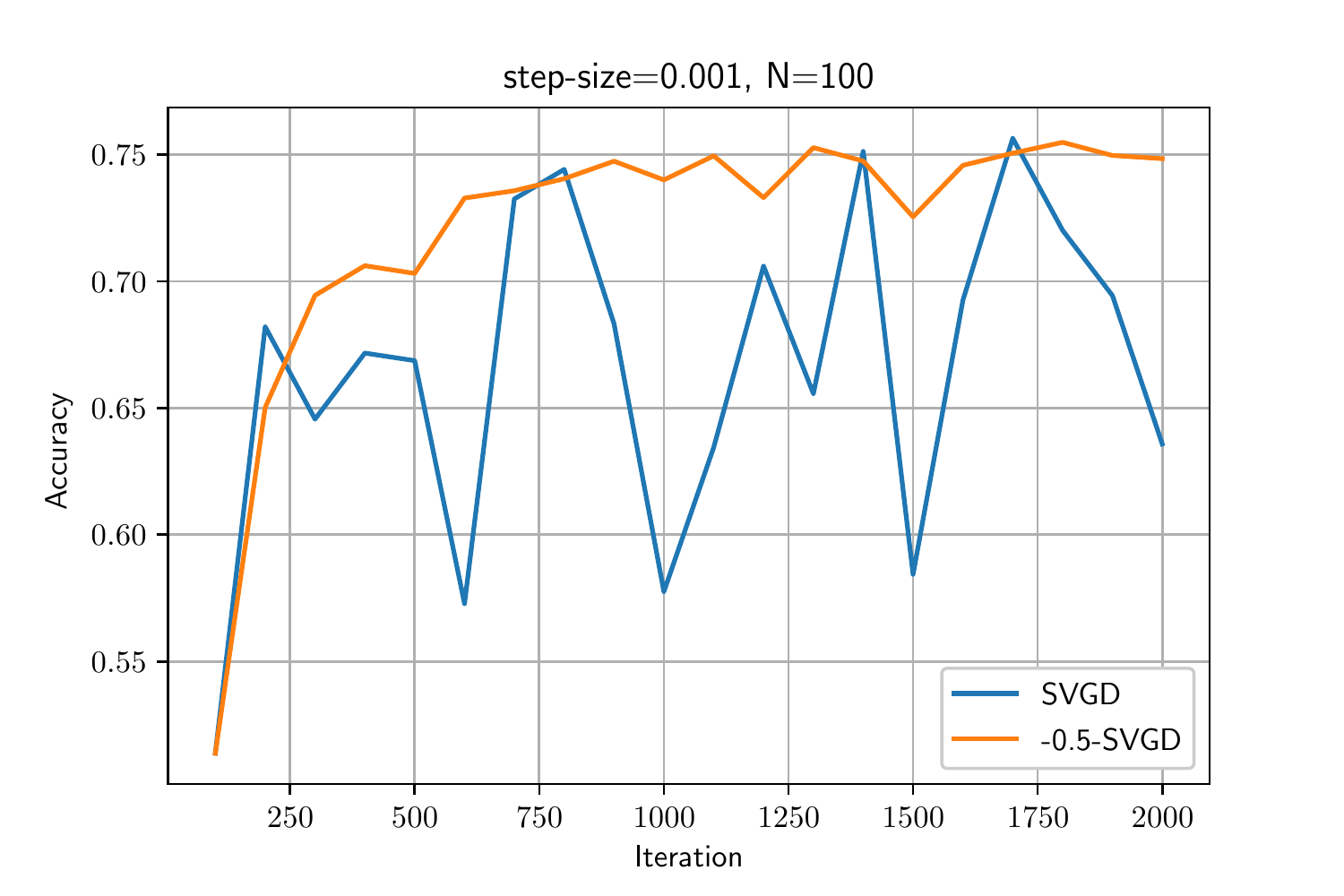}
	\includegraphics[scale=0.4]{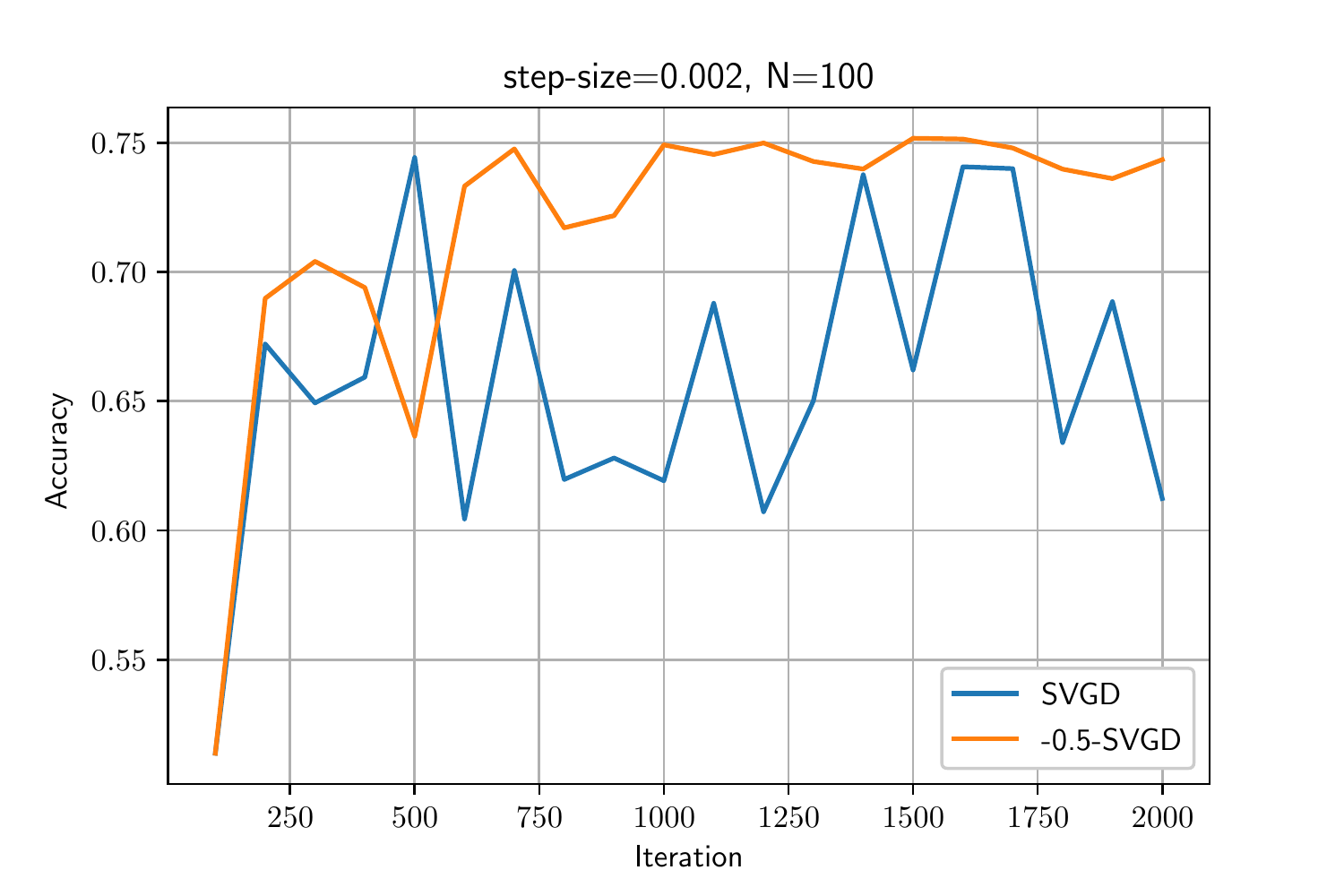}
	\includegraphics[scale=0.4]{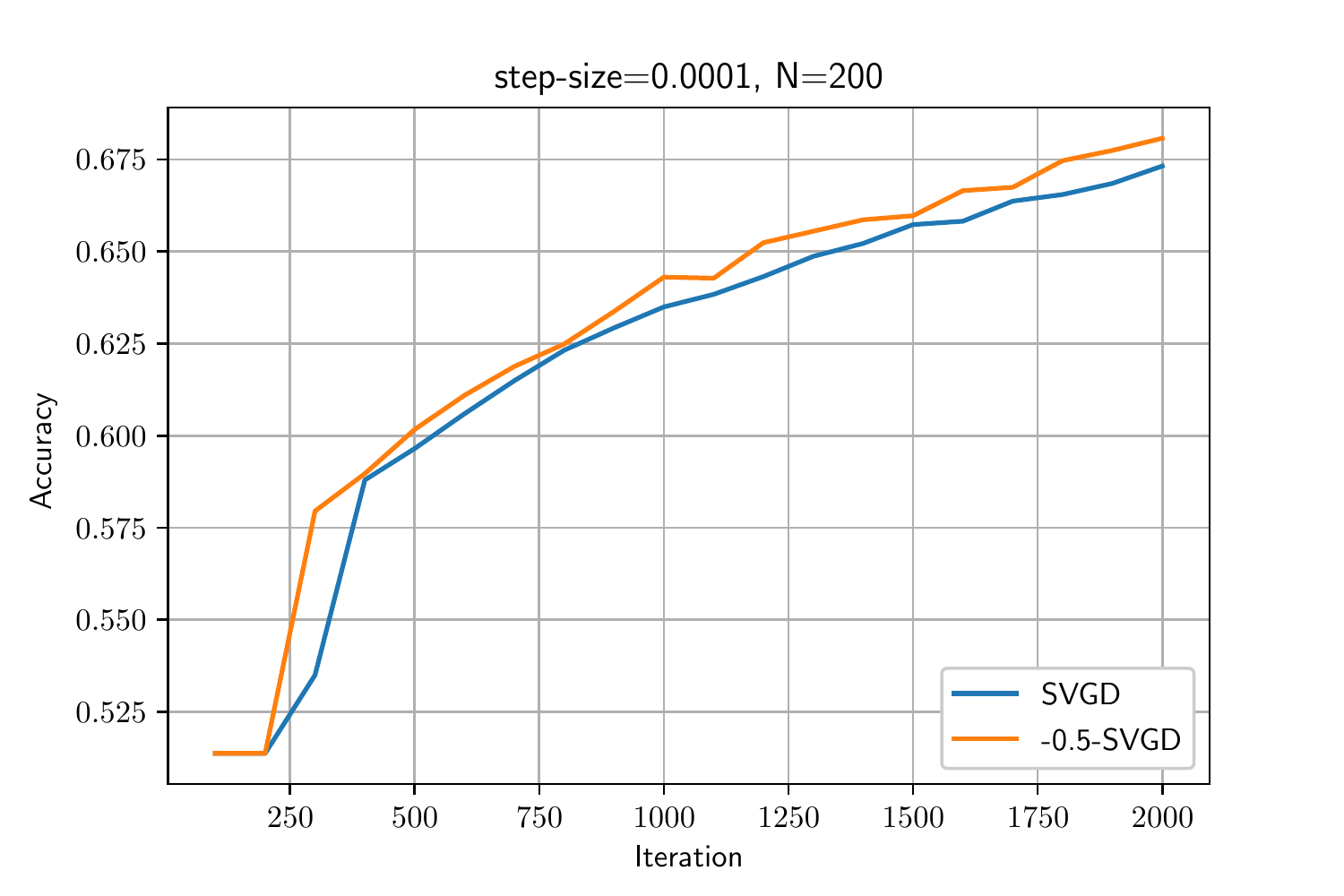}
	\includegraphics[scale=0.4]{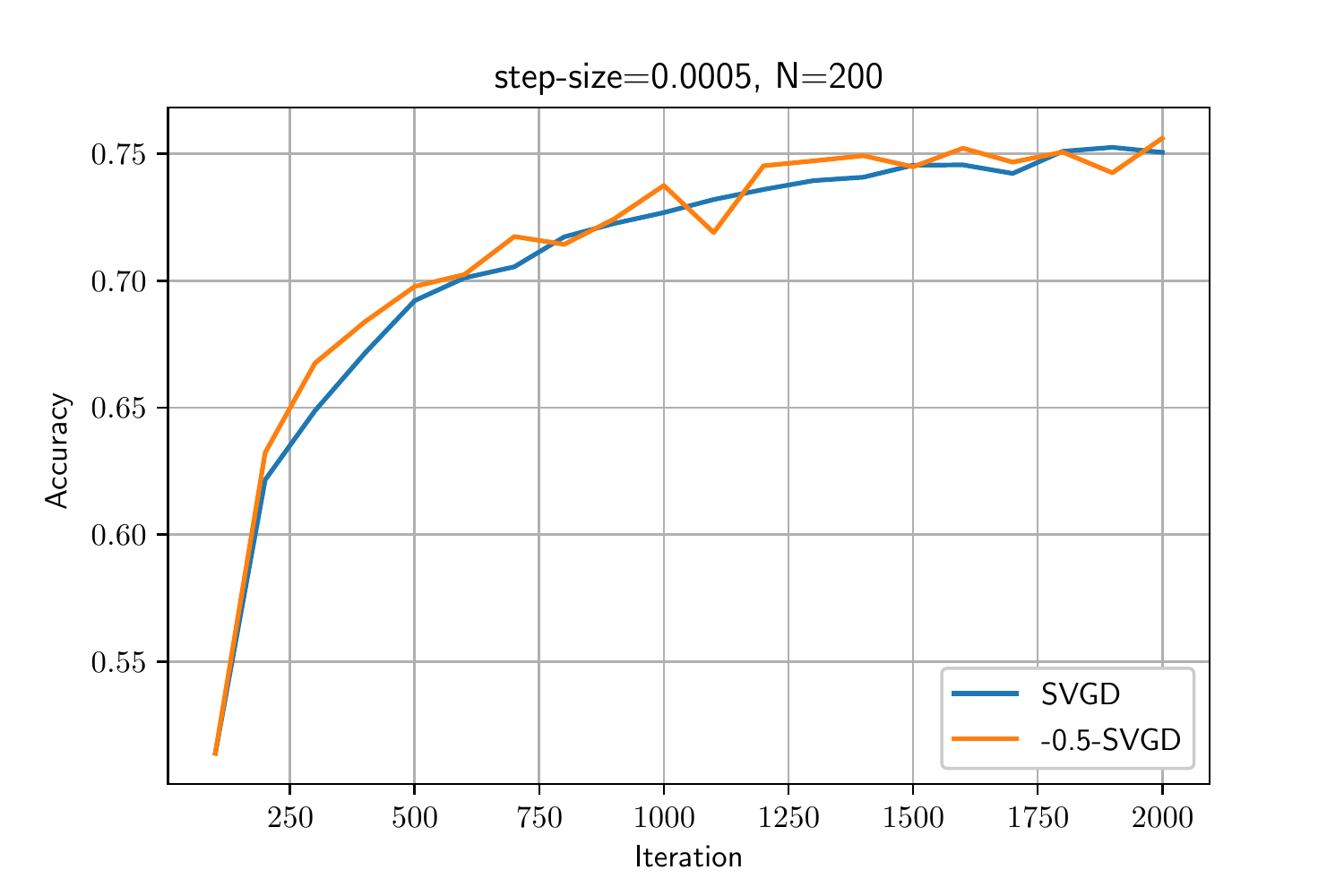}
	\includegraphics[scale=0.4]{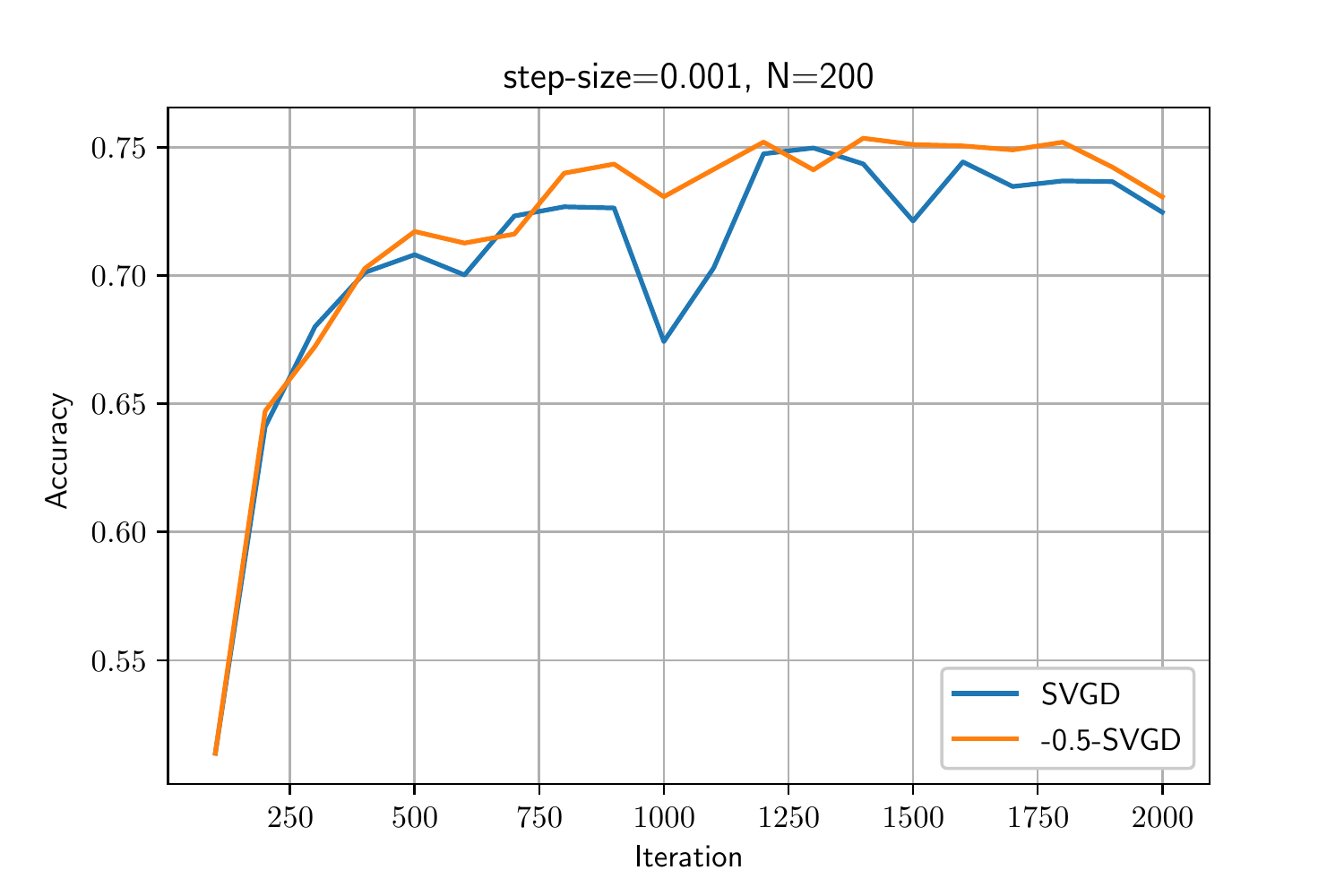}
	\includegraphics[scale=0.4]{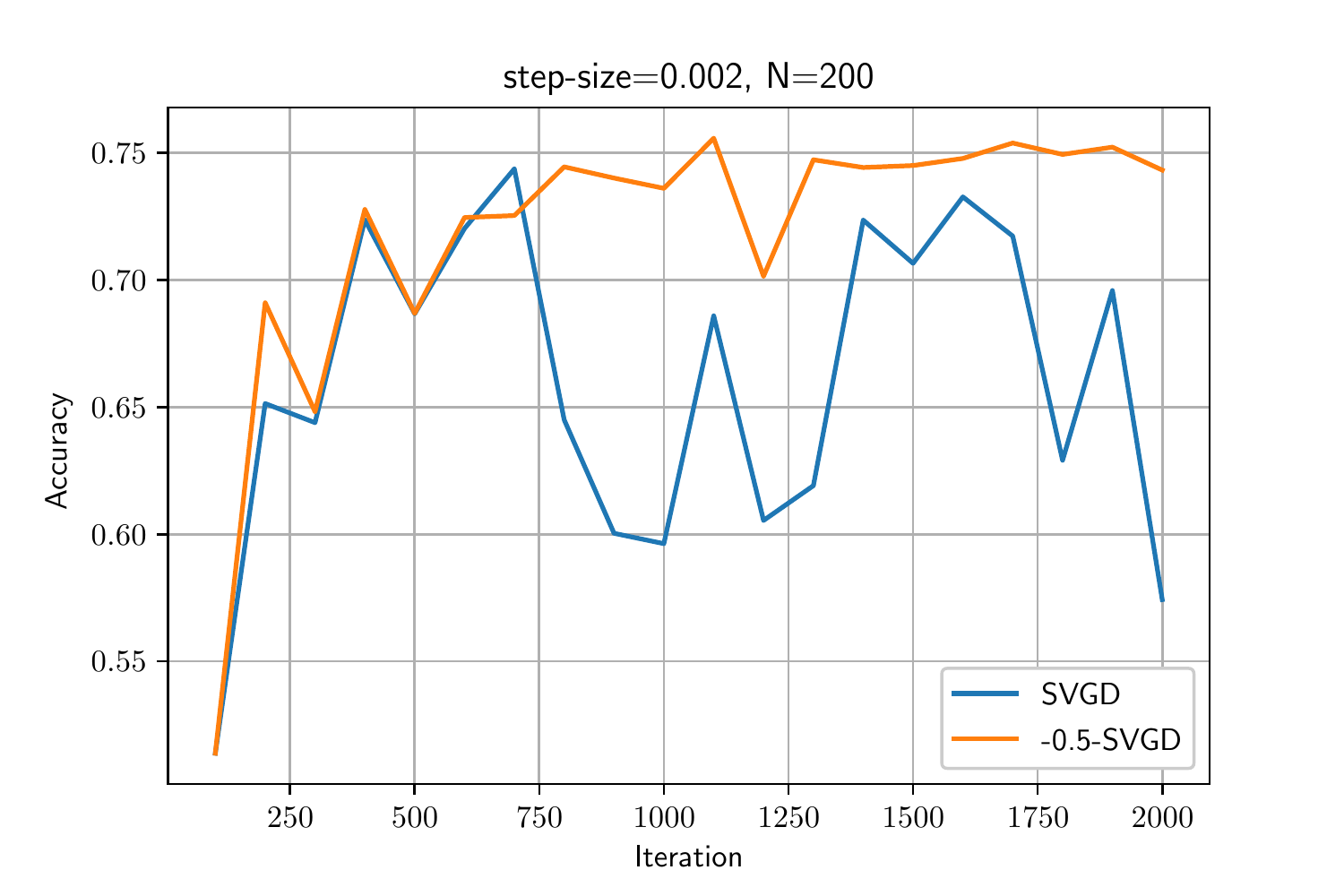}
	\caption{ In this experiment, we test the binary Covertype dataset with 581,012 data points and 54 features~($d=54$). We run $2000$ iterations of \algname{SVGD} and \algname{$\beta$-SVGD} with different step-size and number of particles. In each iteration of \algname{$-0.5$-SVGD}, we run $200$ steps of mirror descent with step-size $r=2$~(since the values of the entries of $\boldsymbol{K}_{\pi}$ in this experiment can be very big, we need to rescale the matrix by dividing a factor of $10^9$ to resolve the overflow problem, so the step-size for mirror descent is chosen relatively big) to find the Stein importance weights, we set the small gap $\tau=0.05$. The time required to run $2000$ iterations of \algname{$-0.5$-SVGD} and test the accuracy every $100$ iterations is roughly double that required for \algname{SVGD}. In this experiment, we found the Stein importance weight is close to the identical weight after only a few \algname{$\beta$-SVGD} iterations with relatively big step-size~(specifically, the percentage of weight $\omega_i$ such that $N\omega_i<0.1$ falls to $0$ after the first few iterations of \algname{$-0.5$-SVGD}), so the acceleration effect is not very clear in this case. However, as shown in the first and fifth images, where the step-size is relatively small,  we can see a faster improvement in accuracy in the first few hundreds iterations of \algname{$\beta$-SVGD}. We can also see from the results that when $\gamma$ is relatively large, due to the Stein importance weight, \algname{$-0.5$-SVGD} is much more stable than \algname{SVGD}.}
	\label{fig:7}
\end{figure}

\end{document}